\documentclass{article}

% if you need to pass options to natbib, use, e.g.:
% \PassOptionsToPackage{numbers, compress}{natbib}
% before loading nips_2017
%
% to avoid loading the natbib package, add option nonatbib:
% \usepackage[nonatbib]{nips_2017}

%\usepackage{nips_2017}
\usepackage{macrosArticle}

% to compile a camera-ready version, add the [final] option, e.g.:
%\usepackage[final]{nips_2017}

\usepackage[utf8]{inputenc} % allow utf-8 input
\usepackage[T1]{fontenc}    % use 8-bit T1 fonts
\usepackage{hyperref}       % hyperlinks
\usepackage{url}            % simple URL typesetting
\usepackage{booktabs}       % professional-quality tables
\usepackage{amsfonts}       % blackboard math symbols
\usepackage{nicefrac}       % compact symbols for 1/2, etc.
\usepackage{microtype}      % microtypography
\usepackage{todonotes}
\usepackage{rotating}

% specific macros for this paper
\def\UCB{\mathrm{U}}
\def\LCB{\mathrm{L}}
\def\Par{\cP}
\def\Anc{\mathrm{Anc}}
\def\U{\mathrm{U}}
\def\L{\mathrm{L}}
\def \Alt{\mathrm{Alt}}
\def \w{\bm w}
\def \vmu{\bm\mu}

% Making tree search Banditable
% Searching games trees with bandits
% Passing Confidences up the Tree
% Putting Confidence in Trees
% When Tree Search meets Best Arm Identification
% Best Arm Identification methods for MCTS
% Best Arm Identification Meets Principal Variation
% A Best Arm Identification Approach to Monte-Carlo Tree Search
% Running Best Arm at the Root
% A best-arm root with confidence sub-trees
% New bandit tools and sample complexity results for Monte-Carlo Tree Search

\title{
  Monte-Carlo Tree Search by Best Arm Identification  
}

% The \author macro works with any number of authors. There are two
% commands used to separate the names and addresses of multiple
% authors: \And and \AND.
%
% Using \And between authors leaves it to LaTeX to determine where to
% break the lines. Using \AND forces a line break at that point. So,
% if LaTeX puts 3 of 4 authors names on the first line, and the last
% on the second line, try using \AND instead of \And before the third
% author name.

\author{Emilie Kaufmann$^1$ and Wouter M.Koolen$^2$}
\date{
$^1$ CNRS \& Univ. Lille, UMR 9189 (CRIStAL), Inria SequeL, Lille, France \\
$^2$ Centrum Wiskunde \& Informatica, Science Park 123, 1098 XG Amsterdam, The Netherlands 
}

\begin{document}

\maketitle

\begin{abstract}
  Recent advances in bandit tools and techniques for sequential learning are steadily enabling new applications and are promising the resolution of a range of challenging related problems. We study the game tree search problem, where the goal is to quickly identify the optimal move in a given game tree by sequentially sampling its stochastic payoffs. We develop new algorithms for trees of arbitrary depth, that operate by summarizing all deeper levels of the tree into confidence intervals at depth one, and applying a best arm identification procedure at the root. We prove new sample complexity guarantees with a refined dependence on the problem instance. We show experimentally that our algorithms outperform existing elimination-based algorithms and match  previous special-purpose methods for depth-two trees.
\end{abstract}

\section{Introduction}

We consider two-player zero-sum turn-based interactions, in which the sequence of possible successive moves is represented by a maximin game tree $\cT$. This tree models the possible actions sequences by a collection of MAX nodes, that correspond to states in the game in which player A should take action, MIN nodes, for states in the game in which player B should take action, and leaves which specify the payoff for player A. The goal is to determine the best action at the root for player A. For deterministic payoffs this search problem is primarily algorithmic, with several powerful pruning strategies available \cite{PLAAT1996255}. We look at problems with stochastic payoffs, which in addition present a major statistical challenge.

Sequential identification questions in game trees with stochastic payoffs arise naturally as robust versions of bandit problems. They are also a core component of Monte Carlo tree search (MCTS) approaches for solving intractably large deterministic tree search problems, where an entire sub-tree is represented by a stochastic leaf in which randomized play-out and/or evaluations are performed \cite{SurveyMCTS12}. A play-out consists in finishing the game with some simple, typically random, policy and observing the outcome for player A.
%
% The goal is to determine the best action at the root for player A, by repeatedly choosing a sequence of actions for the two players from the root to a leaf. Starting from this leaf, which correspond to some position in the game, a \emph{play-out} is performed, that consists of finishing the game by either playing at random or using some play-out strategy. An alternative to this play-out is to produce a (possibly random) evaluation of the current position.

For example, MCTS is used within the AlphaGo system \cite{deep.go}, and the evaluation of a leaf position combines supervised learning and (smart) play-outs. 
While MCTS algorithms for Go have now reached expert human level, such algorithms remain very costly, in that many (expensive) leaf evaluations or play-outs are necessary to output the next action to be taken by the player. In this paper, we focus on the \emph{sample complexity} of Monte-Carlo Tree Search methods, about which very little is known. For this purpose, we work under a simplified model for MCTS already studied by \cite{Teraoka14MCTS}, and that generalizes the depth-two framework of \cite{GKK16}. 

\subsection{A simple model for Monte-Carlo Tree Search}\label{subsec:model}

We start by fixing a game tree $\cT$, in which the root is a MAX node. %This tree might be obtained as a sub-tree of a larger game tree, and we do not assume it to be balanced.
Letting $\cL$ be the set of leaves of this tree, for each $\ell\in \cL$ we introduce a stochastic oracle $\cO_\ell$ that represents the leaf evaluation or play-out performed when this leaf is reached by an MCTS algorithm. In this model, we do not try to optimize the evaluation or play-out strategy, but we rather assume that the oracle $\cO_\ell$ produces i.i.d.\ samples from an unknown distribution whose mean $\mu_\ell$ is the value of the position $\ell$. To ease the presentation, we focus on binary oracles (indicating the win or loss of a play-out), in which the oracle $\cO_\ell$ is a Bernoulli distribution with unknown mean $\mu_\ell$ (the probability of player A winning the game in the corresponding state). Our algorithms can be used without modification in case the oracle is a distribution bounded in $[0,1]$.

For each node $s$ in the tree, we denote by $\cC(s)$ the set of its children and by $\cP(s)$ its parent. The root is denoted by $s_0$. The \emph{value} (for player A) of any node $s$ is recursively defined by $V_\ell = \mu_\ell$ if $\ell \in \cL$ and 
\[V_s = \left\{
\begin{array}{ll}
%\mu_s & \text{if s } \in \cL  \\
\max_{c \in \cC(s)} V_c & \text{if s is a MAX node}, \\
\min_{c \in \cC(s)} V_c & \text{if s is a MIN node}.
\end{array}\right. \]
% Assuming that player $B$ is strategic,
The best move is the action at the root with highest value, \[s^* = \argmax{s \in \cC(s_0)} \ V_s.\]
% so that $V(s_0)=V(s^*)$.
To identify $s^*$ (or an $\epsilon$-close move), an MCTS algorithm sequentially selects paths in the game tree and calls the corresponding leaf oracle. At round $t$, a leaf $L_t \in \cL$ is chosen by this adaptive \emph{sampling rule}, after which a sample $X_t \sim \cO_{L_t}$ is collected. We consider here the same PAC learning framework as \cite{Teraoka14MCTS,GKK16}, in which the strategy also requires a \emph{stopping rule}, after which leaves are no longer evaluated, and a \emph{recommendation rule} that outputs upon stopping a guess $\hat{s}_\tau \in \cC(s_0)$ for the best move of player A. 

Given a risk level $\delta$ and some accuracy parameter $\epsilon \geq 0$ our goal is have a recommendation $\hat{s}_\tau \in \cC(s_0)$ whose value is within $\epsilon$ of the value of the best move, with probability larger than $1-\delta$, that is  
\[\bP\left(V(s_0) - V(\hat{s}_\tau)  \leq \epsilon \right) \geq 1 - \delta.\]
An algorithm satisfying this property is called $(\epsilon,\delta)$-correct. The main challenge is \emph{to design $(\epsilon,\delta)$-correct algorithms that use as few leaf evaluations $\tau$ as possible.} %In particular, we seek strategies minimizing the \emph{sample complexity}, that is the expected number of play-outs $\bE[\tau]$.  

\paragraph{Related work}

The model we introduce for Monte-Carlo Tree Search is very reminiscent of a stochastic bandit model. In those, an agent repeatedly selects one out of several probability distributions, called arms, and draws a sample from the chosen distribution. Bandits models have been studied since the 1930s \cite{Thompson33}, mostly with a focus on \emph{regret minimization}, where the agent aims to maximize the sum of the samples collected, which are viewed as rewards \cite{LaiRobbins85bandits}. In the context of MCTS, a sample corresponds to a win or a loss in one play-out, and maximizing the number of successful play-outs (that correspond to simulated games) may be at odds with identifying quickly the next best action to take at the root. In that, our best action identification problem is closer to a so-called \emph{Best Arm Identification} (BAI) problem. 

%Best arm identification have been studied quite a lot recently (see \cite{KG17ESAIM} for a survey) 

The goal in the standard BAI problem is to find quickly and accurately the arm with highest mean. The BAI problem in the fixed-confidence setting \cite{EvenDaral06} is the special case of our simple model for a tree of depth one. For deeper trees, rather than finding the best arm (i.e.\ leaf), we are interested in finding the best action at the root. As the best root action is a function of the means of all leaves, this is a more structured problem. 

Bandit algorithms, and more recently BAI algorithms have been successfully adapted to tree search. Building on the UCB algorithm \cite{Aueral02}, a regret minimizing algorithm, variants of the UCT algorithm \cite{KocsisBBMCP06} have been used for MCTS in growing trees, leading to successful AIs for games. However, there are only very weak theoretical guarantees for UCT. Moreover, observing that maximizing the number of successful play-outs is not the target, recent work rather tried to leverage tools from the BAI literature. In \cite{Pepels14HybridMCTS,Cazenave15SHOT} Sequential Halving \cite{Karnin:al13} is used for exploring game trees. The latter algorithm is a state-of-the-art algorithm for the fixed-budget BAI problem \cite{Bubeck10BestArm}, in which the goal is to identify the best arm with the smallest probability of error based on a given budget of draws. The proposed SHOT (Sequential Halving applied tO Trees) algorithm \cite{Cazenave15SHOT}  is compared empirically to the UCT approach of \cite{KocsisBBMCP06}, showing improvements in some cases. A hybrid approach mixing SHOT and UCT is also studied \cite{Pepels14HybridMCTS}, still without sample complexity guarantees.

In the fixed-confidence setting, \cite{Teraoka14MCTS} develop the first sample complexity guarantees in the model we consider. The proposed algorithm, $\mathrm{FindTopWinner}$ is based on \emph{uniform sampling and eliminations}, an approach that may be related to the Successive Eliminations algorithm \cite{EvenDaral06} for fixed-confidence BAI in bandit models. $\mathrm{FindTopWinner}$ proceeds in rounds, in which the leaves that have not been eliminated are sampled repeatedly until the precision of their estimates  doubled. Then the tree is pruned of every node whose estimated value differs significantly from the estimated value of its parent, which leads to the possible elimination of several leaves. For depth-two trees, \cite{GKK16} propose an elimination procedure that is not round-based. In this simpler setting, an algorithm that exploits \emph{confidence intervals} is also developed, inspired by the LUCB algorithm for fixed-confidence BAI \cite{Shivaramal12}. Some variants of the proposed M-LUCB algorithm appear to perform better in simulations than elimination based algorithms. We now investigate this trend further in deeper trees, both in theory and in practice.

\paragraph{Our Contribution.}
In this paper, we propose a generic architecture, called BAI-MCTS, that builds on a Best Arm Identification (BAI) algorithm and on confidence intervals on the node values in order to solve the best action identification problem in a tree of arbitrary depth. In particular, we study two specific instances, UGapE-MCTS and LUCB-MCTS, that rely on \emph{confidence-based} BAI algorithms \cite{Gabillon:al12,Shivaramal12}. We prove that these are ($\epsilon,\delta$)-correct and give a high-probability upper bound on their sample complexity. Both our theoretical and empirical results improve over the elimination-based state-of-the-art algorithm, $\mathrm{FindTopWinner}$ \cite{Teraoka14MCTS}.

\section{BAI-MCTS algorithms}

We present a generic class of algorithms, called BAI-MCTS, that combines a BAI algorithm with an exploration of the tree based on confidence intervals on the node values. Before introducing the algorithm and two particular instances, we first explain how to build such confidence intervals, and also introduce the central notion of \emph{representative child} and \emph{representative leaf}. 

\subsection{Confidence intervals and representative nodes\label{sec:CI}}

For each leaf $\ell\in\cL$, using the past observations from this leaf we may build a confidence interval 
\[\cI_\ell(t) = [\LCB_\ell(t), \UCB_\ell(t)],\]
where $\UCB_\ell(t)$ (resp.\ $\LCB_\ell(t)$) is an Upper Confidence Bound (resp.\ a Lower Confidence Bound) on the value $V(\ell) =\mu_{\ell}$. The specific confidence interval we shall use will be discussed later. 

These confidence intervals are then propagated upwards in the tree using the following construction. For each internal node $s$, we recursively define $\cI_s(t) = [\LCB_s(t),\UCB_s(t)]$ with 

\begin{minipage}{0.5\textwidth}
\[\LCB_s(t) = \left\{\begin{array}{cl}
                    \max_{c\in\cC(s)} \LCB_c(t)& \text{for a MAX node }  s, \\
                    \min_{c\in\cC(s)} \LCB_c(t)& \text{for a MIN node }  s, 
                   \end{array}
\right. \]
\end{minipage}
\begin{minipage}{0.49\textwidth}
\[\UCB_s(t) = \left\{\begin{array}{cl}
                    \max_{c\in\cC(s)} \UCB_c(t)& \text{for a MAX node } s, \\
                    \min_{c\in\cC(s)} \UCB_c(t)& \text{for a MIN node }  s.
                   \end{array}
\right. \]
\end{minipage}

\medskip

Note that these intervals are the tightest possible on the parent under the sole assumption that the child confidence intervals are all valid.
A similar construction was used in the OMS algorithm of \cite{OMS14} in a different context. 
It is easy to convince oneself (or prove by induction, see Appendix~\ref{proof:CorrectCI}) that the accuracy of the confidence intervals is preserved under this construction, as stated below. 

\begin{proposition}\label{prop:CorrectCI} Let $t\in \N$. One has $\bigcap_{\ell \in \cL}\left(\mu_\ell \in \cI_\ell(t)\right) \ \ \Rightarrow \ \ \bigcap_{s \in \cT}\left(V_s \in \cI_s(t)\right)$. 
\end{proposition}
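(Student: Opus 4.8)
The plan is to fix $t \in \N$, work on the event $\mathcal{E} := \bigcap_{\ell\in\cL}\left(\mu_\ell \in \cI_\ell(t)\right)$, and show by induction on the height $h(s)$ of a node $s$ (leaves having height $0$) that $V_s \in \cI_s(t)$. The base case $h(s)=0$ is precisely the assumption defining $\mathcal{E}$, since $V_\ell = \mu_\ell$ and $\cI_\ell(t)$ is the leaf confidence interval. As $\cT$ is finite, this induction terminates and covers the root and every internal node, so that on $\mathcal{E}$ all the inclusions $V_s\in\cI_s(t)$ hold simultaneously, which is exactly the implication claimed.

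For the inductive step, let $s$ be an internal node of height $h\ge 1$ and assume $\LCB_c(t) \le V_c \le \UCB_c(t)$ for every child $c\in\cC(s)$ (each such $c$ has height at most $h-1$). Suppose first $s$ is a MAX node, so $V_s = \max_{c\in\cC(s)} V_c$; let $c^\star$ attain this maximum. For the lower bound, $\LCB_s(t) = \max_{c} \LCB_c(t) \le \max_c V_c = V_s$, using $\LCB_c(t)\le V_c$ for each $c$ and monotonicity of $\max$. For the upper bound, $\UCB_s(t) = \max_c \UCB_c(t) \ge \UCB_{c^\star}(t) \ge V_{c^\star} = V_s$. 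Hence $V_s \in \cI_s(t)$. The MIN case is symmetric: writing $V_s = \min_c V_c$ attained at $c^\star$, one gets $\UCB_s(t) = \min_c \UCB_c(t) \ge \min_c V_c = V_s$ and $\LCB_s(t) = \min_c \LCB_c(t) \le \LCB_{c^\star}(t) \le V_{c^\star} = V_s$.

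There is no genuine obstacle here: the recursion defining $\cI_s(t)$ mirrors exactly the recursion defining $V_s$, which is what makes the componentwise bounds propagate, and the argument is just monotonicity of $\max$ and $\min$. The only point requiring a little care is that for a MIN node the roles of $\LCB$ and $\UCB$ relative to the extremizing child $c^\star$ are swapped compared to the MAX node, so the two cases must be written out separately (or one invokes that $\max$ and $\min$ are monotone lattice operations and treats them uniformly).
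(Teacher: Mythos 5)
Your proof is correct and takes essentially the same route as the paper's: an induction from the leaves upward (the paper indexes nodes by their distance to a leaf, you by height), with the base case given by the hypothesis on the leaves and the inductive step resting on the monotonicity of $\max$ (resp.\ $\min$) exactly as in the paper's treatment of MAX and MIN nodes. Nothing further is needed.
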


We now define the \emph{representative child} $c_s(t)$ of an internal node $s$ as  
\begin{eqnarray*}
c_s(t) &=& \left\{\begin{array}{cl} \text{argmax}_{c \in \cC(s)} \ \UCB_c(t)  & \text{if } s \text{ is  a MAX node},\\
 \text{argmin}_{c \in \cC(s)} \ \LCB_c(t) & \text{if } s \text{ is  a MIN node},\end{array}\right.
\end{eqnarray*}
and the \emph{representative leaf} $\ell_s(t)$ of a node $s\in\cT$, which is the leaf obtained when going down the tree by always selecting the representative child: 
\[\ell_s(t) = 
 s  \ \ \text{if } s \in \cL, \ \ \ \ \  \ell_s(t) = \ell_{c_s(t)}(t) \ \ \text{otherwise}.\]
%\left\{\begin{array}{cl} s & \text{if } s \in \cL \\ \ell_{c_s(t)}(t) & \text{otherwise}. \end{array}\right.\]
The confidence intervals in the tree represent the statistically plausible values in each node, hence the representative child can be interpreted as an ``optimistic move'' in a MAX node and a ``pessimistic move'' in a MIN node (assuming we play against the best possible adversary). This is reminiscent of the behavior of the UCT algorithm \cite{KocsisBBMCP06}. The construction of the confidence intervals and associated representative children are illustrated in Figure~\ref{fig:IC}.

\begin{figure}[h]
  \centering
  \parbox{.4\textwidth}{%
    \centering
    \newcommand{\drawarma}[5]{
      \node (#1m) at (#2, #3) [inner sep=2pt,circle,draw,fill]{};
      \node (#1U) [above= #4 of #1m, inner xsep=2pt, inner ysep=0pt] {}; \draw[thick,-|] (#1m) -- (#1U);
      \node (#1L) [below= #5 of #1m, inner xsep=2pt, inner ysep=0pt] {}; \draw[thick,-|] (#1m) -- (#1L);
    }
    
    \newcommand{\drawarm}[4]{
      \drawarma{#1}{#2}{#3}{#4}{#4}
    }

    \scalebox{.45}{
      \begin{tikzpicture}[very thick]
        
        \drawarm{c1}{0}{0.95+.4}{1}
        \drawarm{c2}{1}{2.1+.4}{2}
        \begin{scope}[red]
          \drawarm{c3}{2}{3.5}{2}
        \end{scope}
        \drawarm{c4}{3}{2.5}{1}
        \drawarm{c5}{4}{3.1}{.5}
        \drawarm{c6}{5}{2.0}{1.5}
        
        % \draw[dashed] (6,0) -- (6,6);
        
        \drawarma{p}{9}{3.45}{2.0}{0.9}
        
        \draw [help lines, dashed, draw, ->] (c3U) -- (pU);
        \draw [help lines, dashed, draw, ->] (c5L) -- (pL);
        \draw [help lines, dashed, draw, ->] (c3m) -- (pm);
        
        \node at (3,0) [below] {(a) Children};
        \node at (9,0) [below] {(b) Parent};
        
      \end{tikzpicture}
    }
    \caption{\label{fig:IC}Construction of confidence interval and representative child (in red) for a MAX node.}
  }%
  \qquad
  \begin{minipage}{.54\textwidth}%
\begin{algorithm}[H]
  \textbf{Input}: a BAI algorithm \\
  \textbf{Initialization}: $t=0$. \\
  \While{\textbf{not} $\mathrm{BAIStop}\left(\left\{s \in \cC(s_0) \right\} \right)$}{
      {$R_{t+1}=\mathrm{BAIStep}\left(\left\{s \in \cC(s_0) \right\} \right)$ \\
      Sample the representative leaf $L_{t+1} = \ell_{R_{t+1}}(t)$ \\    
      Update the information about the arms. $t = t+1$.
      }
   }
    \textbf{Output}: $\mathrm{BAIReco}\left(\left\{s \in \cC(s_0) \right\} \right)$  \\
  \end{algorithm}
  \caption{The BAI-MCTS architecture \label{algo:BAIMCTS}}
    
  \end{minipage}%
\end{figure}

% \begin{remark} Starting from confidence intervals for the leaves, an alternative construction is possible:  
% \[\cI^{\text{Rep}}_s(t) =  \left[\LCB^{\text{Rep}}_{c_s(t)}(t),\UCB^{\text{Rep}}_{c_s(t)}(t)\right]\]
% where the representative child of a MAX node (resp. MIN node) is now defined as  
% \[c_s(t) = \argmax{c \in \cC(s)} \ \UCB^{\text{Rep}}_{c}(t) \ \ \ \big(\text{resp.} \  \argmin{c \in \cC(s)} \ \LCB^{\text{Rep}}_{c}(t)\big).\] 
% 
% In particular, a confidence interval $\cI^{\text{Rep}}_s(t)$ is exactly the confidence interval of some representative leaf, whereas  $\cI_s(t)$ is a ``hybrid'' confidence interval that may not correspond to any leaf in particular, but is expected to be smaller. Indeed it is not hard to prove by induction that starting from the same leaf intervals, $\cI_s(t) \subseteq \cI_s^{\text{Rep}}(t)$ for all $s\in \cT$ hence Proposition~\ref{prop:CorrectCI} is still satisfied for the confidence intervals $\cI^{\text{Rep}}_s(t)$. 
% \end{remark}

\subsection{The BAI-MCTS architecture}

In this section we present the generic BAI-MCTS algorithm, whose sampling rule combines two ingredients: a best arm identification step which selects an action at the root, followed by a \emph{confidence based exploration step}, that goes down the tree starting from this depth-one node in order to select the \emph{representative leaf} for evaluation.

The structure of a BAI-MCTS algorithm is presented in Figure~\ref{algo:BAIMCTS}. The algorithm depends on a Best Arm Identification (BAI) algorithm, and uses the three components of this algorithm: 
\begin{itemize}
 \item the sampling rule $\mathrm{BAIStep}(\cS)$ selects an arm in the set $\cS$ 
 \item the stopping rule $\mathrm{BAIStop}(\cS)$ returns $\texttt{True}$ if the algorithm decides to stop 
 \item the recommendation rule $\mathrm{BAIReco}(\cS)$ selects an arm as a candidate for the best arm 
\end{itemize}

%All the decisions are made based on the information available for the arms in $\cS$, and possibly the number of rounds $t$ as well. 

In BAI-MCTS, the arms are the depth-one nodes, hence the information needed by the BAI algorithm to make a decision (e.g. \texttt{BAIStep} for choosing an arm, or \texttt{BAIStop} for stopping) is information about depth-one nodes, that has to be updated at the end of each round (last line in the \texttt{while} loop). Different BAI algorithms may require different information, and we now present two instances that rely on confidence intervals (and empirical estimates) for the value of the depth-one nodes.

\subsection{UGapE-MCTS and LUCB-MCTS}
Several Best Arm Identification algorithms may be used within BAI-MCTS, and we now present two variants, that are respectively based on the UGapE \cite{Gabillon:al12} and the LUCB \cite{Shivaramal12} algorithms. These two algorithms are very similar in that they exploit confidence intervals and use the same stopping rule, however the LUCB algorithm additionally uses the empirical means of the arms, which within BAI-MCTS requires defining an estimate $\hat{V}_s(t)$ of the value of the depth-one nodes.

%One could also try to use the Track-and-Stop algorithm of \cite{GK16}. However, this algorithm does not exploit confidence intervals, it rather relies on the empirical means and the \emph{number of draws} of each arm. In our MCTS scenario, it is not clear what the counterpart of the ``number of draws'' of a depth-one node should be, as the estimate $\hat{V}_s(t)$ is no longer based on i.i.d. samples of mean $V_s$. 

The generic structure of the two algorithms is similar. At round $t+1$ two promising depth-one nodes are computed, that we denote by $\underline{b}_t$ and $\underline{c}_t$. Among these two candidates, the node whose confidence interval is the largest (that is, the most uncertain node) is selected: 
\[R_{t+1} = \argmax{i \in \{\underline{b}_t,\underline{c}_t\}} \ \left[U_i(t) - L_i(t)\right].\]
Then, following the BAI-MCTS architecture, the representative leaf of $R_{t+1}$ (computed by going down the tree) is sampled: $L_{t+1} = \ell_{R_{t+1}}(t)$. The algorithm stops whenever the confidence intervals of the two promising arms overlap by less than $\epsilon$:
\[\tau = \inf \left\{t \in \N : U_{\underline{c}_t}(t) - L_{\underline{b}_t}(t) < \epsilon \right\},\]
and it recommends $\hat{s}_\tau = \underline{b}_\tau$. 

In both algorithms that we detail below $\underline{b}_t$ represents a guess for the best depth-one node, while $\underline{c}_t$ is an ``optimistic'' challenger, that has the maximal possible value among the other depth-one nodes. Both nodes need to be explored enough in order to discover the best depth-one action quickly.

\paragraph{UGapE-MCTS.} In UGapE-MCTS, introducing for each depth-one node the index
\[
 B_s(t) = \max_{s' \in \cC(s_0) \backslash\{s\}} U_{s'}(t) - L_s(t),
\]
the promising depth-one nodes are defined as 
\[\underline{b}_t = \argmin{a \in \cC(s_0)} \ B_a(t) \ \ \ \text{and} \ \ \ \underline{c}_t = \argmax{b \in \cC(s_0) \backslash \{\underline{b}_t\}} \ U_b(t).\]

\paragraph{LUCB-MCTS.} In LUCB-MCTS, the promising depth-one nodes are defined as  
\[\underline{b}_t = \argmax {a \in \cC(s_0)} \ \hat{V}_a(t) \ \ \ \text{and} \ \ \ \underline{c}_t = \argmax{b \in \cC(s_0) \backslash \{\underline{b}_t\}} \ U_b(t),\]
where $\hat{V}_s(t) = \hat{\mu}_{\ell_s(t)}(t)$ is the empirical mean of the reprentative leaf of node $s$. Note that several alternative definitions of $\hat{V}_s(t)$ may be proposed (such as the middle of the confidence interval $\cI_s(t)$, or $\max_{a \in \cC(s)} \hat{V}_a(t)$), but our choice is crucial for the analysis of LUCB-MCTS, given in Appendix~\ref{sec:AnalysisLUCB}.

\section{Analysis of UGapE-MCTS}

In this section we first prove that UGapE-MCTS and LUCB-MCTS are both ($\epsilon,\delta$)-correct. Then we give in Theorem~\ref{thm:MCTSSC} a high-probability upper bound on the number of samples used by UGapE-MCTS. A similar upper bound is obtained for LUCB-MCTS in Theorem~\ref{thm:MCTSSCLUCB}, stated in Appendix~\ref{sec:AnalysisLUCB}. 

\subsection{Choosing the Confidence Intervals}

From now on, we assume that the confidence intervals on the leaves are of the form
\begin{eqnarray}\label{eq:cis}
\LCB_\ell(t)&=& \hat{\mu}_\ell(t) - \sqrt{\frac{\beta(N_\ell(t),\delta)}{2N_\ell(t)}} \ \ \ \text{and} \ \ \ \UCB_\ell(t)   = \hat{\mu}_\ell(t) + \sqrt{\frac{\beta(N_\ell(t),\delta)}{2N_\ell(t)}}.
\end{eqnarray}
$\beta(s,\delta)$ is some \emph{exploration function}, that can be tuned to have a $\delta$-PAC algorithm, as expressed in the following lemma, whose proof can be found in Appendix~\ref{proof:PAC}

\begin{lemma}\label{lem:PAC} If $\delta \leq \max(0.1 |\cL|,1)$, for the choice   
\begin{equation}\beta(s,\delta) = \ln (|\cL|/\delta) + 3 \ln\ln(|\cL|/\delta) + (3/2) \ln (\ln s+1)\label{def:ExploRate}\end{equation}
both UGapE-MCTS and LUCB-MCTS satisfy $\bP(V(s^*) - V(\hat{s}_\tau) \leq \epsilon ) \geq 1-\delta$. 
\end{lemma}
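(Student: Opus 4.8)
The plan is to reduce $(\epsilon,\delta)$-correctness to the validity of the leaf confidence intervals, and then to control the failure probability of the latter by a union bound combined with a deviation inequality for Bernoulli (more generally, bounded-in-$[0,1]$) random variables that holds uniformly over the random sample size $N_\ell(t)$.

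First I would introduce the ``good event''
\[
\cE \;=\; \bigcap_{t \in \N}\ \bigcap_{\ell \in \cL}\ \left(\mu_\ell \in \cI_\ell(t)\right),
\]
and argue that on $\cE$ the recommendation is $\epsilon$-optimal. Indeed, by Proposition~\ref{prop:CorrectCI}, on $\cE$ we have $V_s \in \cI_s(t)$ for every node $s$ and every $t$, in particular at the (possibly random) stopping time $\tau$. The stopping rule guarantees $U_{\underline c_\tau}(\tau) - L_{\underline b_\tau}(\tau) < \epsilon$, and the recommendation is $\hat s_\tau = \underline b_\tau$. Using $V(s^*) = \max_{s \in \cC(s_0)} V_s$ and the definition of $\underline c_\tau$ as the optimistic challenger (the arm maximizing $U_b(\tau)$ among $b \neq \underline b_\tau$), I would show $V(s^*) \le \max\bigl(U_{\underline b_\tau}(\tau), U_{\underline c_\tau}(\tau)\bigr)$; combining with $V(\hat s_\tau) = V(\underline b_\tau) \ge L_{\underline b_\tau}(\tau)$ and the overlap condition then yields $V(s^*) - V(\hat s_\tau) < \epsilon$, i.e.\ $\le \epsilon$. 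A small case distinction on whether the maximizing arm equals $\underline b_\tau$ is needed, and for UGapE-MCTS one additionally uses the definition of $\underline b_\tau$ via the index $B_s$; this is the only place the two algorithms differ, and in both cases the argument is short. (This step is essentially the correctness argument of LUCB/UGapE, lifted to node values via Proposition~\ref{prop:CorrectCI}.)

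It then remains to show $\bP(\cE^c) \le \delta$ with the stated $\beta$. By a union bound over $\ell \in \cL$ it suffices to bound, for a fixed leaf $\ell$,
\[
\bP\left(\exists\, t \in \N : |\hat\mu_\ell(t) - \mu_\ell| > \sqrt{\tfrac{\beta(N_\ell(t),\delta)}{2 N_\ell(t)}}\right) \;\le\; \frac{\delta}{|\cL|}.
\]
Here the main technical obstacle is the time-uniformity: $N_\ell(t)$ is random and the deviation must be controlled simultaneously over all rounds. The standard route is a ``peeling'' argument over the value of $N_\ell(t)$ together with Hoeffding's inequality (or a self-normalized concentration bound), which produces a bound of the form $\ln(1/\delta) + c\ln\ln(\cdot)$; the specific form $\beta(s,\delta) = \ln(|\cL|/\delta) + 3\ln\ln(|\cL|/\delta) + \tfrac32 \ln(\ln s + 1)$ is exactly what such an argument delivers, and the hypothesis $\delta \le \max(0.1|\cL|,1)$ is the mild regularity condition ensuring the logarithmic terms are well-defined and the constants work out. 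I expect the proof in the appendix to either carry out this peeling explicitly or to cite an off-the-shelf uniform concentration result (e.g.\ of the Kaufmann--Cappé--Garivier type) and verify that \eqref{def:ExploRate} meets its hypotheses; I would take the latter, lighter-weight route.

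In summary, the two substantive ingredients are: (i) a deterministic implication ``$\cE$ holds $\Rightarrow$ output is $\epsilon$-optimal'', which follows from Proposition~\ref{prop:CorrectCI} and the stopping/recommendation rules; and (ii) a probabilistic bound $\bP(\cE^c) \le \delta$, which follows from a union bound over leaves and a time-uniform concentration inequality calibrated by $\beta$. Step (ii), and specifically the time-uniform concentration, is the only real difficulty; step (i) is routine once the confidence-interval propagation of Proposition~\ref{prop:CorrectCI} is in hand.
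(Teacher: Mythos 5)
Your proposal is correct and follows essentially the same route as the paper: the good event $\cE$, Proposition~\ref{prop:CorrectCI} to lift leaf confidence intervals to node values, the stopping rule plus the definition of $\underline{c}_t$ as the $U$-maximizer among the non-recommended arms to get $\epsilon$-correctness on $\cE$, and a union bound over leaves combined with the off-the-shelf time-uniform concentration result of \cite{JMLR15} calibrated by~\eqref{def:ExploRate}. The only (harmless) difference is that the paper's correctness step needs no case distinction and no appeal to the index $B_s$: it directly uses $\LCB_{\underline{b}_\tau}(\tau) > \UCB_{\underline{c}_\tau}(\tau) - \epsilon \ge \UCB_{s'}(\tau) - \epsilon \ge V(s') - \epsilon$ for every $s' \neq \underline{b}_\tau$, which is identical for both algorithms.
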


An interesting practical feature of these confidence intervals is that they only depend on the local number of draws $N_\ell(t)$, whereas most of the BAI algorithms use exploration functions that depend on the number of rounds $t$. Hence the only confidence intervals that need to be updated at round $t$ are those of the ancestors of the selected leaf, which can be done recursively. 

Moreover, $\beta(s,\delta)$ scales with $\ln(\ln(s))$, and not $\ln(s)$, leveraging some tools recently introduced to obtain tighter confidence intervals \cite{Jamiesonal14LILUCB,JMLR15}.
The union bound over $\cL$ (that may be an artifact of our current analysis) however makes the exploration function of Lemma~\ref{lem:PAC} still a bit over-conservative and in practice, we recommend the use of $\beta(s,\delta) = \ln\left({\ln(e s)}/{\delta}\right)$.

Finally, similar correctness results (with slightly larger exploration functions) may be obtained for confidence intervals based on the Kullback-Leibler divergence (see \cite{KLUCBJournal}), which are known to lead to better performance in standard best arm identification problems \cite{COLT13} and also depth-two tree search problems \cite{GKK16}. However, the sample complexity analysis is much more intricate, hence we stick to the above Hoeffding-based confidence intervals for the next section.

% WMK: Done
%\todo[inline]{Remove this last paragraph, may allude to it in the experimental section}

\subsection{Complexity term and sample complexity guarantees}

We first introduce some notation. Recall that $s^*$ is the optimal action at the root, identified with the depth-one node satisfying $V(s^*)=V(s_0)$, and define the second-best depth-one node as $s^*_2 = \text{argmax}_{s \in \cC(s_0) \backslash \{s^*\}} \ V_s$. Recall $\Par(s)$ denotes the parent of a node $s$ different from the root. Introducing furthermore the set $\Anc(s)$ of all the ancestors of a node $s$, we define the complexity term by
\begin{equation}H^*_\epsilon(\bm\mu) := \sum_{\ell \in \cL} \frac{1}{\Delta_\ell^2\vee \Delta_*^2 \vee \epsilon^2}, \ \ \ \text{where} \ \ \ \begin{array}{ccl} \Delta_* & : = & V(s^*) - V(s^*_2) \\\Delta_\ell &:=&  \max_{s \in \Anc(\ell) \backslash \{s_0 \}} |V_s - V(\Par(s))|\end{array}
  \label{def:Delta}\end{equation}

The intuition behind these squared terms in the denominator is the following. We will sample a leaf $\ell$ until we either prune it (by determining that it or one of its ancestors is a bad move), prune everyone else (this happens for leaves below the optimal arm) or reach the required precision $\epsilon$.

% For depth-two tree, the complexity term becomes 
% \[\sum_{j=1}^L \frac{1}{(\mu_{1,1} - \mu_{2,1})^2\vee (\mu_{1,j}-\mu_{1,1})^2 \vee \epsilon^2} + \sum_{i=2}^K\sum_{j=1}^L \frac{1}{(\mu_{1,1} - \mu_{2,1})^2\vee (\mu_{i,j} - \mu_{i,1})^2 \vee \epsilon^2},\]
% which improves over the work of \cite{GKK16}.

\begin{theorem}\label{thm:MCTSSC} Let $\delta \leq \min(1,0.1|\cL|)$. UGapE-MCTS using the exploration function~\eqref{def:ExploRate} is such that, with probability larger than $1-\delta$, $\left(V(s^*) - V(\hat{s}_\tau) < \epsilon\right)$ and, letting $\overline{\Delta}_{\ell,\epsilon}=\Delta_\ell \vee \Delta_* \vee \epsilon$, 
\begin{align*}
  \tau
   ~\le~ &
8 H^*_\epsilon(\bm\mu) \ln \frac{|\mathcal L|}{\delta} +\sum_\ell   \frac{16}{\overline{\Delta}_{\ell,\epsilon}^2} \ln\ln \frac{1}{\overline{\Delta}_{\ell,\epsilon}^2} \\
  & + \ \ 8H^*_\epsilon(\bm\mu) \left[3\ln \ln \frac{|\mathcal L|}{\delta} + 
    2  \ln \ln \left(8 e \ln \frac{|\mathcal L|}{\delta} + 24e \ln \ln \frac{|\mathcal L|}{\delta}\right)\right] + 1.
  \end{align*}
\end{theorem}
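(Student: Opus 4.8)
The plan is to work on the favorable event $\cE$ on which all leaf confidence intervals are valid at all times (i.e.\ $\mu_\ell \in \cI_\ell(t)$ for every $\ell$ and $t$); by Lemma~\ref{lem:PAC} and the union bound built into $\beta$, this event has probability at least $1-\delta$, and on it the recommendation is $\epsilon$-correct, so it only remains to bound $\tau$ deterministically on $\cE$. First I would establish the key structural lemma: on $\cE$, if at round $t+1$ the algorithm samples the representative leaf $L_{t+1}=\ell_{R_{t+1}}(t)$, then this leaf is ``not yet sufficiently explored,'' in the sense that its confidence half-width exceeds a constant multiple of $\overline{\Delta}_{L_{t+1},\epsilon}$. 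Concretely I would show that the non-termination condition $U_{\underline c_t}(t)-L_{\underline b_t}(t)\ge \epsilon$, together with the UGapE selection rules for $\underline b_t,\underline c_t$ and $R_{t+1}$, forces the width $U_{R_{t+1}}(t)-L_{R_{t+1}}(t)$ of the selected depth-one node to be at least $\epsilon$ and also at least (a multiple of) $\Delta_*$. Then, propagating down through the chain of representative children to $L_{t+1}$, and using Proposition~\ref{prop:CorrectCI} to compare with the true values along the path, I would argue that at the node $s$ on that path realizing $\Delta_{L_{t+1}} = \max_{s\in\Anc(\ell)\setminus\{s_0\}}|V_s - V(\Par(s))|$, the confidence interval must also still be wide compared to $\Delta_{L_{t+1}}$, because once it is narrow the representative-child mechanism would have ``resolved'' the max/min at the parent and $s$ (or an ancestor) would be excluded. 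Combining the three lower bounds gives half-width $\gtrsim \overline{\Delta}_{L_{t+1},\epsilon}$, i.e.\ $\sqrt{\beta(N_{L_{t+1}}(t),\delta)/(2N_{L_{t+1}}(t))} \gtrsim \overline{\Delta}_{L_{t+1},\epsilon}$.

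Given that lemma, the second step is the standard inversion: for each leaf $\ell$, the inequality $\beta(n,\delta)/(2n) \gtrsim \overline{\Delta}_{\ell,\epsilon}^2$ can hold only for $n$ up to some explicit threshold $N_\ell^{\max}$, obtained by solving $\tfrac{\beta(n,\delta)}{2n} = \tfrac{1}{4}\overline{\Delta}_{\ell,\epsilon}^2$ (or whatever the exact constant turns out to be). Using the form $\beta(s,\delta)=\ln(|\cL|/\delta)+3\ln\ln(|\cL|/\delta)+\tfrac32\ln(\ln s+1)$ from \eqref{def:ExploRate}, this is a transcendental inequality of the type $n \le a(b + \tfrac32\ln\ln n)$ with $a = 2/\overline{\Delta}_{\ell,\epsilon}^2$ and $b = \ln(|\cL|/\delta)+3\ln\ln(|\cL|/\delta)$; I would solve it with the usual $\ln\ln$-trick (bootstrap once: $n\le 2ab$ crudely, then reinsert) to get $N_\ell^{\max} \le \tfrac{4}{\overline{\Delta}_{\ell,\epsilon}^2}\ln\tfrac{|\cL|}{\delta} + \tfrac{8}{\overline{\Delta}_{\ell,\epsilon}^2}\ln\ln\tfrac{1}{\overline{\Delta}_{\ell,\epsilon}^2} + (\text{lower-order }\ln\ln\text{ terms uniform in }\ell)$, matching the three groups of terms in the statement after summing over $\ell$ and recalling $H^*_\epsilon(\vmu)=\sum_\ell \overline{\Delta}_{\ell,\epsilon}^{-2}$.

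The third step is the accounting: $\tau = \sum_{\ell}N_\ell(\tau)$, and since on $\cE$ every sample of $\ell$ occurs while $N_\ell < N_\ell^{\max}$, we get $N_\ell(\tau) \le N_\ell^{\max}$, so $\tau \le \sum_\ell N_\ell^{\max}$, which is exactly the claimed bound (the ``$+1$'' absorbing the final round). I expect the main obstacle to be the propagation argument in the structural lemma — making rigorous the claim that a narrow confidence interval somewhere along the representative path to $L_{t+1}$ would have caused $L_{t+1}$ not to be selected. The delicate points are: (i) handling MAX vs.\ MIN nodes symmetrically via the definitions of $c_s(t)$ and of $B_s(t)$, $\underline b_t$, $\underline c_t$; (ii) ensuring the comparison is against the \emph{right} gap $|V_s - V(\Par(s))|$ at the maximizing ancestor rather than some other ancestor; and (iii) getting clean universal constants ($8$, $16$) rather than instance-dependent ones, which is where the precise choice $R_{t+1}=\argmax_{i\in\{\underline b_t,\underline c_t\}}[U_i(t)-L_i(t)]$ (sample the \emph{wider} of the two candidates) does the real work, guaranteeing the sampled node's width controls both the $\Delta_*$ scale (via $\underline c_t$) and, after descent, the $\Delta_\ell$ scale. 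Everything else is the routine $\ln\ln$ inversion and a sum over leaves.
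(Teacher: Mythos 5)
Your proposal follows essentially the same route as the paper: restrict to the event that all leaf confidence intervals are valid, prove a key lemma showing that any sampled representative leaf still has width at least $\overline{\Delta}_{\ell,\epsilon}$ up to a constant (via nestedness of intervals along the representative path plus the UGapE stopping/selection mechanism at the root, which is indeed where the four-case analysis and the constant $8$ come from), invert the resulting $N_\ell \lesssim \beta(N_\ell,\delta)/\overline{\Delta}_{\ell,\epsilon}^2$ inequality with a $\ln\ln$ bootstrap, and sum over leaves. The obstacles you flag (the propagation argument and the root-level case analysis covering the gap $V(s_0)-V(s_1)$ as well as $\Delta_*$ and $\epsilon$) are exactly the content of the paper's Lemmas on representative leaves and the UGapE consequence lemma, so the plan is sound.
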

% 
% \begin{theorem}\label{thm:MCTSSC} Let $\delta \leq \min(1,0.1|\cL|)$. UGapE-MCTS using the exploration function~\eqref{def:ExploRate} is such that, with probability larger than $1-\delta$, $\left(V(s^*) - V(\hat{s}_\tau) < \epsilon\right)$ and 
% \[\tau \leq 20H_\epsilon^*(\bm\mu) \left[\ln\left(\frac{20H_\epsilon^*(\bm\mu)|\cL|}{\delta}\right) + \ln\ln\left(\frac{20H_\epsilon^*(\bm\mu)|\cL|}{\delta}\right) + 1\right].\]
% \end{theorem}

\begin{remark}
If $\beta(N_a(t),\delta)$ is changed to $\beta(t,\delta)$, one can still prove $(\epsilon,\delta)$ correctness and furthermore upper bound the expectation of $\tau$. However the algorithm becomes less efficient to implement, since after each leaf observation, ALL the confidence intervals have to be updated. In practice, this change lowers the probability of error but does not effect significantly the number of play-outs used. %It may be interesting to check in practice which type of CI yield the best performance.
\end{remark}

%\todo[inline]{Should we also generalize our elimination-based algorithm? It is still a bit different from Teraoka's paper... FUTURE WORK }

\subsection{Comparison with previous work}\label{sec:comparison}

To the best of our knowledge\footnote{In a recent paper, \cite{HuangASM17} independently proposed the LUCBMinMax algorithm, that differs from UGapE-MCTS and LUCB-MCTS only by the way the best guess $\underline{b}_t$ is picked. The analysis is very similar to ours, but features some refined complexity measure, in which $\Delta_\ell$ (that is the maximal distance between \emph{consecutive} ancestors of the leaf, see~\eqref{def:Delta}) is replaced by the maximal distance between \emph{any} ancestors of that leaf. Similar results could be obtained for our two algorithms following the same lines. %Using similar technique, we can improve our analysis to feature this new complexity measure.
}, the $\mathrm{FindTopWinner}$ algorithm \cite{Teraoka14MCTS} is the only algorithm from the literature designed to solve the best action identification problem in any-depth trees. The number of play-outs of this algorithm is upper bounded with high probability by 
\[\sum_{\ell : \Delta_\ell > 2\epsilon}\left(\frac{32}{\Delta_\ell^2}\ln\frac{16|\cL|}{\Delta_\ell\delta} + 1\right) + \sum_{\ell : \Delta_\ell \leq 2\epsilon}\left(\frac{8}{\epsilon^2}\ln\frac{8|\cL|}{\epsilon\delta} + 1\right)\]
One can first note the improvement in the constant in front of the leading term in $\ln(1/\delta)$, as well as the presence of the $\ln\ln (1/\overline{\Delta_{\ell,\epsilon^2}})$ second order term, that is unavoidable in a regime in which the gaps are small \cite{Jamiesonal14LILUCB}. The most interesting improvement is in the control of the number of draws of $2\epsilon$-optimal leaves (such that $\Delta_\ell \leq 2\epsilon$). In UGapE-MCTS, the number of draws of such leaves is at most of order $(\epsilon \vee \Delta_*^2)^{-1} \ln(1/\delta)$, which may be significantly smaller than $\epsilon^{-1}\ln(1/\delta)$ if there is a gap in the best and second best value. Moreover, unlike $\mathrm{FindTopWinner}$ and M-LUCB \cite{GKK16} in the depth two case, UGapE-MCTS can also be used when $\epsilon=0$, with provable guarantees. 

Regarding the algorithms themselves, one can note that M-LUCB, an extension of LUCB suited for depth-two tree, does \emph{not} belong to the class of BAI-MCTS algorithms. Indeed, it has a ``reversed'' structure, first computing the representative leaf for each depth-one node: $\forall s \in \cC(s_0), R_{s,t} = \ell_s(t)$ and then performing a BAI step over the representative leaves: $\tilde{L}_{t+1} = \mathrm{BAIStep}({R_{s,t}, s \in \cC(s_0)})$. This alternative architecture can also be generalized to deeper trees, and was found to have empirical performance similar to BAI-MCTS. M-LUCB, which will be used as a benchmark in Section~\ref{sec:Experiments}, also distinguish itself from LUCB-MCTS by the fact that it uses an exploration rate that depends on the global time $\beta(t,\delta)$ and that $\underline{b}_t$ is the empirical maximin arm (which can be different from the arm maximizing $\hat{V}_s$). This alternative choice is not yet supported by theoretical guarantees in deeper trees.

Finally, the exploration step of BAI-MCTS algorithm bears some similarity with the UCT algorithm \cite{KocsisBBMCP06}, as it goes down the tree choosing alternatively the move that yields the highest UCB or the lowest LCB. However, the behavior of BAI-MCTS is very different at the root, where the first move is selected using a BAI algorithm. Another key difference is that BAI-MCTS relies on \emph{exact} confidence intervals: each interval $\cI_s(t)$ is shown to contain with high probability the corresponding value $V_s$, whereas UCT uses more heuristic confidence intervals, based on the number of visits of the parent node, and aggregating all the samples from descendant nodes. Using UCT in our setting is not obvious as it would require to define a suitable stopping rule, hence we don't include a comparison with this algorithm in Section~\ref{sec:Experiments}. A hybrid comparison between UCT and $\mathrm{FindTopWinner}$ is proposed in \cite{Teraoka14MCTS}, providing UCT with the random number of samples used by the the fixed-confidence algorithm. It is shown that $\mathrm{FindTopWinner}$ has the advantage for hard trees that require many samples. Our experiments show that our algorithms in turn always dominate $\mathrm{FindTopWinner}$.

\subsection{Proof of Theorem~\ref{thm:MCTSSC}.}

 Letting $\cE_t = \bigcap_{\ell \in \cL} \left(\mu_\ell \in \cI_\ell(t)\right) \ \ \text{and} \ \ \cE = \bigcap_{t\in\N} \cE_t,$
we upper bound $\tau$ assuming the event $\cE$ holds, using the following key result, which is proved in Appendix~\ref{proof:UltraKey}. 

\begin{lemma}\label{lem:UltraKey}Let $t \in \N$. $\cE_t \cap (\tau > t) \cap (L_{t+1} = \ell) \ \ \ \Rightarrow \ \ \ N_{\ell}(t) \leq \frac{8\beta(N_\ell(t),\delta)}{\Delta_\ell^2\vee \Delta_*^2 \vee \epsilon^2}$.\end{lemma}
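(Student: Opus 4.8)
The plan is to establish the equivalent statement about interval widths: on $\cE_t \cap (\tau > t) \cap (L_{t+1}=\ell)$ one has $U_\ell(t) - L_\ell(t) \ge \frac12\left(\Delta_\ell \vee \Delta_* \vee \epsilon\right)$. Once this is shown, substituting the explicit intervals~\eqref{eq:cis}, for which $U_\ell(t) - L_\ell(t) = \sqrt{2\beta(N_\ell(t),\delta)/N_\ell(t)}$, and a direct computation turn the width bound exactly into the claimed $N_\ell(t) \le 8\beta(N_\ell(t),\delta)/(\Delta_\ell^2 \vee \Delta_*^2 \vee \epsilon^2)$; the case $N_\ell(t)=0$ is trivial. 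Write $W := U_\ell(t) - L_\ell(t)$.

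First I would record a monotonicity fact about the construction of Section~\ref{sec:CI}: at a MAX node $\UCB_s = \UCB_{c_s(t)}$ and $\LCB_s \ge \LCB_{c_s(t)}$, and symmetrically at a MIN node, so $\cI_s(t) \subseteq \cI_{c_s(t)}(t)$. Since $\ell = \ell_{R_{t+1}}(t)$ is reached from $R_{t+1}$ by always descending to the representative child, iterating this inclusion gives $\cI_s(t) \subseteq \cI_\ell(t)$ for every ancestor $s \ne s_0$ of $\ell$. Two consequences on $\cE_t$: $U_{R_{t+1}}(t) - L_{R_{t+1}}(t) \le W$; and, for any ancestor $s$ of $\ell$ with $\Par(s) \ne s_0$, both $V_s$ and $V(\Par(s))$ lie in $\cI_\ell(t)$ (using Proposition~\ref{prop:CorrectCI}), so $|V_s - V(\Par(s))| \le W$. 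Looking at the definition~\eqref{def:Delta} of $\Delta_\ell$, this already bounds every ancestor term except the depth-one one, and it remains to prove $W \ge \epsilon$, $W \ge \frac12\Delta_*$, and $W \ge \frac12\,|V_{R_{t+1}} - V(s_0)|$.

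For these three inequalities I would unfold the UGapE-MCTS rules on $\cE_t$. Not having stopped gives $B_{\underline{b}_t}(t) = U_{\underline{c}_t}(t) - L_{\underline{b}_t}(t) \ge \epsilon$; the sampling rule gives $W \ge U_{R_{t+1}}(t) - L_{R_{t+1}}(t) = \max\!\left(U_{\underline{b}_t}(t) - L_{\underline{b}_t}(t),\ U_{\underline{c}_t}(t) - L_{\underline{c}_t}(t)\right)$; $\underline{b}_t$ being a minimizer of $B$ gives $B_{\underline{b}_t}(t) \le B_{\underline{c}_t}(t)$ and $B_{\underline{b}_t}(t) \le B_{s^*}(t)$; the definition of $\underline{c}_t$ gives $\max_{a' \ne \underline{c}_t} U_{a'}(t) \le \max(U_{\underline{b}_t}(t), U_{\underline{c}_t}(t))$; and $\cE_t$ yields $U_{s^*}(t) \ge V(s^*) \ge V_a$ for every depth-one $a$, and $V(s^*_2) \ge V_b \ge L_b(t)$ for every depth-one $b \ne s^*$. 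A case split on whether $R_{t+1}$ equals $\underline{b}_t$ or $\underline{c}_t$ and on whether $\underline{b}_t = s^*$ then produces the three bounds from these relations; for instance when $\underline{b}_t \ne s^*$ one has $U_{\underline{c}_t}(t) \ge U_{s^*}(t) \ge V(s^*)$, hence $V(s^*) - V_{R_{t+1}} \le U_{\underline{c}_t}(t) - L_{\underline{b}_t}(t) = B_{\underline{b}_t}(t) \le B_{\underline{c}_t}(t) \le W$, the last step distinguishing whether $U_{\underline{b}_t}(t)$ exceeds $U_{\underline{c}_t}(t)$. Combining with the previous step, $\Delta_\ell \vee \Delta_* \vee \epsilon \le 2W$, which is the desired width bound.

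I expect the main obstacle to be the case $R_{t+1} = s^*$: the depth-one term $|V_{R_{t+1}} - V(s_0)|$ then vanishes, but one must still produce $W \ge \frac12\Delta_*$, and $s^*$ is being sampled only as the optimistic challenger $\underline{c}_t$ of an incorrect current best $\underline{b}_t \ne s^*$. Bounding $\Delta_* = V(s^*) - V(s^*_2) \le U_{s^*}(t) - L_{\underline{b}_t}(t)$ by $2W$ there appears to require $B_{\underline{b}_t}(t) \le B_{s^*}(t)$, the non-negativity of $U_{\underline{c}_t}(t) - L_{\underline{b}_t}(t)$ coming from $\tau > t$, and $V(s^*_2) \ge L_{\underline{b}_t}(t)$ simultaneously — the one place where the confidence event, the stopping rule, and the minimizing choice of $\underline{b}_t$ must all be used at once.
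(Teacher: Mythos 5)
Your proposal is correct and follows essentially the same route as the paper: nested confidence intervals along the representative path (the paper's Lemma~\ref{lem:Rep1}) put all ancestor values inside $\cI_\ell(t)$ on $\cE_t$, bounding the deep gaps by the width $W$ (Lemma~\ref{lem:Rep2}), and the UGapE mechanics with the same four-way case split bound $\epsilon$, $\Delta_*$ and the depth-one gap by $2W$ (Lemmas~\ref{lem:CsqceBAI} and~\ref{lem:IntermediateLemma}). The only nit is in your worked instance: when $R_{t+1}=\underline{c}_t$ the step $V(s^*)-V_{R_{t+1}} \le U_{\underline{c}_t}(t)-L_{\underline{b}_t}(t)$ needs $V_{\underline{c}_t}\ge L_{\underline{b}_t}(t)$, which is not immediate, but that sub-case is handled directly via $V(s^*)-V_{\underline{c}_t}\le U_{\underline{c}_t}(t)-L_{\underline{c}_t}(t)\le W$, so the argument goes through.
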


An intuition behind this result is the following. First, using that the selected leaf $\ell$ is a representative leaf, it can be seen that the confidence intervals from $s_D=\ell$ to $s_0$ are nested (Lemma~\ref{lem:Rep1}). Hence if $\cE_t$ holds, $V(s_k) \in \cI_\ell(t)$ for all $k=1,\dots,D$, which permits to lower bound the width of this interval (and thus upper bound $N_\ell(t)$) as a function of the $V(s_k)$ (Lemma~\ref{lem:Rep2}). Then Lemma~\ref{lem:CsqceBAI} exploits the mechanism of UGapE to further relate this width to $\Delta_*$ and $\epsilon$. 

Another useful tool is the following lemma, that will allow to leverage the particular form of the exploration function $\beta$ to obtain an explicit upper bound on $N_\ell(\tau)$. 

\begin{lemma}\label{lem:loglog} Let $\beta(s) = C + \frac{3}{2}\ln(1+\ln(s))$ and define $S = \sup \{ s \geq 1 : a \beta(s) \geq s \}$. Then
\[S \leq aC + {2}a \ln(1+\ln(aC)).\]
\end{lemma}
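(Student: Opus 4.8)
The plan is to show that $s = aC + 2a\ln(1+\ln(aC))$ already satisfies $a\beta(s) < s$, which (since $s \mapsto s - a\beta(s)$ is eventually increasing, $\beta$ being sublinear) forces $S$ to lie below this value. So I would fix $s_0 := aC + 2a\ln(1+\ln(aC))$ and aim to verify $a\beta(s_0) \le s_0$, i.e.\
\[
aC + \tfrac{3}{2}a\ln(1+\ln s_0) ~\le~ aC + 2a\ln(1+\ln(aC)),
\]
which after cancelling $aC$ and dividing by $a$ reduces to the purely analytic inequality
\[
\tfrac{3}{2}\ln(1+\ln s_0) ~\le~ 2\ln(1+\ln(aC)).
\]

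The core of the argument is therefore to bound $\ln s_0$ in terms of $\ln(aC)$. Writing $x := aC$, we have $s_0 = x + 2a\ln(1+\ln x)$; the first step is to control the additive correction $2a\ln(1+\ln x)$ relative to $x$. Since $C$ is a constant of the form appearing in $\beta$ (so $C \ge 1$, hence $aC \ge a$ and $a \le x$), one gets $2a\ln(1+\ln x) \le 2x\ln(1+\ln x)/C \le 2x\ln(1+\ln x)$, so $s_0 \le x(1 + 2\ln(1+\ln x))$. Taking logarithms, $\ln s_0 \le \ln x + \ln(1 + 2\ln(1+\ln x))$, and the second term is lower order; a crude bound such as $\ln(1+2u) \le \ln 3 + \ln(1+u) \le 1 + u$ for $u = \ln(1+\ln x)$ then yields $1 + \ln s_0 \le (1+\ln x)\cdot(\text{something sub-polynomial in } \ln x)$, from which $\ln(1+\ln s_0) \le \tfrac{4}{3}\ln(1+\ln x)$ holds once $x$ (equivalently $aC$) is large enough. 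Multiplying by $\tfrac32$ gives exactly the $2\ln(1+\ln(aC))$ bound we need.

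The main obstacle — and the only genuinely delicate point — is the small-argument regime: the chain of logarithmic estimates above is clean only when $aC$ is bounded away from $1$, and one must separately check the claim when $aC$ is small (where $S$ could in principle be governed by the $s \ge 1$ constraint rather than by the asymptotics). I would handle this by noting that for $s \ge 1$ one has $\beta(s) \le C + \tfrac32\ln(1+\ln s)$ with the correction term vanishing at $s=1$, so $S \le a\beta(S)$ combined with monotonicity gives $S \le aC(1 + o(1))$ crudely, and then verifying by direct substitution that the stated bound $aC + 2a\ln(1+\ln(aC))$ dominates this in the remaining bounded range. Since the lemma is applied with $C = \ln(|\cL|/\delta) + 3\ln\ln(|\cL|/\delta)$ and $a$ a modest constant times $\beta$-independent factors, $aC$ is in fact large in all applications, so the asymptotic branch is what matters; the small-$aC$ case only needs to be dispatched for the statement to hold as written.
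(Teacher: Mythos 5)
Your strategy is essentially the paper's: argue that $s\mapsto s-a\beta(s)$ is convex and nonpositive at $s=1$, so that $\{s\ge 1: a\beta(s)\ge s\}$ is an interval $[1,S]$, and then verify that the candidate $s_0=aC+2a\ln(1+\ln(aC))$ satisfies $a\beta(s_0)\le s_0$, which after cancelling $aC$ reduces to $\tfrac32\ln(1+\ln s_0)\le 2\ln(1+\ln(aC))$. The paper reaches this via a sharper intermediate statement (Theorem~\ref{thm:loglog}): it first proves $S\le aC+\tfrac32 a\ln(1+\ln(aC))\,\rho$ with $\rho=\frac{C(1+\ln(aC))}{C(1+\ln(aC))-3/2}$ by applying $\ln(1+x)\le x$ twice, and then checks numerically that $\tfrac32\rho\le 1.7995\le 2$ under $C\ge-\ln(0.1)$ and $a\ge 8$. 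Your direct chain of logarithmic estimates accomplishes the same thing; writing $y=1+\ln(aC)$, the inequality you need amounts to $y+\ln(1+2\ln y)\le y^{4/3}$, which indeed holds once $y\gtrsim 2.8$, i.e.\ $aC\gtrsim 6$ --- comfortably implied by the paper's conditions, which force $aC\ge 8\ln 10\approx 18.4$.

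The one point that does not survive scrutiny is your plan to ``dispatch'' the small-$aC$ regime by direct substitution: the bound as stated is actually \emph{false} there, so no argument can rescue it. For $a=C=1$ one computes $1+\tfrac32\ln(1+\ln 1.5)\approx 1.51>1.5$, so $S>1.5$, while the claimed bound equals $aC+2a\ln(1+\ln(aC))=1$. The lemma is therefore only valid under implicit largeness hypotheses on $a$ and $C$ (the paper assumes $C\ge 1$ in Theorem~\ref{thm:loglog} and invokes $C\ge-\ln(0.1)$, $a\ge 8$ when deducing the factor $2$; these hold in every application, where $a=8/\overline{\Delta}_{\ell,\epsilon}^2\ge 8$ and $C=\ln(|\mathcal L|/\delta)+3\ln\ln(|\mathcal L|/\delta)$ with $\delta\le 0.1|\mathcal L|$). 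You should state those hypotheses rather than attempt to cover the bounded range; with them in place, your asymptotic branch is the whole proof and it goes through.
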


This result is a consequence of Theorem~\ref{thm:loglog} stated in Appendix~\ref{proof:InvertingBounds}, that uses the fact that for $C \ge -\ln (0.1)$ and $a \ge 8$, it holds that
\[
  \frac{3}{2}
  \frac{C (1+\ln (a C))}{C \left(1+\ln (a C)\right)-\frac{3}{2}}
  ~\le~
  1.7995564
  ~\le~
  2
  .
\]

On the event $\cE$, letting $\tau_\ell$ be the last instant before $\tau$ at which the leaf $\ell$ has been played before stopping, one has $N_\ell(\tau -1)=N_\ell(\tau_\ell)$ that satisfies by Lemma~\ref{lem:UltraKey}
\[N_{\ell}(\tau_\ell) \leq \frac{8\beta(N_\ell(\tau_\ell),\delta)}{\Delta_\ell^2\vee \Delta_*^2 \vee \epsilon^2}.\]
Applying Lemma~\ref{lem:loglog} with $a=a_\ell=\frac{8}{\Delta_\ell^2\vee \Delta_*^2 \vee \epsilon^2}$ and $C ~=~ \ln \frac{|\mathcal L|}{\delta} + 3 \ln \ln \frac{|\mathcal L|}{\delta}
$ leads to
\[
  N_\ell(\tau-1) ~\le~
  a_\ell \left( C + 
    2  \ln (1+\ln (a_\ell C))
  \right).
\]
Letting $\overline{\Delta}_{\ell,\epsilon} = \Delta_\ell \vee \Delta_* \vee \epsilon$ and summing over arms, we find 
\begin{align*}
  \tau
  &~=~
  1 + \sum_\ell N_\ell(\tau-1) \\
  &~\le~
  1 + \sum_\ell   \frac{8}{\overline{\Delta}_{\ell,\epsilon}^2} \left( \ln \frac{|\mathcal L|}{\delta} + 3 \ln \ln \frac{|\mathcal L|}{\delta} + 
    2  \ln \ln \left(8 e \frac{\ln \frac{|\mathcal L|}{\delta} + 3 \ln \ln \frac{|\mathcal L|}{\delta}}{\overline{\Delta}_{\ell,\epsilon}^2}\right)
    \right)
  \\
  &~=~  1 + \sum_\ell   \frac{8}{\overline{\Delta}_{\ell,\epsilon}^2} \left( \ln \frac{|\mathcal L|}{\delta} + 2 \ln\ln \frac{1}{\overline{\Delta}_{\ell,\epsilon}^2} 
  \right) +  8H^*_\epsilon(\bm\mu) \left[3\ln \ln \frac{|\mathcal L|}{\delta} + 
    2  \ln \ln \left(8 e \ln \frac{|\mathcal L|}{\delta} + 24e \ln \ln \frac{|\mathcal L|}{\delta}\right)\right].
%     \\
%   &~\le~
%   1 + 8 H^*_\epsilon(\bm\mu) \left( \ln \frac{|\mathcal L|}{\delta} + 3 \ln \ln \frac{|\mathcal L|}{\delta} + 
%     2  \ln \ln \left(8 e H^*_\epsilon(\bm\mu) (\ln \frac{|\mathcal L|}{\delta} + 3 \ln \ln \frac{|\mathcal L|}{\delta})\right)
%     \right)
%   .
\end{align*}

To conclude the proof, we remark that from the proof of Lemma~\ref{lem:PAC} (see Appendix~\ref{proof:PAC}) it follows that on $\cE$, $V(s^*) - V(\hat{s}_\tau) < \epsilon$ and that $\cE$ holds with probability larger than $1-\delta$.

\section{Experimental Validation}\label{sec:Experiments}

In this section we evaluate the performance of our algorithms in three experiments. We evaluate on the depth-two benchmark tree from \cite{GKK16}, a new depth-three tree and the random tree ensemble from \cite{Teraoka14MCTS}.
We compare to the $\mathrm{FindTopWinner}$ algorithm from \cite{Teraoka14MCTS} in all experiments, and in the depth-two experiment we include the M-LUCB algorithm from \cite{GKK16}. Its relation to BAI-MCTS is discussed in Section~\ref{sec:comparison}.
For our BAI-MCTS algorithms and for M-LUCB we use the exploration rate $\beta(s, \delta) = \ln \frac{|\cL|}{\delta} + \ln(\ln(s)+1)$ (a stylized version of Lemma~\ref{lem:PAC} that works well in practice), and we use the KL refinement of the confidence intervals \eqref{eq:cis}.
To replicate the experiment from \cite{Teraoka14MCTS}, we supply all algorithms with $\delta=0.1$ and $\epsilon = 0.01$. For comparing with \cite{GKK16} we run all algorithms with $\epsilon = 0$ and $\delta = 0.1 |\cL|$ (undoing the conservative union bound over leaves. This excessive choice, which might even exceed one, does not cause a problem, as the algorithms depend on $\frac{\delta}{|\cL|} = 0.1$). In none of our experiments the observed error rate exceeds $0.1$.

Figure~\ref{fig:cmp.GKK16} shows the benchmark tree from \cite[Section~5]{GKK16} and the performance of four algorithms on it. We see that the special-purpose depth-two M-LUCB performs best, very closely followed by both our new arbitrary-depth LUCB-MCTS and UGapE-MCTS methods. All three use significantly fewer samples than $\mathrm{FindTopWinner}$. Figure~\ref{fig:newtree} (displayed in Appendix~\ref{sec:ExtraFigure} for the sake of readability) shows a full 3-way tree of depth 3 with leafs drawn uniformly from $[0,1]$. Again our algorithms outperform the previous state of the art by an order of magnitude. Finally, we replicate the experiment from \cite[Section 4]{Teraoka14MCTS}. To make the comparison as fair as possible, we use the proven exploration rate from \eqref{def:ExploRate}. On 10K full 10-ary trees of depth 3 with Bernoulli leaf parameters drawn uniformly at random from $[0,1]$ the average numbers of samples are: LUCB-MCTS 141811, UGapE-MCTS 142953 and FindTopWinner 2254560. To closely follow the original experiment, we do apply the union bound over leaves to all algorithms, which are run with $\epsilon=0.01$ and $\delta=0.1$. We did not observe any error from any algorithm (even though we allow 10\%). Our BAI-MCTS algorithms deliver an impressive 15-fold reduction in samples.

\begin{figure}[h]
  \centering
  \includegraphics[width=.9\textwidth]{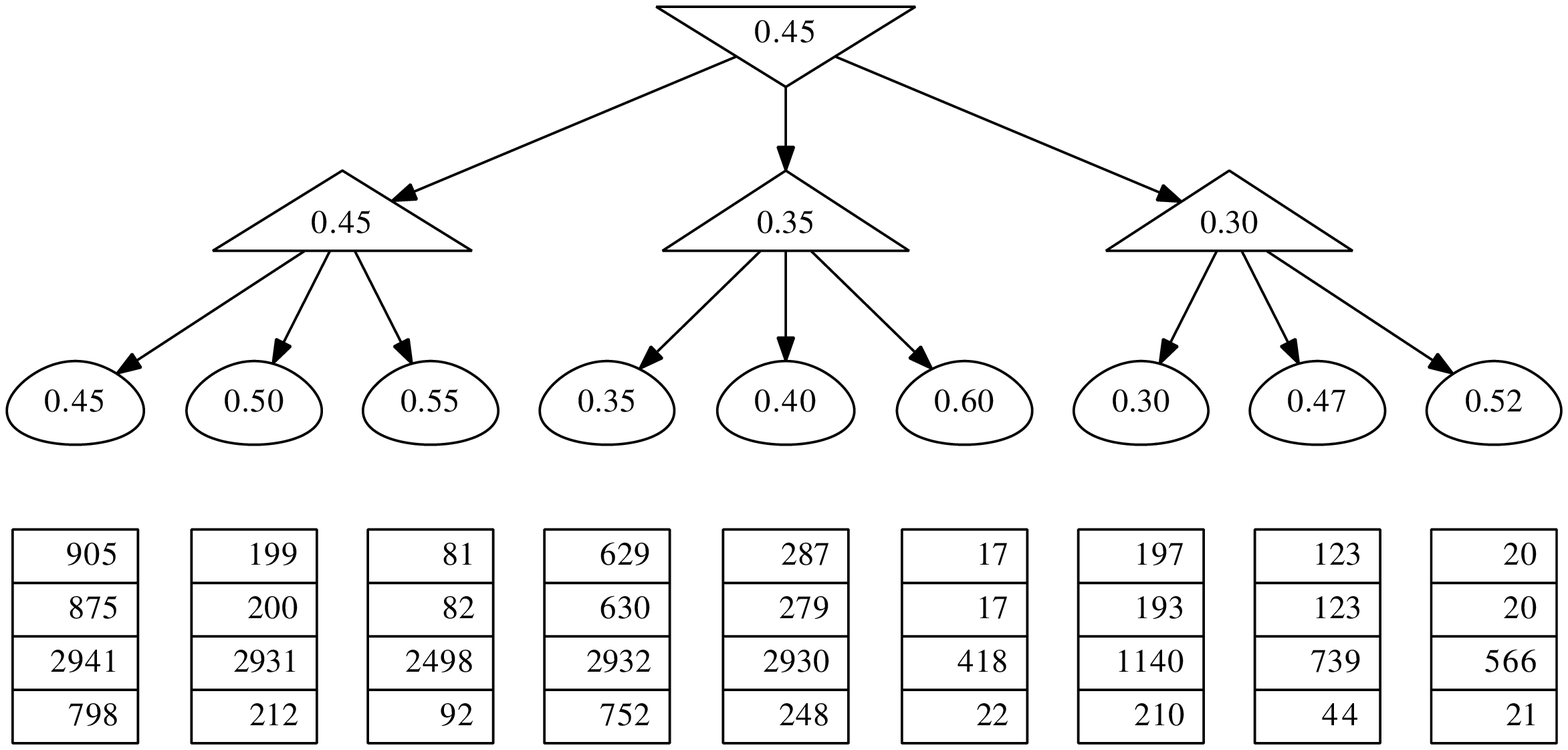}
  \caption{The $3 \times 3$ tree of depth 2 that is the benchmark in \cite{GKK16}. Shown below the leaves are the average numbers of pulls for 4 algorithms: LUCB-MCTS (0.89\% errors, 2460 samples), UGapE-MCTS (0.94\%, 2419), $\mathrm{FindTopWinner}$ (0\%, 17097) and M-LUCB (0.14\%, 2399). All counts are averages over 10K repetitions with $\epsilon=0$ and $\delta=0.1 \cdot 9$.
%    \todo[inline]{Is the multi-level $w=0$ pattern apparent?}
  }\label{fig:cmp.GKK16}
\end{figure}

\section{Lower bounds and discussion}

Given a tree $\cT$, a MCTS model is parameterized by the leaf values, $\bm\mu := (\mu_{\ell})_{\ell \in \cL}$, which determine the best root action: $s^*=s^*(\bm\mu)$. For $\bm \mu \in [0,1]^{|\cL|}$, We define $\mathrm{Alt}(\bm\mu) = \{ \bm \lambda \in [0,1]^{|\cL|} : s^*(\bm\lambda) \neq s^*(\bm\mu)\}$. Using the same technique as \cite{GK16} for the classic best arm identification problem, one can establish the following (non explicit) lower bound. The proof is given in Appendix~\ref{sec:proofLB}.

\begin{theorem}\label{thm:LBgene} Assume $\epsilon=0$. Any $\delta$-correct algorithm satisfies  
 \begin{equation}\bE_{\bm\mu}[\tau] \geq T^*(\bm\mu) d(\delta,1-\delta), \ \ \text{where} \ \ \ T^*(\bm\mu)^{-1} := \sup_{\bm w \in \Sigma_{|\cL|}}\inf_{\bm \lambda \in \mathrm{Alt}(\bm\mu)} \sum_{\ell \in \cL} w_\ell d\left(\mu_\ell,\lambda_{\ell}\right)\label{def:supremumT}\end{equation}
 with $\Sigma_k = \{ \bm w \in [0,1]^i : \sum_{i=1}^k w_i = 1\}$ and $d(x,y) = x\ln(x/y) + (1-x) \ln((1-x)/(1-y))$ is the binary Kullback-Leibler divergence.
\end{theorem}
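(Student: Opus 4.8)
The plan is to follow the standard change-of-measure lower bound technique, as pioneered by Garivier and Kaufmann for classical best arm identification, and adapt it to the tree setting where the only thing that matters about the parameter is the identity of the optimal root action $s^*(\bm\mu)$. The key ingredient is a transportation/data-processing inequality for the log-likelihood ratio under any $\delta$-correct algorithm. First I would fix a $\delta$-correct algorithm and an alternative instance $\bm\lambda \in \mathrm{Alt}(\bm\mu)$, and consider the log-likelihood ratio of the observations up to the (a.s.\ finite) stopping time $\tau$. Because observations come from the leaf oracles and leaf $\ell$ is pulled $N_\ell(\tau)$ times, Wald's identity gives $\bE_{\bm\mu}\!\left[\sum_{\ell} N_\ell(\tau)\, d(\mu_\ell,\lambda_\ell)\right]$ as the expected log-likelihood ratio. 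The data-processing inequality applied to the event $(\hat s_\tau = s^*(\bm\mu))$ — which has probability at least $1-\delta$ under $\bm\mu$ and at most $\delta$ under $\bm\lambda$, since $s^*(\bm\lambda)\neq s^*(\bm\mu)$ forces a $\delta$-correct algorithm to be wrong on $\bm\lambda$ — yields
\[
\sum_{\ell \in \cL} \bE_{\bm\mu}[N_\ell(\tau)]\, d(\mu_\ell,\lambda_\ell) \;\geq\; d(\delta,1-\delta).
\]

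Next I would convert this into the stated bound by optimizing over the allocation. Write $\bE_{\bm\mu}[N_\ell(\tau)] = \bE_{\bm\mu}[\tau]\, w_\ell$ where $w_\ell := \bE_{\bm\mu}[N_\ell(\tau)]/\bE_{\bm\mu}[\tau]$, so that $\bm w \in \Sigma_{|\cL|}$. The inequality above then reads $\bE_{\bm\mu}[\tau] \sum_\ell w_\ell\, d(\mu_\ell,\lambda_\ell) \geq d(\delta,1-\delta)$ for every $\bm\lambda \in \mathrm{Alt}(\bm\mu)$. Taking the infimum over $\bm\lambda \in \mathrm{Alt}(\bm\mu)$ on the left, and then bounding the resulting quantity by the supremum over all $\bm w \in \Sigma_{|\cL|}$ (since the algorithm's particular $\bm w$ is one feasible choice), gives
\[
\bE_{\bm\mu}[\tau] \;\geq\; \frac{d(\delta,1-\delta)}{\displaystyle\inf_{\bm\lambda \in \mathrm{Alt}(\bm\mu)} \sum_{\ell} w_\ell\, d(\mu_\ell,\lambda_\ell)} \;\geq\; \frac{d(\delta,1-\delta)}{\displaystyle\sup_{\bm w' \in \Sigma_{|\cL|}}\inf_{\bm\lambda \in \mathrm{Alt}(\bm\mu)} \sum_{\ell} w'_\ell\, d(\mu_\ell,\lambda_\ell)} \;=\; T^*(\bm\mu)\, d(\delta,1-\delta),
\]
which is exactly the claim. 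One should also note that $\mathrm{Alt}(\bm\mu)$ is nonempty (the best action can always be changed by perturbing leaf values), so the infimum is over a genuine set and $T^*(\bm\mu)$ is well-defined and finite; the assumption $\epsilon = 0$ is what makes "$\delta$-correct" mean exactly "outputs $s^*(\bm\mu)$ with probability $\geq 1-\delta$".

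The main obstacle is making the change-of-measure step fully rigorous: one must verify that $\tau$ is almost surely finite under both $\bm\mu$ and $\bm\lambda$ (so Wald's identity applies and the log-likelihood ratio is well-defined), handle the possibility that $d(\mu_\ell,\lambda_\ell) = \infty$ when a Bernoulli parameter is pushed to $\{0,1\}$ (which is harmless since we take an infimum, so such $\bm\lambda$ never bind), and correctly invoke the data-processing inequality for KL divergence restricted to the $\sigma$-algebra generated by the (stopped) observations together with any internal randomization — this is precisely Lemma 1 of Kaufmann, Cappé and Garivier, which applies verbatim here since the branching/propagation structure of the tree only enters through the deterministic map $\bm\mu \mapsto s^*(\bm\mu)$ and not through the likelihood of the observed leaf samples. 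Everything downstream is the same variational manipulation as in \cite{GK16}, so I would simply cite that reference for the template and spell out only the tree-specific identification of the correctness event. The full argument is deferred to Appendix~\ref{sec:proofLB}.
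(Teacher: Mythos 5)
Your proposal is correct and follows essentially the same route as the paper's proof in Appendix~\ref{sec:proofLB}: the transportation/data-processing inequality of Kaufmann, Cappé and Garivier applied to the correctness event $(\hat{s}_\tau = s^*(\bm\mu))$, followed by normalizing $\bE_{\bm\mu}[N_\ell(\tau)]$ by $\bE_{\bm\mu}[\tau]$ and passing from the algorithm's own allocation to the supremum over $\Sigma_{|\cL|}$. The additional care you take about finiteness of $\tau$ and infinite KL terms is sound but not needed beyond what the cited lemma already handles.
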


This result is however not directly amenable for comparison with our upper bounds, as the optimization problem defined in Lemma~\ref{thm:LBgene} is not easy to solve. Note that $d(\delta,1-\delta) \geq \ln(1/(2.4\delta))$ \cite{JMLR15}, thus our upper bounds have the right dependency in $\delta$. For depth-two trees with $K$ (resp. $M$) actions for player A (resp. B), we can moreover prove the following result, that suggests an intriguing behavior.  

\begin{lemma} \label{thm:LBdepth2}Assume $\epsilon=0$ and consider a tree of depth two with $\bm\mu =(\mu_{i,j})_{1 \leq i \leq K, 1 \leq j \leq M}$ such that $\forall (i,j) , \mu_{1,1} > \mu_{i,1}, \ \mu_{i,1} < \mu_{i,j}$. The supremum in the definition of $T^*(\bm \mu)^{-1}$ can be restricted to %the set 
\[\tilde{\Sigma}_{K,M} := \{ \bm w \in \Sigma_{K\times M} : w_{i,j} = 0 \ \text{if} \ \ i\geq 2 \ \text{and} \ j \geq 2\}\]

\vspace{-0.4cm}

and 

\vspace{-0.6cm}

\[T^*(\bm\mu)^{-1}\! =\!\!\max_{\bm w \in \tilde{\Sigma}_{K,M}} \!\min_{\substack{i=2,\dots,K  \\a=1,\dots,M}} \left[w_{1,a} d \left(\mu_{1,a},\frac{w_{1,a}\mu_{1,a} + w_{i,1}\mu_{i,1}}{w_{1,a} + w_{i,1}} \right) \! + \! w_{i,1}d\left(\mu_{i,1},\frac{w_{1,a}\mu_{1,a} + w_{i,1}\mu_{i,1}}{w_{1,a} + w_{i,1}}\right)\right].\]
\end{lemma}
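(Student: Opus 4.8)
## Proof proposal for Lemma~\ref{thm:LBdepth2}

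The plan is to invoke Theorem~\ref{thm:LBgene} and reduce the max--min problem defining $T^*(\bm\mu)^{-1}$ using the order structure of $\bm\mu$. Under the stated assumptions action~$1$ is the unique optimal root action, and for every $i$ the best response of player~B is column~$1$, i.e.\ $V_i = \min_a \mu_{i,a} = \mu_{i,1}$ for all $i$ (in particular $V(s^*) = \mu_{1,1}$), because $\mu_{i,1} < \mu_{i,a}$ for $a \ge 2$. Writing $\Alt(\bm\mu) = \bigcup_{i=2}^K \Alt_i$ with $\Alt_i := \{\bm\lambda : \min_a \lambda_{i,a} \ge \min_a \lambda_{1,a}\}$, one gets $\inf_{\bm\lambda \in \Alt(\bm\mu)} \sum_\ell w_\ell d(\mu_\ell,\lambda_\ell) = \min_{2 \le i \le K} g_i(\bm w)$, where $g_i(\bm w) := \inf_{\bm\lambda \in \Alt_i} \sum_\ell w_\ell d(\mu_\ell,\lambda_\ell)$, so it suffices to understand each $g_i$.

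I would then compute $g_i(\bm w)$ through a \emph{crossing level} $x$: since $\bm\lambda \in \Alt_i$ iff there is $x$ with $\min_a \lambda_{1,a} \le x \le \min_a \lambda_{i,a}$, and since perturbing leaves outside rows $1$ and $i$ only wastes mass, one obtains
\[
g_i(\bm w) ~=~ \min_{x \in [\mu_{i,1},\mu_{1,1}]} \Bigl[\,\underbrace{\min_{a} w_{1,a}\, d(\mu_{1,a},x)\,\mathbf 1\{\mu_{1,a}>x\}}_{=:\,C_1(x)} ~+~ \underbrace{\textstyle\sum_{a} w_{i,a}\, d(\mu_{i,a},x)\,\mathbf 1\{\mu_{i,a}<x\}}_{=:\,C_i(x)}\Bigr],
\]
because $C_1$ is nonincreasing, $C_i$ nondecreasing, and $x \mapsto d(\mu,x)$ is nondecreasing on $[\mu,1]$ (so the cheapest way to force $\min_a\lambda_{1,a}\le x$ is to lower a single entry to $x$, and the cheapest way to force $\min_a\lambda_{i,a}\ge x$ is to raise to $x$ all entries below $x$). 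The heart of the argument is then the following monotonicity claim: transferring all the mass of a row $i \ge 2$ onto its first entry $(i,1)$ can only increase $\min_{2\le j\le K} g_j(\bm w)$, so that $\sup_{\bm w \in \Sigma_{KM}} \min_j g_j(\bm w) = \max_{\bm w \in \tilde\Sigma_{K,M}} \min_j g_j(\bm w)$ by iterating over $i$ and using compactness. Indeed, for a fixed level $x$, moving weight $\epsilon$ from $(i,a)$ with $a\ge 2$ to $(i,1)$ changes $C_i(x)$ by $\epsilon\, d(\mu_{i,1},x)\mathbf 1\{\mu_{i,1}<x\} - \epsilon\, d(\mu_{i,a},x)\mathbf 1\{\mu_{i,a}<x\} \ge 0$, since $\mu_{i,1} < \mu_{i,a}$ and $\mu \mapsto d(\mu,x)$ is nonincreasing on $[0,x]$; hence $g_i(\bm w)$ weakly increases while $C_1$ and the $g_j$ for $j\ne i$ are untouched.

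Finally, on $\tilde\Sigma_{K,M}$ we have $C_i(x) = w_{i,1}\, d(\mu_{i,1},x)$ on $(\mu_{i,1},\mu_{1,1}]$, so swapping the two minima gives $g_i(\bm w) = \min_{1\le a\le M}\,\min_{x \in [\mu_{i,1},\mu_{1,1}]}\bigl[w_{1,a}d(\mu_{1,a},x) + w_{i,1}d(\mu_{i,1},x)\bigr]$. The inner objective is convex in $x$ with unconstrained minimiser the weighted mean $\bar\mu_{i,a} := \frac{w_{1,a}\mu_{1,a} + w_{i,1}\mu_{i,1}}{w_{1,a}+w_{i,1}} \in (\mu_{i,1},\mu_{1,a})$ — the classical two-point transportation cost used in \cite{GK16,COLT13}. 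When $\bar\mu_{i,a} \le \mu_{1,1}$ the constrained minimum equals this transportation cost, which is exactly the $(i,a)$ term in the statement; when $\bar\mu_{i,a} > \mu_{1,1}$ one checks that this term already exceeds $g_i(\bm w)$ (its $(i,1)$ part alone is at least $w_{i,1}d(\mu_{i,1},\mu_{1,1}) \ge g_i(\bm w)$), so adjoining it to the minimum over $(i,a)$ is harmless. Combining, $\min_j g_j(\bm w) = \min_{2\le i\le K,\,1\le a\le M}\bigl[w_{1,a}d(\mu_{1,a},\bar\mu_{i,a}) + w_{i,1}d(\mu_{i,1},\bar\mu_{i,a})\bigr]$ for every $\bm w \in \tilde\Sigma_{K,M}$, which together with the previous step yields both the restriction of the supremum to $\tilde\Sigma_{K,M}$ and the displayed formula. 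The main obstacle is making the crossing-level reformulation of $g_i$ rigorous (exchanging the minimum over columns with the optimisation over perturbed leaves, and checking that restricting $x$ to $[\mu_{i,1},\mu_{1,1}]$ loses nothing) and then the mass-transfer monotonicity, with the corner case $\bar\mu_{i,a} > \mu_{1,1}$ requiring the extra bookkeeping above.
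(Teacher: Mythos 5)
Your proposal is correct, and its engine is the same as the paper's: decompose $\mathrm{Alt}(\bm\mu)$ into finitely many pieces, reduce each piece to a KL transportation problem, and show that transferring all the weight of a suboptimal row $i$ onto its smallest entry $(i,1)$ can only increase the objective, using that $\mu \mapsto d(\mu,x)$ is nonincreasing on $[0,x]$ so that $d(\mu_{i,1},x) \ge d(\mu_{i,j},x)$. The organizational difference is in the decomposition: the paper writes $\mathrm{Alt}(\bm\mu)$ as a union over \emph{pairs} $(a,i)$ of the sets $\{\bm\lambda : \forall j,\ \lambda_{1,a} < \lambda_{i,j}\}$ (which column of row $1$ drops, which row overtakes), so that on $\tilde\Sigma_{K,M}$ the inner infimum is immediately the unconstrained two-point problem solved by the weighted mean $\bar\mu_{i,a}$, and no corner case arises; you instead decompose only over the competing row $i$ and introduce a crossing level $x$, which forces you to restrict $x$ to $[\mu_{i,1},\mu_{1,1}]$ and to check separately that the terms with $\bar\mu_{i,a} > \mu_{1,1}$ are dominated by the $(i,1)$ term — bookkeeping you carry out correctly. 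What your route buys is an explicit, self-contained characterization of the optimal perturbation (lower one entry of row $1$ to $x$, raise all entries of row $i$ below $x$ up to $x$), where the paper appeals somewhat tersely to ``tools from constrained optimization'' to describe the minimizing configuration; what the paper's finer decomposition buys is a cleaner endgame with no constrained-minimizer case analysis. Both are complete proofs of the stated identity.
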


It can be extracted from the proof of Theorem~\ref{thm:LBgene} (see Appendix~\ref{sec:proofLB}) that the vector $\bm w^*(\bm\mu)$ that attains the supremum in \eqref{def:supremumT} represents the average proportions of selections of leaves by any algorithm  matching the lower bound. Hence, the sparsity pattern of Lemma~\ref{thm:LBdepth2} suggests that matching algorithms  should draw many of the leaves \emph{much less than $O(\ln(1/\delta))$ times}. This hints at the exciting prospect of optimal stochastic pruning, at least in the asymptotic regime $\delta \to 0$.

As an example, we numerically solve the lower bound optimization problem (which is a concave maximization problem) for $\bm \mu$ corresponding to the benchmark tree displayed in Figure~\ref{fig:cmp.GKK16} to obtain
\[
  T^*(\bm \mu) = 259.9
  \qquad
  \text{and}
  \qquad
  \bm{w}^* ~=~ (0.3633, 0.1057, 0.0532), (0.3738, \bm 0, \bm 0), (0.1040, \bm 0, \bm 0)
  .
\]
With $\delta = 0.1$ we find $\kl(\delta,1-\delta) = 1.76$ and the lower bound is $\bE_{\bm\mu}[\tau] \ge 456.9$. We see that there is a potential improvement of at least a factor $4$.

 \paragraph{Future directions}
 An (asymptotically) optimal algorithm for BAI called Track-and-Stop was developed by \cite{GK16}. It maintains the empirical proportions of draws close to $\bm w^*(\hat{\bm \mu})$, adding forced exploration to ensure $\hat{\bm \mu} \to \bm \mu$. We believe that developing this line of ideas for MCTS would result in a major advance in the quality of tree search algorithms. The main challenge is developing efficient solvers for the general optimization problem \eqref{def:supremumT}. For now, even the sparsity pattern revealed by Lemma~\ref{thm:LBdepth2} for depth two does not give rise to efficient solvers. We also do not know how this sparsity pattern evolves for deeper trees, let alone how to compute $\bm w^*(\bm \mu)$.

\paragraph{Acknowledgments.} Emilie Kaufmann acknowledges the support of the French Agence Nationale de la Recherche (ANR), under grants ANR-16-CE40-0002 (project BADASS) and ANR-13-BS01-0005 (project SPADRO). Wouter Koolen acknowledges support from the Netherlands Organization for Scientific Research (NWO) under Veni grant 639.021.439.

\bibliography{biblioBandits}

\begin{thebibliography}{10}

\bibitem{Bubeck10BestArm}
J-Y. Audibert, S.~Bubeck, and R.~Munos.
\newblock {Best Arm Identification in Multi-armed Bandits}.
\newblock In {\em {Proceedings of the 23rd Conference on Learning Theory}},
  2010.

\bibitem{Aueral02}
P.~Auer, N.~Cesa-Bianchi, and P.~Fischer.
\newblock {Finite-time analysis of the multiarmed bandit problem}.
\newblock {\em Machine Learning}, 47(2):235--256, 2002.

\bibitem{OMS14}
L.~Borsoniu, R.~Munos, and E.~P{\'a}ll.
\newblock An analysis of optimistic, best-first search for minimax sequential
  decision making.
\newblock In {\em ADPRL14}, 2014.

\bibitem{SurveyMCTS12}
C.~Browne, E.~Powley, D.~Whitehouse, S.~Lucas, P.~Cowling, P.~Rohlfshagen,
  S.~Tavener, D.~Perez, S.~Samothrakis, and S.~Colton.
\newblock A survey of monte carlo tree search methods.
\newblock {\em IEEE Transactions on Computational Intelligence and AI in
  games,}, 4(1):1--49, 2012.

\bibitem{KLUCBJournal}
O.~Capp{\'e}, A.~Garivier, O-A. Maillard, R.~Munos, and G.~Stoltz.
\newblock {{K}ullback-{L}eibler upper confidence bounds for optimal sequential
  allocation}.
\newblock {\em Annals of Statistics}, 41(3):1516--1541, 2013.

\bibitem{Cazenave15SHOT}
T.~Cazenave.
\newblock Sequential halving applied to trees.
\newblock {\em IEEE Transactions on Computational Intelligence and AI in
  Games}, 7(1):102--105, 2015.

\bibitem{EvenDaral06}
E.~Even-Dar, S.~Mannor, and Y.~Mansour.
\newblock {Action Elimination and Stopping Conditions for the Multi-Armed
  Bandit and Reinforcement Learning Problems}.
\newblock {\em Journal of Machine Learning Research}, 7:1079--1105, 2006.

\bibitem{Gabillon:al12}
V.~Gabillon, M.~Ghavamzadeh, and A.~Lazaric.
\newblock {Best Arm Identification: A Unified Approach to Fixed Budget and
  Fixed Confidence}.
\newblock In {\em {Advances in Neural Information Processing Systems}}, 2012.

\bibitem{GK16}
A.~Garivier and E.~Kaufmann.
\newblock Optimal best arm identification with fixed confidence.
\newblock In {\em Proceedings of the 29th Conference On Learning Theory
  (COLT)}, 2016.

\bibitem{GKK16}
A.~Garivier, E.~Kaufmann, and W.M. Koolen.
\newblock Maximin action identification: A new bandit framework for games.
\newblock In {\em Proceedings of the 29th Conference On Learning Theory}, 2016.

\bibitem{HuangASM17}
Ruitong Huang, Mohammad~M. Ajallooeian, Csaba Szepesv{\'{a}}ri, and Martin
  M{\"{u}}ller.
\newblock Structured best arm identification with fixed confidence.
\newblock In {\em 28th International Conference on Algorithmic Learning Theory
  (ALT)}, 2017.

\bibitem{Jamiesonal14LILUCB}
K.~Jamieson, M.~Malloy, R.~Nowak, and S.~Bubeck.
\newblock {lil'{UCB}: an Optimal Exploration Algorithm for Multi-Armed
  Bandits}.
\newblock In {\em {Proceedings of the 27th Conference on Learning Theory}},
  2014.

\bibitem{Shivaramal12}
S.~Kalyanakrishnan, A.~Tewari, P.~Auer, and P.~Stone.
\newblock {{PAC} subset selection in stochastic multi-armed bandits}.
\newblock In {\em {International Conference on Machine Learning (ICML)}}, 2012.

\bibitem{Karnin:al13}
Z.~Karnin, T.~Koren, and O.~Somekh.
\newblock {Almost optimal Exploration in multi-armed bandits}.
\newblock In {\em {International Conference on Machine Learning (ICML)}}, 2013.

\bibitem{JMLR15}
E.~Kaufmann, O.~Capp{\'e}, and A.~Garivier.
\newblock {On the Complexity of Best Arm Identification in Multi-Armed Bandit
  Models}.
\newblock {\em Journal of Machine Learning Research}, 17(1):1--42, 2016.

\bibitem{COLT13}
E.~Kaufmann and S.~Kalyanakrishnan.
\newblock {Information complexity in bandit subset selection}.
\newblock In {\em {Proceeding of the 26th Conference On Learning Theory.}},
  2013.

\bibitem{KocsisBBMCP06}
L.~Kocsis and C.~Szepesv\'{a}ri.
\newblock Bandit based monte-carlo planning.
\newblock In {\em Proceedings of the 17th European Conference on Machine
  Learning}, ECML'06, pages 282--293, Berlin, Heidelberg, 2006.
  Springer-Verlag.

\bibitem{LaiRobbins85bandits}
T.L. Lai and H.~Robbins.
\newblock {Asymptotically efficient adaptive allocation rules}.
\newblock {\em Advances in Applied Mathematics}, 6(1):4--22, 1985.

\bibitem{Pepels14HybridMCTS}
T.~Pepels, T.~Cazenave, M.~Winands, and M.~Lanctot.
\newblock Minimizing simple and cumulative regret in monte-carlo tree search.
\newblock In {\em Computer Games Workshop, ECAI}, 2014.

\bibitem{PLAAT1996255}
Aske Plaat, Jonathan Schaeffer, Wim Pijls, and Arie de~Bruin.
\newblock Best-first fixed-depth minimax algorithms.
\newblock {\em Artificial Intelligence}, 87(1):255 -- 293, 1996.

\bibitem{deep.go}
David Silver, Aja Huang, Chris~J. Maddison, Arthur Guez, Laurent Sifre, George
  van~den Driessche, Julian Schrittwieser, Ioannis Antonoglou, Veda
  Panneershelvam, Marc Lanctot, Sander Dieleman, Dominik Grewe, John Nham, Nal
  Kalchbrenner, Ilya Sutskever, Timothy Lillicrap, Madeleine Leach, Koray
  Kavukcuoglu, Thore Graepel, and Demis Hassabis.
\newblock Mastering the game of go with deep neural networks and tree search.
\newblock {\em Nature}, 529:484--489, 2016.

\bibitem{Teraoka14MCTS}
K.~Teraoka, K.~Hatano, and E.~Takimoto.
\newblock Efficient sampling method for monte carlo tree search problem.
\newblock {\em IEICE Transactions on Infomation and Systems}, pages 392--398,
  2014.

\bibitem{Thompson33}
W.R. Thompson.
\newblock {On the likelihood that one unknown probability exceeds another in
  view of the evidence of two samples}.
\newblock {\em Biometrika}, 25:285--294, 1933.

\end{thebibliography}

\appendix  

\section{Numerical Results for a Depth Three Tree}\label{sec:ExtraFigure}
The results for our experiments on a depth-three tree are displayed in Figure~\ref{fig:newtree}.

\begin{sidewaysfigure}[hp]
\centering
\includegraphics[width=\textwidth]{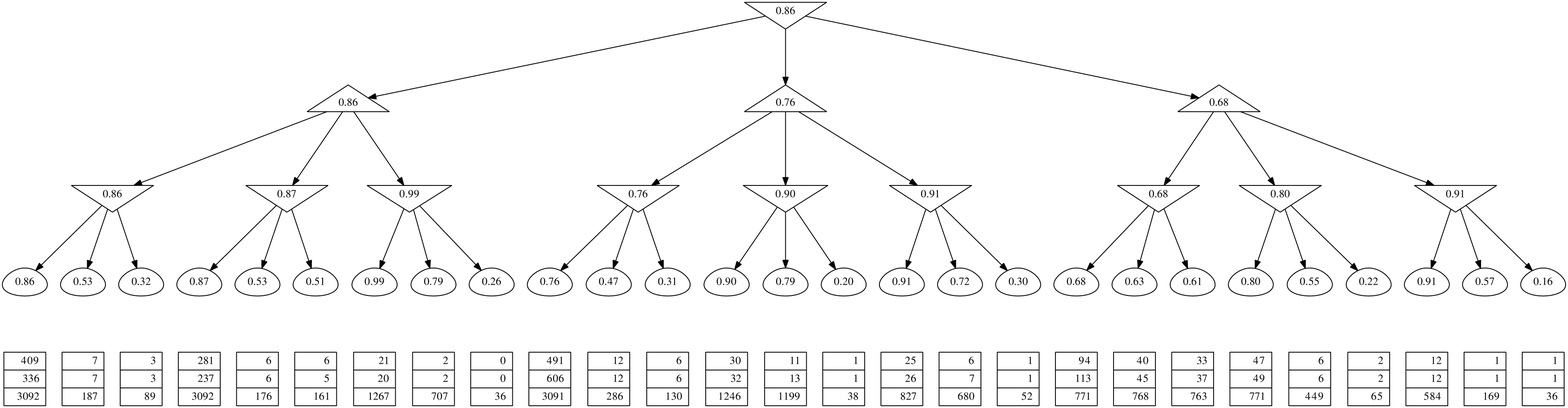}
  \caption{Our benchmark 3-way tree of depth 3. Shown below the leaves are the numbers of pulls of 3 algorithms: LUCB-MCTS (0.72\% errors, 1551 samples), UGapE-MCTS (0.75\%, 1584), and $\mathrm{FindTopWinner}$ (0\%, 20730). Numbers  are averages over 10K repetitions with $\epsilon=0$ and $\delta=0.1 \cdot 27$.
%    \todo[inline]{Is the multi-level $w=0$ pattern apparent?}
  }\label{fig:newtree}
\end{sidewaysfigure}

\section{Confidence Intervals}

\subsection{Proof of Proposition~\ref{prop:CorrectCI}}\label{proof:CorrectCI} The proof proceeds by induction. Let the inductive hypothesis be $\cH_d$=``for all the nodes $s$ at (graph) distance $d$ from a leaf, $V_s \in \cI_s(t)$''. 

$\cH_0$ clearly holds by definition of $\cE_t$. Now let $d$ such that $\cH_d$ holds and let $s$ be at distance $d+1$ of a leaf. Then all $s' \in \cC(s)$ are at distance at most $d$ from a leaf and using the inductive hypothesis, 
\[\forall c \in \cC(s), \ \ \ \LCB_{c}(t) \leq V_c \leq \UCB_c(t).\]
Assume that $s$ is a MAX node. Using that  $\UCB_s(t) = \max_{c\in\cC(s)} \UCB_c(t)$, one has $c\in \cC(s)$, $V_c \leq \UCB_c(t) \leq \UCB_s(t)$. By definition, $V_s = \max_{c \in \cC(s)} V_c$, thus it follows that $V_s \leq \UCB_s(t)$. Still by definition of $V_s$, it holds that $\forall c \in \cC(s), \LCB_c(t) \leq V_c \leq V_s$ and finally, as $\LCB_s(t) = \max_{c \in \cC(s)} \LCB_c(t)$,  $\LCB_s(t) \leq V_s \leq \UCB_s(t)$. A similar reasoning yields the same conclusion if $s$ is a MIN node, thus $\cH_{d+1}$ holds. 

As the tree $\cT$ is finite, we conclude by induction that $\forall s \in \cT, V_s \in \cI_s(t)$. %For the alternative construction described above, denoting by $\cI^{\mathrm{Alt}}(t)$ the resulting confidence intervals, it is easily proved by induction that $\cI_s(t) \subseteq \cI_{s}^{\mathrm{Alt}}(t)$, thus the result follows. 

\subsection{Proof of Lemma~\ref{lem:PAC}} \label{proof:PAC}

Let \[\cE_t = \bigcap_{\ell \in \cL} \left(\mu_\ell \in \cI_\ell(t)\right) \ \ \text{and} \ \ \cE = \bigcap_{t\in\N} \cE_t.\] Using Proposition~\ref{prop:CorrectCI}, on $\cE_t$, for all $s\in \cT$, $V_s \in \cI_s(t)$.  If the algorithm stops at some time $t$, as 
$\LCB_{\underline{b}_t}(t) > \UCB_{\underline{c}_t}(t) - \epsilon,$
the outputted action, $\hat{s}_\tau=\underline{b}_t$, satisfies  
$\LCB_{\hat{s}_\tau}(t) > \UCB_{s'}(t) - \epsilon,$
for all $s' \neq \hat{s}_\tau$. As $\cE$ holds, one obtains 
\[V(\hat{s}_\tau) \geq \max_{s'\neq \hat{s}_\tau} V(s') - \epsilon,\]
and $\hat{s}_\tau$ is an $\epsilon$-maximin action. Hence, the algorithm is correct on $\cE$. The error probability is thus upper bounded by 

\begin{eqnarray*}
\bP\left(\cE^c\right) & \leq & \bP\left(\exists \ell \in \cL, \exists t \in \N : |\hat{\mu}_{\ell}(t) - \mu_\ell | > \sqrt{{\beta(N_\ell(t),\delta)}/({2N_\ell(t)})} \right) \\
& \leq & \sum_{\ell \in \cL} \bP\left(\exists s \in \N : 2s(\hat{\mu}_{\ell,s} - \mu_\ell)^2 > \beta(s,\delta)\right)\\
& \leq & 2 |\cL| \bP\left(\exists s \in \N : S_s > \sqrt{2\sigma^2 s \beta(s,\delta)}\right),
\end{eqnarray*}
where $S_s = X_1 + \dots + X_s$ is a martingale with $\sigma^2$-subgaussian increments, with $\sigma^2 = 1/4$. It was shown in \cite{JMLR15} that for $\delta \leq 0.1$, if
\[\beta(t,\delta) = \ln\left(\frac{1}{\delta}\right) + 3\ln \ln \left(\frac{1}{\delta}\right) + (3/2)\ln (\ln (s) +1),\]
one has $\bP\left(\exists s \in \N : S_s > \sqrt{2\sigma^2 s \beta(s,\delta)}\right)\leq \delta$, which concludes the proof.

\section{Sample complexity analysis of LUCB-MCTS} \label{sec:AnalysisLUCB}

We provide an analysis of a slight variant of LUCB-MCTS that may stop at even rounds only and for $t\in 2\N$ draws the representative leaf of the two promising depth-one nodes:
\begin{equation}L_{t+1} = \ell_{\underline{b}_{t}}(t) \ \ \ \text{and} \ \ \ L_{t+2} = \ell_{\underline{c}_{t}}(t).\label{2armsLUCB}\end{equation}
The stopping rule is then $\tau = \inf \left\{t \in 2\N^* : U_{\underline{c}_t}(t) - L_{\underline{b}_t}(t) < \epsilon \right\}$. 

For this algorithm, our sample complexity guarantee features a slightly different complexity term. For a leaf $\ell = s_0 s_1 \dots s_D$, we first introduce 
\[\tilde{\Delta}_\ell = \max_{s \in \Anc(\ell) \backslash \{s_0,s_1\}} |V(s) - V(\bP(s))|,\]
a quantity that differs from $\Delta_\ell$ only by the fact that the maximum doesn't take into account the gap between the root and the depth-one ancestor of $\ell$. Then $\tilde{H}^*_\epsilon(\bm\mu)$ is defined similarly as ${H}^*_\epsilon(\bm\mu)$ by
\[\tilde{H}^*_\epsilon(\bm\mu) = \sum_{\ell \in \cL} \frac{1}{\tilde{\Delta}^2_\ell \vee \Delta_*^2 \vee \epsilon^2}.\]

\begin{theorem}\label{thm:MCTSSCLUCB} Let $\delta \leq \min(1,0.1|\cL|)$. LUCB-MCTS using the exploration function~\eqref{def:ExploRate} and selecting the two promising leaves at each round is such that, with probability larger than $1-\delta$, $\left(V(s^*) - V(\hat{s}_\tau) < \epsilon\right)$ and 
\[\tau \leq 16\tilde{H}^*_\epsilon(\bm\mu)\left[ \ln\frac{|\cL|}{\delta} + 3 \ln\ln\frac{|\cL|}{\delta} + 2 \ln\ln\left(16e\tilde{H}^*_\epsilon(\bm\mu) \left(\ln\frac{|\cL|}{\delta}+3\ln\ln\frac{|\cL|}{\delta}\right)\right)\right]. \]
\end{theorem}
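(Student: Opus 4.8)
The plan is to mirror the proof of Theorem~\ref{thm:MCTSSC}, adapting each ingredient to the two-draws-per-round variant of LUCB-MCTS. As before, we work on the event $\cE = \bigcap_{t}\cE_t$, which holds with probability at least $1-\delta$ by Lemma~\ref{lem:PAC}, and on which the recommendation satisfies $V(s^*) - V(\hat s_\tau) < \epsilon$ (this part is unchanged, as the stopping rule is the same up to the restriction $t \in 2\N$). The heart of the argument is an analogue of the key Lemma~\ref{lem:UltraKey}: I would show that if $\cE_t$ holds, $\tau > t$, and at round $t+1$ or $t+2$ the algorithm samples the representative leaf $\ell$ of one of the two promising nodes $\underline b_t$ or $\underline c_t$, then $N_\ell(t) \le 8\beta(N_\ell(t),\delta)/(\tilde\Delta_\ell^2 \vee \Delta_*^2 \vee \epsilon^2)$. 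The crucial difference from the UGapE case is why $\tilde\Delta_\ell$ appears instead of $\Delta_\ell$: since we sample the representative leaves of \emph{both} $\underline b_t$ and $\underline c_t$ unconditionally, the leaf $\ell$ need not be ``under'' the node whose selection is governed by a small $B$-index; in particular the gap between the root $s_0$ and the depth-one ancestor $s_1$ of $\ell$ cannot be used in the lower bound on the confidence-interval width, whereas all deeper gaps along $\Anc(\ell)\setminus\{s_0,s_1\}$ still can, via the nestedness of the confidence intervals (the analogues of Lemmas~\ref{lem:Rep1} and \ref{lem:Rep2}). The $\Delta_*$ and $\epsilon$ terms are recovered exactly as before from the LUCB stopping/selection mechanism: at any non-stopping even round, $U_{\underline c_t}(t) - L_{\underline b_t}(t) \ge \epsilon$, and the definitions of $\underline b_t$ (empirical maximin via $\hat V$) and $\underline c_t$ (optimistic challenger) together with $V(s^*) - V(s^*_2) = \Delta_*$ force the width of $\cI_\ell(t)$ to be at least of order $\max(\Delta_*,\epsilon)$.

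Once this per-leaf bound is in hand, the remaining steps are routine and follow the proof of Theorem~\ref{thm:MCTSSC} verbatim with $H^*_\epsilon$ replaced by $\tilde H^*_\epsilon$ and $\overline\Delta_{\ell,\epsilon}$ replaced by $\tilde\Delta_\ell \vee \Delta_* \vee \epsilon$. Let $\tau_\ell$ be the last even round before $\tau$ at which $\ell$ was sampled; then $N_\ell(\tau-2) \le N_\ell(\tau_\ell)$ satisfies the implicit inequality $N_\ell(\tau_\ell) \le 8\beta(N_\ell(\tau_\ell),\delta)/(\tilde\Delta_\ell^2 \vee \Delta_*^2 \vee \epsilon^2)$, and Lemma~\ref{lem:loglog} with $a_\ell = 8/(\tilde\Delta_\ell^2 \vee \Delta_*^2 \vee \epsilon^2)$ and $C = \ln\frac{|\cL|}{\delta} + 3\ln\ln\frac{|\cL|}{\delta}$ gives an explicit bound on $N_\ell(\tau-2)$. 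Summing over leaves and using $\sum_\ell a_\ell = 8\tilde H^*_\epsilon(\bm\mu)$, together with the fact that this variant makes exactly two draws per round so that $\tau \le 2 + \sum_\ell N_\ell(\tau - 2)$, yields the stated bound; the factor $16 = 2 \times 8$ in front of $\tilde H^*_\epsilon(\bm\mu)$ is precisely the doubling from the two-armed sampling, and the inner $\ln\ln$ term absorbs the $2\ln(1+\ln(a_\ell C))$ contribution after bounding $a_\ell \le 16 e \tilde H^*_\epsilon(\bm\mu)$ crudely, exactly as in the main proof.

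I expect the main obstacle to be the careful bookkeeping in the analogue of Lemma~\ref{lem:UltraKey}, specifically establishing that $\tilde\Delta_\ell$ (and not $\Delta_\ell$) is the right quantity. One must argue that when $L_{t+1} = \ell_{\underline b_t}(t)$ (resp.\ $L_{t+2} = \ell_{\underline c_t}(t)$), the confidence intervals are nested from $\ell$ up to the depth-one node $\underline b_t$ (resp.\ $\underline c_t$) — this is where $\ell$ being a representative leaf is used, exactly as in Lemma~\ref{lem:Rep1} — so that $V(s_k) \in \cI_\ell(t)$ for $k = 1,\dots,D$ but \emph{not} necessarily for $k=0$, and then that the LUCB mechanism (rather than the UGapE $B$-index machinery of Lemma~\ref{lem:CsqceBAI}) still delivers the $\Delta_* \vee \epsilon$ floor. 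This last point requires the specific choice $\hat V_s(t) = \hat\mu_{\ell_s(t)}(t)$ flagged in the definition of LUCB-MCTS: because the empirical value of a node equals the empirical mean of its representative leaf, one can translate statements about $\hat V_{\underline b_t}$ and $U_{\underline c_t}$ into statements about the endpoints of $\cI_\ell(t)$, which is what makes the width lower bound go through.
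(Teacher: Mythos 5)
There is a genuine gap in the core of your argument: your proposed analogue of Lemma~\ref{lem:UltraKey} is stated with the wrong quantifier, and the per-leaf inversion you build on it does not go through. You claim that on $\cE_t \cap (\tau > t)$, \emph{whichever} of the two representative leaves $L_{t+1}=\ell_{\underline{b}_t}(t)$ or $L_{t+2}=\ell_{\underline{c}_t}(t)$ you look at satisfies $N_\ell(t) \le 8\beta(\cdot,\delta)/(\tilde\Delta_\ell^2 \vee \Delta_*^2 \vee \epsilon^2)$. That ``for all $\ell \in \{L_{t+1},L_{t+2}\}$'' statement is false in general: at a non-stopping even round the condition $U_{\underline{c}_t}(t) - L_{\underline{b}_t}(t) \ge \epsilon$ can hold solely because $\underline{c}_t$'s interval is wide, while $\underline{b}_t$ (and its representative leaf) is already estimated to arbitrary precision, so the sampled leaf under $\underline{b}_t$ violates your bound. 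The correct statement — which the paper proves as Lemma~\ref{lem:UltraKeyLUCB}, via a proof by contradiction showing that a threshold $\gamma \in [V(s_2^*),V(s^*)]$ cannot lie outside \emph{both} intervals $\cI_{L_{t+1}}(t)$ and $\cI_{L_{t+2}}(t)$ — only guarantees that \emph{at least one} of the two sampled leaves is undersampled. This is the classical LUCB phenomenon (Lemma 2 of the LUCB paper), and it is why one of the two draws per round may be ``wasted,'' which is also the real source of the factor $16 = 2\times 8$; it is not merely bookkeeping for two draws per round.

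Because of this, your ``let $\tau_\ell$ be the last round before $\tau$ at which $\ell$ was sampled'' step fails: at round $\tau_\ell$ the undersampled leaf may be the \emph{other} one, so you cannot conclude $N_\ell(\tau_\ell) \le 8\beta(N_\ell(\tau_\ell),\delta)/(\tilde\Delta_\ell^2\vee\Delta_*^2\vee\epsilon^2)$ and then invert per leaf with Lemma~\ref{lem:loglog}. The paper instead uses a global counting argument: for a deterministic horizon $T$,
\[
\min(\tau,T) ~=~ 2\sum_{t \in 2\N,\, t\le T} \ind_{(\tau>t)} ~\le~ 2\sum_{t}\sum_{\ell}\ind_{(L_{t+1}=\ell)\cup(L_{t+2}=\ell)}\,\ind_{\left(N_\ell(t)\le 8\beta(T,\delta)/(\tilde\Delta_\ell^2\vee\Delta_*^2\vee\epsilon^2)\right)} ~\le~ 16\,\tilde H^*_\epsilon(\bm\mu)\,\beta(T,\delta),
\]
since for each fixed $\ell$ the event in the last indicator can occur at most $8\beta(T,\delta)/(\tilde\Delta_\ell^2\vee\Delta_*^2\vee\epsilon^2)$ times. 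This yields the single implicit inequality $\tau \le \sup\{t : 16\tilde H^*_\epsilon(\bm\mu)\beta(t,\delta) \ge t\}$, which is inverted once with Lemma~\ref{lem:loglog} using $a = 16\tilde H^*_\epsilon(\bm\mu)$ — note this also explains why the key lemma must be stated with $\beta(t,\delta)$ rather than $\beta(N_\ell(t),\delta)$, and why the final bound has a single global $\ln\ln(16e\tilde H^*_\epsilon(\bm\mu)\,C)$ term rather than a sum of per-leaf $\ln\ln$ terms. Your identification of why $\tilde\Delta_\ell$ replaces $\Delta_\ell$, and of the role of the choice $\hat V_s(t)=\hat\mu_{\ell_s(t)}(t)$, is correct; the missing idea is the existential (not universal) form of the key lemma and the global, rather than per-leaf, inversion it forces.
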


\paragraph{Proof.} The analysis follows the same lines as that of UGapE-MCTS, yet it relies on a slightly different key result, proved in the next section. Letting $\cE_t = \cap_{\ell \in \cL} (\mu_\ell \in \cI_\ell(t))$ as in the proof of Theorem~\ref{thm:MCTSSC} and defining $\cE = \cap_{t\in 2\N^*} \cE_t$, one can state the following. 

\begin{lemma}\label{lem:UltraKeyLUCB}Let $t \in 2\N$. \[\cE_t \cap (\tau > t) \ \ \ \Rightarrow \ \ \exists \ell \in \{L_{t+1},L_{t+2}\}: \ \  N_{\ell}(t) \leq \frac{8\beta(t,\delta)}{\tilde{\Delta}_\ell^2\vee\Delta_*^2\vee\epsilon^2}.\]\end{lemma}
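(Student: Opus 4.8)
The plan is to mirror the structure of the proof of Lemma~\ref{lem:UltraKey} for UGapE-MCTS, but using the LUCB selection rule for $\underline{b}_t$ and $\underline{c}_t$ and exploiting that at an even round $t$ \emph{both} representative leaves $\ell_{\underline{b}_t}(t)$ and $\ell_{\underline{c}_t}(t)$ are sampled (cf.~\eqref{2armsLUCB}). First I would recall the two geometric facts about representative leaves that underlie the UGapE-MCTS analysis and apply equally here: along the path $s_0 s_1 \dots s_D = \ell$ from a node $s_1 \in \cC(s_0)$ to its representative leaf $\ell = \ell_{s_1}(t)$, the confidence intervals are nested (the analogue of Lemma~\ref{lem:Rep1}), so on $\cE_t$ we have $V(s_k) \in \cI_\ell(t)$ for every $k = 1,\dots,D$; and consequently the width of $\cI_\ell(t)$ is lower-bounded by $|V(s_j) - V(s_k)|$ for any two ancestors $s_j, s_k$ on that path, in particular by $\max_{s \in \Anc(\ell)\setminus\{s_0,s_1\}} |V(s) - V(\bP(s))| = \tilde\Delta_\ell$ (the analogue of Lemma~\ref{lem:Rep2}). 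Note the subtlety: because the selected node is a depth-one node $\underline{b}_t$ or $\underline{c}_t$, the path starts at depth one, so the gap between the root and its depth-one ancestor is \emph{not} available as a lower bound on the width — which is exactly why $\tilde\Delta_\ell$ rather than $\Delta_\ell$ appears.

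Next I would bring in the LUCB mechanism to pick up the $\Delta_*$ and $\epsilon$ contributions. Since $\tau > t$, the stopping condition fails: $U_{\underline{c}_t}(t) - L_{\underline{b}_t}(t) \geq \epsilon$. I would argue, as in the consequence-of-BAI lemma (Lemma~\ref{lem:CsqceBAI}), that on $\cE_t$ at least one of the two nodes $\underline{b}_t, \underline{c}_t$ must have a confidence interval of width at least $\Delta_* \vee \epsilon$ (up to the relevant constant): $\underline{b}_t$ is the empirical-value maximizer and $\underline{c}_t$ the optimistic challenger, so if both intervals were too narrow their values would be separated by more than their combined uncertainty, forcing $V(\underline{b}_t) = V(s^*)$, $V(\underline{c}_t) = V(s^*_2)$ and the stopping test to have fired — contradiction. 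For whichever of the two depth-one nodes $s \in \{\underline{b}_t,\underline{c}_t\}$ realizes this, take $\ell = \ell_s(t) \in \{L_{t+1},L_{t+2}\}$ to be its representative leaf; combining with the previous paragraph, the width of $\cI_\ell(t)$ is at least a constant times $\tilde\Delta_\ell \vee \Delta_* \vee \epsilon$. Since $\cI_\ell(t)$ is the Hoeffding interval~\eqref{eq:cis} of half-width $\sqrt{\beta(N_\ell(t),\delta)/(2N_\ell(t))}$, this translates into $N_\ell(t) \leq 8\beta(N_\ell(t),\delta)/(\tilde\Delta_\ell^2 \vee \Delta_*^2 \vee \epsilon^2)$; finally replace $\beta(N_\ell(t),\delta)$ by $\beta(t,\delta)$ using $N_\ell(t) \leq t$ and monotonicity of $\beta$ in its first argument to obtain the stated form.

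The main obstacle I anticipate is the LUCB-specific case analysis in the second paragraph: unlike UGapE, where $\underline{b}_t$ is defined via the index $B_s(t)$ that directly encodes the gap structure, here $\underline{b}_t = \argmax_a \hat V_a(t)$ is the \emph{empirical} maximizer, so one must carefully use the definition $\hat V_s(t) = \hat\mu_{\ell_s(t)}(t)$ (the empirical mean of the representative leaf — the choice the paper flags as crucial) together with the fact that this empirical mean lies in $\cI_{\ell_s(t)}(t) \subseteq \cI_s(t)$ on $\cE_t$, to relate $\hat V_s(t)$ back to $V_s$ and run the separation argument. Getting the constant $8$ (rather than something larger) out of this comparison, and handling the parity/indexing so that the leaf produced genuinely lies in $\{L_{t+1},L_{t+2}\}$ for $t \in 2\N$, are the places where care is needed; the rest is a routine adaptation of the UGapE-MCTS argument.
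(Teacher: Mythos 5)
Your plan follows the paper's proof essentially verbatim: the nested-interval facts (Lemmas~\ref{lem:Rep1}--\ref{lem:Rep2}) supply the $\tilde\Delta_\ell$ term, and the LUCB separation step you sketch is precisely the paper's Lemma~\ref{lem:CsqceBAILUCB}, which fixes $\gamma=\frac{V(s^*)+V(s^*_2)}{2}$ and shows by a four-case contradiction (using, as you anticipate, the crucial choice $\hat V_s(t)=\hat\mu_{\ell_s(t)}(t)$ to rule out the case $\U_{L_{t+1}}(t)<\gamma<\L_{L_{t+2}}(t)$) that one of the two sampled leaves has an interval containing $\gamma$ and of width at least $\epsilon$, hence of width at least $\frac{\Delta_*}{2}\vee\epsilon$ since it also contains $V(s_\ell)$. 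One small correction to your last paragraph: the nesting goes $\cI_s(t)\subseteq\cI_{\ell_s(t)}(t)$ (intervals grow going down the representative path), not the reverse as written.
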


Let $T$ be a deterministic time. We upper bound $\tau$ assuming the event $\cE$ holds. Using Lemma~\ref{lem:UltraKeyLUCB} and the fact that for every even $t$, $(\tau_\delta > t)=(\tau_\delta > t+1)$ by definition of the algorithm, one has
\begin{eqnarray*} 
	\min(\tau,T)  &=& \sum_{t=0}^T \ind_{(\tau > t)} =2\sum_{\substack{t \in 2\N\\ t \leq T}} \ind_{(\tau_\delta > t)} =2 \sum_{\substack{t \in 2\N\\ t \leq T}} \ind_{\left(\exists \ell \in \{L_{t+1},L_{t+2}\} : N_\ell(t) \leq 2 \beta(t,\delta)/(\tilde{\Delta}_\ell^2\vee\Delta_*^2\epsilon^2)\right)} \\
	& \leq & 2 \sum_{\substack{t \in 2\N\\ t \leq T}} \sum_{\ell \in \cL} \ind_{(L_{t+1}=\ell)\cup(L_{t+2}=\ell)}\ind_{(N_\ell(t) \leq 8\beta(T,\delta)/(\tilde{\Delta}_\ell^2\vee\Delta_*^2\epsilon^2))} \\ 
	& \leq & 16\sum_{\ell \in \cL}\frac{1}{\tilde{\Delta}_\ell^2\vee\Delta_*^2\vee\epsilon^2}  \beta(T,\delta) = 16 \tilde{H}^*_\epsilon(\bm\mu) \beta(T,\delta). 
\end{eqnarray*}
For any $T$ such that 
$16 \tilde{H}^*_\epsilon(\bm\mu) \beta(T,\delta) < T$, one has $\min(\tau, T)< T$, which implies $\tau <  T$. Therefore 
\[\tau \leq \sup \{t \in \N :  16 \tilde{H}^*_\epsilon(\bm\mu) \beta(t,\delta) \geq t \}.\]
Just like in the analysis of UGapE-MCTS, the conclusion now follows from Lemma~\ref{lem:loglog}, applied with $a=16\tilde{H}_\epsilon^*(\bm\mu)$ and $C=\ln(|\cL|/\delta)+3\ln\ln(|\cL|/\delta)$, which yields 
\[\tau \leq 16\tilde{H}^*_\epsilon(\bm\mu)\left[ \ln\frac{|\cL|}{\delta} + 3 \ln\ln\frac{|\cL|}{\delta} + 2 \ln\ln\left(16e\tilde{H}^*_\epsilon(\bm\mu) \left(\ln\frac{|\cL|}{\delta}+3\ln\ln\frac{|\cL|}{\delta}\right)\right)\right]. \]
Using that $\cP(\cE) \geq 1 - \delta$ and that the algorithm is correct on $\cE$ yields the conclusion.

\section{Proof of Lemma~\ref{lem:UltraKey} and Lemma~\ref{lem:UltraKeyLUCB}} \label{proof:UltraKey}

We first state Lemma~\ref{lem:Rep2}, that holds for both UGapE and LUCB-MCTS and is a consequence of the definition of the exploration procedure. This result builds on the following lemma, that expresses the fact that along a path from the root to a representative leaf, the confidence intervals are nested.

\begin{lemma}\label{lem:Rep1} Let $t\in\N$ and $s_0,s_1, \dots ,s_D$ be a path from the root down to a leaf $\ell=s_D$.
\[(\ell_{s_1}(t)= s_D) \Rightarrow \left(\forall k = 2 ,\dots , D, \ \ \ \ \cI_{s_{k-1}}(t) \subseteq \cI_{s_k}(t)\right)\]
\end{lemma}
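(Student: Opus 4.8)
The plan is to prove Lemma~\ref{lem:Rep1} by a downward induction on $k$, moving from the root towards the leaf, using the definition of the representative child and the recursive construction of the confidence intervals. The key observation is that for any internal node $s$ and its representative child $c_s(t)$, one has $\cI_{c_s(t)}(t) \supseteq \cI_s(t)$: indeed, if $s$ is a MAX node then $\UCB_s(t) = \max_{c \in \cC(s)} \UCB_c(t) = \UCB_{c_s(t)}(t)$ by the very definition of $c_s(t)$ as the argmax of the upper confidence bounds, while $\LCB_s(t) = \max_{c \in \cC(s)} \LCB_c(t) \ge \LCB_{c_s(t)}(t)$; hence $[\LCB_s(t),\UCB_s(t)] \subseteq [\LCB_{c_s(t)}(t), \UCB_{c_s(t)}(t)]$. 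The symmetric argument works for a MIN node, where $c_s(t)$ is the argmin of the lower confidence bounds: $\LCB_s(t) = \LCB_{c_s(t)}(t)$ and $\UCB_s(t) = \min_c \UCB_c(t) \le \UCB_{c_s(t)}(t)$.

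Concretely, I would first record this single-step nesting fact as the heart of the argument. Then I would argue as follows. Assume the hypothesis $\ell_{s_1}(t) = s_D$. By the recursive definition of the representative leaf, $\ell_{s_1}(t) = \ell_{c_{s_1}(t)}(t)$, and unwinding this recursion down to the leaf forces the path $s_1, s_2, \dots, s_D$ to be exactly the sequence of representative children: that is, $s_{k} = c_{s_{k-1}}(t)$ for every $k = 2, \dots, D$. (If this were not so at some first index $k$, the representative-leaf recursion started at $s_1$ would descend into a different subtree and could never reach $s_D$, since the tree is a tree and the paths would diverge irrevocably.) Applying the single-step nesting fact to each consecutive pair, with $s = s_{k-1}$ and $c_s(t) = s_k$, gives $\cI_{s_{k-1}}(t) \subseteq \cI_{s_k}(t)$ for all $k = 2, \dots, D$, which is the claim.

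The main obstacle, such as it is, is the bookkeeping step that identifies the given path $s_0, s_1, \dots, s_D$ with the chain of representative children emanating from $s_1$. This is intuitively obvious because a path in a tree is determined by its endpoints, but writing it cleanly requires either an explicit induction on $k$ showing $\ell_{s_k}(t) = s_D$ (hence $s_{k+1} = c_{s_k}(t)$ since $\ell_{s_k}(t) = \ell_{c_{s_k}(t)}(t)$ and the representative leaf of $c_{s_k}(t)$ lies in its subtree, so the unique child of $s_k$ on the path to $s_D$ must be $c_{s_k}(t)$), or a remark that the children of $s_k$ have disjoint subtrees. Everything else — the direction of each inequality on the $\UCB$ and $\LCB$ sides — is a direct, mechanical consequence of the argmax/argmin in the definition of $c_s(t)$ and the $\max$/$\min$ in the definition of $\UCB_s(t), \LCB_s(t)$, handled by a short case split on whether each $s_{k-1}$ is a MAX or a MIN node.
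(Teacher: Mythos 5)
Your proof is correct and follows essentially the same route as the paper's: identify the path $s_1,\dots,s_D$ with the chain of representative children, then verify the one-step nesting $\cI_{s_{k-1}}(t)\subseteq\cI_{s_k}(t)$ by the MAX/MIN case split, where one endpoint of the parent's interval coincides with the representative child's (by the argmax/argmin defining $c_s(t)$) and the other is at least as tight (by the max/min defining the parent's interval). Your extra care on the bookkeeping step (that $\ell_{s_1}(t)=s_D$ forces $s_k=c_{s_{k-1}}(t)$, via disjointness of children's subtrees) is a detail the paper simply asserts.
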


\begin{lemma}\label{lem:Rep2} Let $t\in\N$ and $s_0,s_1, \dots ,s_D$ be a path from the root down to a leaf $\ell=s_D$. If $\cE_t$ holds and $\ell$ is selected at round $t+1$ (UGapE) or if $t$ is even and $\ell \in \{L_{t+1},L_{t+2}\}$ (LUCB), then 
\[\sqrt{\frac{2\beta(N_\ell(t),\delta)}{N_\ell(t)}} \geq \max_{k=2 \dots D} |V(s_k) - V(s_{k-1})|.\]
\end{lemma}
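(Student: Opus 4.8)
The plan is to recognize the left-hand side as exactly the width of the confidence interval $\cI_\ell(t)$ attached to the sampled leaf, and then to show that $\cI_\ell(t)$ simultaneously contains the values $V(s_1),\dots,V(s_D)$ of all nodes along the path below the depth-one ancestor, so that each consecutive gap $|V(s_k)-V(s_{k-1})|$ is bounded by that width.

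First I would pin down which path $s_0,s_1,\dots,s_D$ is meant. In UGapE-MCTS, selecting $L_{t+1}=\ell$ means $\ell=\ell_{R_{t+1}}(t)$ for the depth-one node $R_{t+1}$; taking $s_1=R_{t+1}$, the path down to $\ell$ is the one that always follows the representative child, i.e.\ $s_k=c_{s_{k-1}}(t)$ for $k\ge 2$. (For LUCB-MCTS the first line is the only change: $s_1=\underline{b}_t$ if $\ell=L_{t+1}$ and $s_1=\underline{c}_t$ if $\ell=L_{t+2}$; the rest of the argument is verbatim.) Since then $\ell_{s_1}(t)=s_D$, Lemma~\ref{lem:Rep1} applies and gives the nested chain
\[
\cI_{s_1}(t)\subseteq\cI_{s_2}(t)\subseteq\cdots\subseteq\cI_{s_D}(t)=\cI_\ell(t),
\]
hence $\cI_{s_k}(t)\subseteq\cI_\ell(t)$ for every $k=1,\dots,D$.

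Next, on $\cE_t$ Proposition~\ref{prop:CorrectCI} yields $V_s\in\cI_s(t)$ for every node $s$, in particular $V(s_k)\in\cI_{s_k}(t)\subseteq\cI_\ell(t)$ for $k=1,\dots,D$. Therefore, for each $k=2,\dots,D$, both $V(s_k)$ and $V(s_{k-1})$ lie in $\cI_\ell(t)=[\LCB_\ell(t),\UCB_\ell(t)]$, so
\[
|V(s_k)-V(s_{k-1})|\ \le\ \UCB_\ell(t)-\LCB_\ell(t)\ =\ 2\sqrt{\frac{\beta(N_\ell(t),\delta)}{2N_\ell(t)}}\ =\ \sqrt{\frac{2\beta(N_\ell(t),\delta)}{N_\ell(t)}},
\]
using the explicit form~\eqref{eq:cis} of the leaf confidence intervals. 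Taking the maximum over $k\in\{2,\dots,D\}$ gives the stated bound (the case $D=1$ being vacuous).

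Given Lemma~\ref{lem:Rep1}, this is essentially a one-line argument, so the real work lies in Lemma~\ref{lem:Rep1} itself --- the nestedness of the confidence intervals along a representative path --- which is established separately by induction on depth, using that the representative child of a MAX node realizes the parent's UCB while its LCB sits inside the parent's (max\nobreakdash-)LCB, and dually for MIN nodes. The only points demanding care here are making sure the path appearing in the statement is the representative-child path (so that Lemma~\ref{lem:Rep1} is applicable) and treating the two sampling rules (UGapE's single $R_{t+1}$ versus LUCB's pair $\{L_{t+1},L_{t+2}\}$) uniformly, which affects only the choice of $s_1$.
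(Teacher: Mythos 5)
Your proof is correct and follows essentially the same route as the paper: nestedness of the intervals along the representative path (Lemma~\ref{lem:Rep1}) plus validity on $\cE_t$ (Proposition~\ref{prop:CorrectCI}) place all the $V(s_k)$ inside $\cI_\ell(t)$, whose width is exactly the left-hand side. The paper's appendix writes this slightly more circuitously (bounding $\max_{s'\in\cC(s_{k-1})}V(s')-V(s_k)$ via the children's UCBs before specializing to the parent's value), but the underlying argument is identical to yours.
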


\paragraph{UGapE-MCTS: proof of Lemma~\ref{lem:UltraKey}.} The following lemma is specific to UGapE-MCTS. We let $s_0,s_1,\dots,s_D$ be a path down to a leaf $\ell = s_{D}$. 

\begin{lemma}\label{lem:CsqceBAI} Let $t \in \N$. If $\cE_t$ holds and UGapE-MCTS has not stopped after $t$ observations, that is $(\tau > t)$, 
\[(L_{t+1} = \ell) \Rightarrow  \left(\sqrt{\frac{8\beta(N_\ell(t),\delta)}{N_\ell(t)}} \geq \max\left(\Delta_* , V(s_0)-V(s_1), \epsilon \right)\right)\]
 \end{lemma}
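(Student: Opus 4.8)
The plan is to combine the geometric/structural fact from Lemma~\ref{lem:Rep2} with the algebra of the UGapE indices $B_s$ and the stopping condition. Suppose $\cE_t$ holds, UGapE-MCTS has not stopped after $t$ observations, and $L_{t+1} = \ell$, where $\ell = s_D$ lies below the depth-one node $s_1$. By the sampling rule, $\ell$ is the representative leaf of $R_{t+1}$, where $R_{t+1} \in \{\underline b_t, \underline c_t\}$ is the one of the two promising depth-one nodes with the wider confidence interval; in particular $R_{t+1}=s_1$ and $U_{s_1}(t)-L_{s_1}(t) = \max(U_{\underline b_t}(t)-L_{\underline b_t}(t),\ U_{\underline c_t}(t)-L_{\underline c_t}(t))$. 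Since $\ell = \ell_{s_1}(t)$, Lemma~\ref{lem:Rep2} already gives $\sqrt{2\beta(N_\ell(t),\delta)/N_\ell(t)} \ge \max_{k=2,\dots,D} |V(s_k)-V(s_{k-1})|$, so two of the three terms we must dominate --- namely everything ``strictly below depth one'' --- are handled, and it remains only to show $\sqrt{8\beta(N_\ell(t),\delta)/N_\ell(t)}$ dominates $\Delta_*$, $|V(s_0)-V(s_1)|$ and $\epsilon$. Note that by \eqref{eq:cis} the width of the leaf interval is $U_\ell(t)-L_\ell(t) = \sqrt{2\beta(N_\ell(t),\delta)/N_\ell(t)}$, and by Lemma~\ref{lem:Rep1} this is at least $U_{s_1}(t)-L_{s_1}(t)$, so it suffices to lower bound the latter quantity by $\tfrac12\max(\Delta_*, |V(s_0)-V(s_1)|, \epsilon)$.

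First I would dispatch the $\epsilon$ term: since $(\tau > t)$, the stopping condition fails, i.e. $U_{\underline c_t}(t) - L_{\underline b_t}(t) \ge \epsilon$. Writing this as $(U_{\underline c_t}(t)-L_{\underline c_t}(t)) + (L_{\underline c_t}(t)-L_{\underline b_t}(t)) \ge \epsilon$ is not quite the shape we want, so instead I would argue: if $\underline b_t \ne s_1$ then $R_{t+1} = \underline c_t = s_1$, and one checks $U_{\underline c_t}(t) - L_{\underline b_t}(t) \le U_{\underline c_t}(t) - L_{\underline c_t}(t) = U_{s_1}(t)-L_{s_1}(t)$ is false in general; the cleaner route is to observe that since $R_{t+1}$ has the larger interval, $U_{s_1}(t)-L_{s_1}(t) \ge \tfrac12\big[(U_{\underline b_t}(t)-L_{\underline b_t}(t)) + (U_{\underline c_t}(t)-L_{\underline c_t}(t))\big] \ge \tfrac12(U_{\underline c_t}(t)-L_{\underline b_t}(t)) \ge \tfrac12\epsilon$ (using $L_{\underline b_t}\le U_{\underline b_t}$ and $L_{\underline c_t}\le U_{\underline c_t}$), hence $U_\ell(t)-L_\ell(t) \ge \tfrac12\epsilon$, giving $\sqrt{8\beta/N_\ell(t)} \ge 2(U_\ell(t)-L_\ell(t)) \ge \epsilon$.

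Next, for $\Delta_*$ and $|V(s_0)-V(s_1)|$, I would use the UGapE index machinery together with $\cE_t$. On $\cE_t$ we have $V_s \in \cI_s(t)$ for every node (Proposition~\ref{prop:CorrectCI}). The definition of $\underline b_t = \argmin_a B_a(t)$ and $\underline c_t$ gives a ``best vs. challenger'' control: $B_{\underline b_t}(t) \le B_{s^*}(t)$ and $B_{s^*}(t) = \max_{s'\ne s^*}U_{s'}(t) - L_{s^*}(t)$, which on $\cE_t$ is at most $\max_{s'\ne s^*} U_{s'}(t) - V(s^*)$; combining with the definition of $\underline c_t$ as the maximal-$U$ challenger to $\underline b_t$, a short case analysis (whether $s_1 = \underline b_t$ or $s_1 = \underline c_t$) lets one show $U_{s_1}(t) - L_{s_1}(t)$ is at least of order $\Delta_* = V(s^*)-V(s^*_2)$ and at least of order $|V(s_0)-V(s_1)| = |\max_c V_c - V(s_1)|$, using that $V(s_1)\in\cI_{s_1}(t)$, $V(s^*)\in\cI_{s^*}(t)$, and that the intervals of $\underline b_t$ and $\underline c_t$ both exceed half the width of the selected one. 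The main obstacle I anticipate is exactly this case analysis on the identity of $R_{t+1}$ among $\{\underline b_t,\underline c_t\}$: one must carefully track which depth-one node is $s_1$ and feed the UGapE index inequalities through $\cE_t$ to convert the ``plausibility'' statements $U,L$ into statements about the true values $V(s^*), V(s^*_2), V(s_1)$. Once the bound $U_\ell(t)-L_\ell(t) = \sqrt{2\beta(N_\ell(t),\delta)/N_\ell(t)} \ge \tfrac12\max(\Delta_*,|V(s_0)-V(s_1)|,\epsilon)$ is established, multiplying through by $2$ yields $\sqrt{8\beta(N_\ell(t),\delta)/N_\ell(t)} \ge \max(\Delta_*,|V(s_0)-V(s_1)|,\epsilon)$, which is the claim. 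Lemma~\ref{lem:UltraKey} then follows by taking the max of this with the Lemma~\ref{lem:Rep2} bound and rearranging $\sqrt{8\beta(N_\ell(t),\delta)/N_\ell(t)} \ge \overline\Delta_{\ell,\epsilon}$ into $N_\ell(t) \le 8\beta(N_\ell(t),\delta)/\overline\Delta_{\ell,\epsilon}^2$.
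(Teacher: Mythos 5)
Your overall reduction is the right one --- by \eqref{eq:cis} and Lemma~\ref{lem:Rep1} it suffices to lower bound $2\left(U_{R_{t+1}}(t)-L_{R_{t+1}}(t)\right)$ by $\max\left(\Delta_*,\,V(s_0)-V(s_1),\,\epsilon\right)$, and this is indeed how the paper proceeds --- but the proposal stops exactly where the work of the lemma begins. The step you defer to ``a short case analysis'' is the entire content of the statement: the paper's proof requires four separate cases (whether $s^*\in\Anc(\ell)$, crossed with whether $R_{t+1}=\underline{b}_t$ or $\underline{c}_t$), and in each case a different chain of inequalities combining the non-stopping condition $U_{\underline{c}_t}(t)-L_{\underline{b}_t}(t)>\epsilon$, the defining property $U_{\underline{c}_t}(t)\ge U_{s^*}(t)\ge V(s^*)$ when $\underline{b}_t\neq s^*$, the event $\cE_t$, and an ordering property of the UGapE pair (Lemma~\ref{lem:IntermediateLemma}: if $R_{t+1}=\underline{b}_t$ then $U_{\underline{c}_t}(t)\le U_{\underline{b}_t}(t)$, and if $R_{t+1}=\underline{c}_t$ then $L_{\underline{c}_t}(t)\le L_{\underline{b}_t}(t)$, proved by contradiction from the minimality of $B_{\underline{b}_t}(t)$). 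You name the ingredients but execute none of these cases, and you yourself flag this step as ``the main obstacle''; as written, the proposal therefore does not establish the lemma.

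Two of the auxiliary claims you do make are also off. First, for the $\epsilon$ term, the inequality $\left(U_{\underline{b}_t}(t)-L_{\underline{b}_t}(t)\right)+\left(U_{\underline{c}_t}(t)-L_{\underline{c}_t}(t)\right)\ge U_{\underline{c}_t}(t)-L_{\underline{b}_t}(t)$ is equivalent to $U_{\underline{b}_t}(t)\ge L_{\underline{c}_t}(t)$, which does \emph{not} follow from ``$L\le U$'' for each node separately (a priori the two intervals could be disjoint with $\underline{c}_t$'s lying entirely above $\underline{b}_t$'s); it is true, but only via the UGapE-specific contradiction argument that is precisely the piece you skip. Second, the direction of your bound on $B_{s^*}(t)$ is reversed: on $\cE_t$ one has $L_{s^*}(t)\le V(s^*)$, hence $B_{s^*}(t)=\max_{s'\neq s^*}U_{s'}(t)-L_{s^*}(t)\ge\max_{s'\neq s^*}U_{s'}(t)-V(s^*)$, not ``at most''; and the claim that the intervals of $\underline{b}_t$ and $\underline{c}_t$ ``both exceed half the width of the selected one'' is false in general, since the non-selected node may have an arbitrarily narrow interval. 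These slips are symptomatic: the route through the indices $B_s(t)$ has not actually been checked, and the concrete per-case inequalities are what the lemma is really about.
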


Putting together Lemma~\ref{lem:Rep2} and Lemma~\ref{lem:CsqceBAI} and using that 
\[\Delta_\ell = \max\left(V(s_0)-V(s_1),\max_{k=2\dots D} |V(s_k) - V(s_{k-1})|\right)\]
one obtains 
\[\cE_t \cap (\tau > t) \cap (L_{t+1} = \ell) \ \Rightarrow \   \left(\sqrt{\frac{8\beta(N_\ell(t),\delta)}{N_\ell(t)}} \geq \max\left( \Delta_\ell , \Delta_*, \epsilon\right)\right),\]
which yields the proof of Lemma~\ref{lem:UltraKey} by inverting the bound.

\paragraph{LUCB-MCTS: proof of Lemma~\ref{lem:UltraKeyLUCB}.} The following lemma is specific to the LUCB-MCTS algorithm. It can be viewed as a generalization of Lemma 2 in \cite{Shivaramal12}. 

\begin{lemma}\label{lem:CsqceBAILUCB} Let $t \in 2\N$ and let $\gamma \in [V(s_2^*),V(s^*)]$. If $\cE_t$ holds and LUCB-MCTS has not stopped after $t$ observations, that is $(\tau > t)$, then
\[ \exists \ell \in \{L_{t+1},L_{t+2}\}: (\gamma \in \cI_\ell(t)) \cap \left( \sqrt{\frac{2\beta(t,\delta)}{N_\ell(t)}} \geq \epsilon\right).\]
 \end{lemma}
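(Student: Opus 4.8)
The goal is to prove Lemma~\ref{lem:CsqceBAILUCB}: on the event $\cE_t$ with $t$ even and $\tau > t$, for any threshold $\gamma \in [V(s_2^*), V(s^*)]$, at least one of the two sampled leaves $L_{t+1} = \ell_{\underline b_t}(t)$ or $L_{t+2} = \ell_{\underline c_t}(t)$ has $\gamma \in \cI_\ell(t)$ and $\sqrt{2\beta(t,\delta)/N_\ell(t)} \ge \epsilon$. The natural route is to mimic the classical LUCB argument (Lemma~2 of \cite{Shivaramal12}), lifted from arms to depth-one nodes via the confidence intervals on node values from Proposition~\ref{prop:CorrectCI}, and then pushed down from the chosen depth-one node to its representative leaf.

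\textbf{Step 1: reduce from leaves to depth-one nodes.} Since $L_{t+1} = \ell_{\underline b_t}(t)$ is the representative leaf of $\underline b_t$, the chain from $\underline b_t$ down to $L_{t+1}$ is a path of nested confidence intervals by Lemma~\ref{lem:Rep1}, so $\cI_{\underline b_t}(t) \subseteq \cI_{L_{t+1}}(t)$; likewise $\cI_{\underline c_t}(t) \subseteq \cI_{L_{t+2}}(t)$. Hence it suffices to show that one of $\underline b_t, \underline c_t$ satisfies $\gamma \in \cI_{\cdot}(t)$ and has confidence-interval half-width at least $\epsilon/\sqrt{2}$ — more precisely, that $\sqrt{2\beta(t,\delta)/N_{\ell}(t)} \ge \epsilon$ for the corresponding representative leaf $\ell$, which follows once $U_{\cdot}(t) - L_{\cdot}(t) \ge \epsilon$ at that depth-one node, again using the nesting (the representative leaf's interval is the widest along the path, and its width is exactly $2\sqrt{\beta(N_\ell(t),\delta)/(2N_\ell(t))} = \sqrt{2\beta(N_\ell(t),\delta)/N_\ell(t)}$; one must be slightly careful that $\beta$ is evaluated at $N_\ell(t)$ versus at $t$, but $N_\ell(t) \le t$ and $\beta$ is increasing, so this direction is fine).

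\textbf{Step 2: the LUCB case split at the root.} Work on $\cE_t$, so by Proposition~\ref{prop:CorrectCI} every node value lies in its interval: $V(s) \in \cI_s(t)$ for all $s$. By definition $\underline b_t = \argmax_{a} \hat V_a(t)$ and $\underline c_t = \argmax_{b \ne \underline b_t} U_b(t)$. Since $\tau > t$ we have $U_{\underline c_t}(t) - L_{\underline b_t}(t) \ge \epsilon$. Now split on whether $\gamma \ge L_{\underline b_t}(t)$.
\emph{Case A: $\gamma \ge L_{\underline b_t}(t)$.} Then I claim $\gamma \in \cI_{\underline b_t}(t)$: the lower bound holds by assumption, and for the upper bound, $\gamma \le V(s^*) \le U_{s^*}(t)$ and — if $s^* = \underline b_t$ we are done, otherwise $s^*$ is a competitor of $\underline b_t$ so $U_{\underline b_t}(t)$... here one needs $U_{\underline b_t}(t) \ge V(s^*)$; this is where the specific choice $\underline b_t = \argmax \hat V_a$ together with $\hat V_a(t) = \hat\mu_{\ell_a(t)}(t)$ and the fact that $\ell_a(t)$ is the representative (optimistic/pessimistic) leaf enters — one shows $\hat V_{\underline b_t}(t) \ge \hat V_{s^*}(t)$ and converts this via the confidence bound at the relevant leaves into $U_{\underline b_t}(t) \ge V(s^*)$. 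Combined with $U_{\underline c_t}(t) - L_{\underline b_t}(t) \ge \epsilon$ and $\gamma \ge L_{\underline b_t}(t)$, this forces $\underline b_t$'s interval width $\ge \epsilon$, giving the claim with $\ell = L_{t+1}$.
\emph{Case B: $\gamma < L_{\underline b_t}(t)$.} Then I argue $\gamma \in \cI_{\underline c_t}(t)$ with wide interval via $\underline c_t$. Since $\gamma \ge V(s_2^*)$, at least one depth-one node other than $\underline b_t$ has value $\ge \gamma$ (namely $s^*$ if $s^* \ne \underline b_t$, or $s_2^*$ if $s^* = \underline b_t$, using $V(s_2^*) \le \gamma \le V(s^*)$ — in fact in the subcase $s^* \neq \underline b_t$ we get $V(s^*) \ge V(s_2^*)$, need $\ge \gamma$... actually $\gamma \le V(s^*)$ directly), call it $s'$; then $U_{\underline c_t}(t) \ge U_{s'}(t) \ge V(s') \ge \gamma$, so $\gamma \le U_{\underline c_t}(t)$. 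For the lower bound, $L_{\underline c_t}(t) \le V(\underline c_t)$... this needs more care: one wants $L_{\underline c_t}(t) \le \gamma$; since $\gamma < L_{\underline b_t}(t)$ and $\underline b_t$ was the empirical-best, $L_{\underline c_t}(t) \le U_{\underline c_t}(t) - \epsilon$... hmm — more robustly, if $L_{\underline c_t}(t) > \gamma$ then $L_{\underline c_t}(t) > V(s_2^*) \ge$ value of every non-$s^*$ node, so $\underline c_t$ must be $s^*$; but then $V(\underline b_t) \ge L_{\underline b_t}(t) > \gamma \ge V(s_2^*)$ puts two nodes above $V(s_2^*)$, forcing $\underline b_t = s^*$ too, a contradiction since $\underline c_t \ne \underline b_t$. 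So $L_{\underline c_t}(t) \le \gamma$, giving $\gamma \in \cI_{\underline c_t}(t)$; and then $U_{\underline c_t}(t) - L_{\underline c_t}(t) \ge (\gamma) - ... \ge U_{\underline c_t}(t) - L_{\underline b_t}(t) \ge \epsilon$ using $L_{\underline c_t}(t) \le \gamma < L_{\underline b_t}(t)$, wait that gives $U - L_{\underline c_t} \ge U - L_{\underline b_t} \ge \epsilon$. This yields the claim with $\ell = L_{t+2}$.

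\textbf{Step 3: finish.} In either case we have identified $\ell \in \{L_{t+1}, L_{t+2}\}$ with $\gamma \in \cI_\ell(t)$ (pushing $\gamma \in \cI_{\cdot}(t)$ from the depth-one node down to the representative leaf using the nesting of Lemma~\ref{lem:Rep1} in the \emph{outward} direction $\cI_{\text{depth-one}} \subseteq \cI_{\text{leaf}}$) and with interval half-width $\ge \epsilon/2$ at the depth-one node, hence $\sqrt{2\beta(t,\delta)/N_\ell(t)} \ge U_\ell(t) - L_\ell(t) \ge U_{\text{depth-one}}(t) - L_{\text{depth-one}}(t) \ge \epsilon$, where the middle inequality is the nesting and the last uses the case analysis; one closes the $\beta(N_\ell(t))$ versus $\beta(t)$ gap by monotonicity of $\beta$ and $N_\ell(t) \le t$.

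\textbf{Main obstacle.} The delicate point is Step 2's bookkeeping that ties the empirical-best choice $\underline b_t = \argmax_a \hat V_a(t)$ to a guarantee on $U_{\underline b_t}(t)$ relative to $V(s^*)$, and symmetrically controls $L_{\underline c_t}(t)$: unlike UGapE (where $\underline b_t$ is defined directly through the index $B_a$), here the link passes through the identity $\hat V_s(t) = \hat\mu_{\ell_s(t)}(t)$ and the specific optimistic/pessimistic structure of the representative leaf, which is exactly why the paper flags that "our choice [of $\hat V_s$] is crucial for the analysis." Getting the case split airtight — especially handling the boundary subcases where $s^* = \underline b_t$ versus $s^* = \underline c_t$ and the role of $\gamma \in [V(s_2^*), V(s^*)]$ — is the real work; the reduction to leaves via nested intervals (Step 1) and the inversion of the width bound (Step 3) are routine given Lemma~\ref{lem:Rep1}.
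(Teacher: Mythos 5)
Your blueprint (a LUCB-style threshold argument around $\gamma$, exploiting the empirical ordering induced by $\underline{b}_t=\argmax{a}\hat V_a(t)$ and pushing containment and width from depth-one nodes down to the representative leaves via the nesting of Lemma~\ref{lem:Rep1}) is the right family of ideas, and your Case~B is essentially sound. But Case~A has a genuine gap: it rests on the claim that $\U_{\underline{b}_t}(t)\ge V(s^*)$ (so that $\gamma\le V(s^*)\le \U_{\underline{b}_t}(t)$), which you propose to derive from $\hat V_{\underline{b}_t}(t)\ge \hat V_{s^*}(t)$. This implication is false in general: from $\hat\mu_{L_{t+1}}(t)\ge\hat\mu_{\ell_{s^*}(t)}(t)$ and $\hat\mu_{\ell_{s^*}(t)}(t)\ge V(s^*)-r_{\ell_{s^*}(t)}$ one only gets $\U_{L_{t+1}}(t)\ge V(s^*)-r_{\ell_{s^*}(t)}+r_{L_{t+1}}$, which falls short of $V(s^*)$ whenever $\underline{b}_t$'s leaf has been sampled more (smaller radius) than $s^*$'s. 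A concrete two-armed instance with $\mu_{s^*}=0.9$, $\hat\mu_{s^*}=0.62$, $r_{s^*}=0.3$ and $\mu_{\underline{b}_t}=0.6$, $\hat\mu_{\underline{b}_t}=0.63$, $r_{\underline{b}_t}=0.05$ satisfies $\cE_t$, puts you in Case~A for $\gamma=0.8$, yet $\gamma\notin\cI_{\underline{b}_t}(t)$; the lemma survives only because $\gamma\in\cI_{\underline{c}_t}(t)$, a fallback your constructive split does not provide. Relatedly, your width bound in Case~A (``this forces $\underline{b}_t$'s interval width $\ge\epsilon$'') is asserted rather than derived; the non-stopping condition controls $\U_{\underline{c}_t}(t)-\L_{\underline{b}_t}(t)$, not the width of a single interval, and converting one into the other requires nontrivial algebra.

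The paper avoids both problems by arguing by contradiction directly at the leaf level, where the intervals are symmetric around the empirical means. It first shows $\gamma\in\cI_{L_{t+1}}(t)\cup\cI_{L_{t+2}}(t)$ by ruling out four configurations; the decisive one is $\U_{L_{t+1}}(t)<\gamma<\L_{L_{t+2}}(t)$, which would force $\hat\mu_{L_{t+1}}(t)<\hat\mu_{L_{t+2}}(t)$ and contradict the definition of $\underline{b}_t$ --- this is exactly the mechanism that rescues the situation your Case~A cannot handle. It then separately proves three statements establishing the width bound for whichever leaf contains $\gamma$, each time using the explicit form \eqref{eq:cis} of the leaf confidence intervals together with $\hat\mu_{L_{t+2}}(t)\le\hat\mu_{L_{t+1}}(t)$. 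To repair your proof you would need to replace the constructive split of Step~2 by this exhaustive contradiction at the leaf level (or add the missing fallback cases and the width algebra), at which point you have reproduced the paper's argument.
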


Choosing $\gamma = \frac{V(s^*) + V(s^*_2)}{2}$ and letting $s_\ell$ be the depth-one ancestor of $\ell$, on $\cE_t$ it holds that $V(s_\ell) \in \cI_\ell(t)$ (by Lemma~\ref{lem:Rep1}) and   
\begin{eqnarray*}
(\gamma \in \cI_\ell(t)) & \Rightarrow & \left(\sqrt{\frac{2\beta(t,\delta)}{N_\ell(t)}}\geq |V(s_\ell) - \gamma |\right) \\
& \Rightarrow & \left(\sqrt{\frac{2\beta(t,\delta)}{N_\ell(t)}}\geq \frac{V(s^*) - V(s_2^*)}{2}\right). 
\end{eqnarray*}
Recall $\Delta_* = V(s^*) - V(s_2^*)$. By Lemma~\ref{lem:CsqceBAILUCB}, on $\cE_t \cap (\tau >t)$, there exists $\ell \in \{L_{t+1},L_{t+2}\}$ such that 
\begin{equation}\sqrt{\frac{8\beta(t,\delta)}{N_\ell(t)}}\geq \max({\Delta_*},\epsilon).\label{eq:Int1}\end{equation}

Moreover, noting that for a leaf $\ell = s_0,s_1,\dots,s_D$, 
\[\tilde{\Delta}_\ell = \max_{k = 2,\dots,D} |V(s_k) - V(s_{k-1})|\]
a consequence of Lemma~\ref{lem:Rep2} is that for $\ell \in \{L_{t+1},L_{t+2}\}$,
\begin{equation}\sqrt{\frac{2\beta(t,\delta)}{N_\ell(t)}} \geq \tilde{\Delta}_\ell.\label{eq:Int2}\end{equation}

Combining \eqref{eq:Int1} and \eqref{eq:Int2} yields
\[\cE_t \cap (\tau > t) \ \Rightarrow \ \exists \ell \in \{L_{t+1},L_{t+2}\}: \ \  \sqrt{\frac{8\beta(t,\delta)}{N_\ell(t)}} \geq \max\left( \tilde{\Delta}_\ell , \Delta_*, \epsilon\right),\]
which yields the proof of Lemma~\ref{lem:UltraKeyLUCB} by inverting the bound. 

\subsection{Proof of Lemma~\ref{lem:Rep1}.} 

The leaf $\ell$ is the representative of the depth 1 node $s_1$, therefore the path $s_1,\dots,s_D$ is such that $c_{s_{k-1}}(t) = s_k$ for all $k=2,\dots,D$. Using the way the representative are build, we now show that
\[\forall k \in \{2,\dots,D\}, \ \ \ \cI_{s_{k-1}}(t) \subseteq \cI_{s_{k}}(t).\]
If $s_{k-1}$ is a MAX node, $\U_{s_{k-1}}(t) = \U_{s_k}(t)$ by definition and $\L_{s_{k-1}}(t) = \max_{s \in \cC(s_{k-1})} \L_{s}(t) \geq \L_{s_k}(t)$. Similarly, if $s_{k-1}$ is a MIN node, $\L_{s_{k-1}}(t) = \L_{s_k}(t)$ by definition and $\U_{s_{k-1}}(t) = \min_{s \in \cC(s_{k-1})} \U_{s}(t) \leq \U_{s_k}(t)$, so that in both cases $\cI_{s_{k-1}}(t) \subseteq \cI_{s_{k}}(t)$.

\subsection{Proof of Lemma~\ref{lem:Rep2}.} 
Let $\ell \in \cL$ be a leaf that is sampled based on the information available at round $t$. In particular, as $\ell$ is a representative leaf of the depth 1 node $s_1$, the path $s_1,\dots,s_D$ is such that $c_{s_{k-1}}(t) = s_k$ for all $k=2,\dots,D$. Let $k =2,\dots,D$. If $s_{k-1} \in \{2,\dots,D\}$ is a MAX node, it holds by definition of the representative children that, for all $s' \in \cC(s_{k-1})$, 
\[\UCB_{s_{k}}(t) \geq \UCB_{s'}(t).\]
Now, from Lemma~\ref{lem:Rep1} one has $\U_\ell(t) \geq \U_{s_k}(t)$ and from Proposition~\ref{prop:CorrectCI} as $\cE_t$ holds, one has 
\begin{equation}
\forall s \in \cT, \ \  V_s \in \cI_s(t). \label{Useful:Correctness}
\end{equation}
Using these two ingredients yields
\begin{eqnarray*}
\UCB_{\ell}(t) & \geq & V(s') \\
\LCB_{\ell}(t) + 2\sqrt{\frac{\beta(N_\ell(t),\delta)}{2N_\ell(t)}} & \geq & V(s') \\
\LCB_{s_{k}}(t) + 2\sqrt{\frac{\beta(N_\ell(t),\delta)}{2N_\ell(t)}} & \geq & V(s') \\
V(s_k) + 2\sqrt{\frac{\beta(N_\ell(t),\delta)}{2N_\ell(t)}} & \geq & V(s').
\end{eqnarray*}
Thus 
\begin{eqnarray*}
 \sqrt{\frac{2\beta(N_\ell(t),\delta)}{N_\ell(t)}}  &\geq&  \max_{s' \in \cC(s_{k-1})} V(s') - V(s_k) = V(s_{k-1}) - V(s_k) \geq 0. 
\end{eqnarray*}
If $s_{k-1}$ is a MIN node, a similar reasoning show that 
\begin{eqnarray*}
 \sqrt{\frac{2\beta(N_\ell(t),\delta)}{N_\ell(t)}}  &\geq&  V(s_{k}) - V(s_{k-1}) \geq 0. 
\end{eqnarray*}
Putting everything together yields 
\begin{eqnarray*}
 \sqrt{\frac{2\beta(N_\ell(t),\delta)}{N_\ell(t)}}  &\geq&  \max_{k=2,\dots,D} |V(s_{k}) - V(s_{k-1})|. 
\end{eqnarray*}

\subsection{Proof of Lemma~\ref{lem:CsqceBAI}.}

We first prove the following intermediate result, that generalizes Lemma 4 in \cite{Gabillon:al12}.

\begin{lemma}\label{lem:IntermediateLemma} For all $t \in \N^*$, the following holds
\begin{eqnarray*}
 \text{if} \ R_{t+1} = \underline{b}_t & \text{then} & U_{\underline{c}_t}(t) \leq U_{\underline{b}_t}(t) \\
 \text{if} \ R_{t+1} = \underline{c}_t & \text{then} & L_{\underline{c}_t}(t) \leq L_{\underline{b}_t}(t) \\
\end{eqnarray*} 
\end{lemma}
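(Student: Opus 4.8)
The plan is to derive both implications from the single structural fact that $\underline{b}_t$ is a minimiser of the index $B_a(t)$ over $a\in\cC(s_0)$, so that in particular $B_{\underline{b}_t}(t)\le B_{\underline{c}_t}(t)$. First I would rewrite both sides of this inequality. Since $\underline{c}_t$ is by definition the arm of largest UCB among $\cC(s_0)\backslash\{\underline{b}_t\}$, we have $\max_{s'\in\cC(s_0)\backslash\{\underline{b}_t\}}U_{s'}(t)=U_{\underline{c}_t}(t)$, hence $B_{\underline{b}_t}(t)=U_{\underline{c}_t}(t)-L_{\underline{b}_t}(t)$; and $B_{\underline{c}_t}(t)=M_t-L_{\underline{c}_t}(t)$, where $M_t:=\max_{s'\in\cC(s_0)\backslash\{\underline{c}_t\}}U_{s'}(t)$. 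The one elementary observation about $M_t$ that the whole argument rests on is that it is the maximum of $U_{\underline{b}_t}(t)$ and of the $U_{s'}(t)$ for $s'$ distinct from both $\underline{b}_t$ and $\underline{c}_t$, and each such $U_{s'}(t)\le U_{\underline{c}_t}(t)$; therefore
\[
U_{\underline{b}_t}(t)~\le~ M_t ~\le~ U_{\underline{b}_t}(t)\vee U_{\underline{c}_t}(t).
\]
With this, $B_{\underline{b}_t}(t)\le B_{\underline{c}_t}(t)$ becomes the displayed inequality $U_{\underline{c}_t}(t)-L_{\underline{b}_t}(t)\le M_t-L_{\underline{c}_t}(t)$, which I will call $(\star)$ and feed into each case.

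For the first claim, suppose $R_{t+1}=\underline{b}_t$, i.e.\ $U_{\underline{b}_t}(t)-L_{\underline{b}_t}(t)\ge U_{\underline{c}_t}(t)-L_{\underline{c}_t}(t)$, and argue by contradiction assuming $U_{\underline{c}_t}(t)>U_{\underline{b}_t}(t)$. Then $M_t\le U_{\underline{c}_t}(t)$, so $(\star)$ yields $U_{\underline{c}_t}(t)-L_{\underline{b}_t}(t)\le U_{\underline{c}_t}(t)-L_{\underline{c}_t}(t)$, i.e.\ $L_{\underline{c}_t}(t)\le L_{\underline{b}_t}(t)$. Plugging this into the selection inequality gives $U_{\underline{b}_t}(t)\ge U_{\underline{c}_t}(t)+\bigl(L_{\underline{b}_t}(t)-L_{\underline{c}_t}(t)\bigr)\ge U_{\underline{c}_t}(t)$, contradicting the assumption; hence $U_{\underline{c}_t}(t)\le U_{\underline{b}_t}(t)$.

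For the second claim, suppose $R_{t+1}=\underline{c}_t$, i.e.\ $U_{\underline{c}_t}(t)-L_{\underline{c}_t}(t)\ge U_{\underline{b}_t}(t)-L_{\underline{b}_t}(t)$, and argue by contradiction assuming $L_{\underline{c}_t}(t)>L_{\underline{b}_t}(t)$. From $(\star)$ and this strict inequality we get $U_{\underline{c}_t}(t)<M_t$; combined with $M_t\le U_{\underline{b}_t}(t)\vee U_{\underline{c}_t}(t)$ this forces $U_{\underline{b}_t}(t)>U_{\underline{c}_t}(t)$ and hence $M_t=U_{\underline{b}_t}(t)$ (using also $M_t\ge U_{\underline{b}_t}(t)$). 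Then $(\star)$ reads $U_{\underline{c}_t}(t)+L_{\underline{c}_t}(t)\le U_{\underline{b}_t}(t)+L_{\underline{b}_t}(t)$; subtracting the selection inequality $U_{\underline{c}_t}(t)-L_{\underline{c}_t}(t)\ge U_{\underline{b}_t}(t)-L_{\underline{b}_t}(t)$ gives $2L_{\underline{c}_t}(t)\le 2L_{\underline{b}_t}(t)$, a contradiction; hence $L_{\underline{c}_t}(t)\le L_{\underline{b}_t}(t)$.

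The only mildly delicate point — and the one I would be careful to state cleanly — is the two-sided control $U_{\underline{b}_t}(t)\le M_t\le U_{\underline{b}_t}(t)\vee U_{\underline{c}_t}(t)$ on the competitor maximum $M_t$, since in each case the conclusion hinges on which of $U_{\underline{b}_t}(t)$, $U_{\underline{c}_t}(t)$ is the larger; once that is in hand, everything reduces to one-line manipulations of $(\star)$ and the selection inequality. This is exactly the argument of Lemma~4 of \cite{Gabillon:al12}, transported verbatim, the only change being that here $U$ and $L$ denote the propagated confidence bounds on the depth-one node values rather than bounds on arm means.
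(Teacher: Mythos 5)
Your proof is correct and rests on exactly the same ingredients as the paper's: the minimality of $B_{\underline{b}_t}(t)$ (your inequality $(\star)$ is just the negation-free form of the paper's contradiction of the argmin property), the width comparison encoded in the choice of $R_{t+1}$, and the maximality of $U_{\underline{c}_t}(t)$ over $\cC(s_0)\setminus\{\underline{b}_t\}$, which is what controls your $M_t$. The only cosmetic difference is that you write out both cases while the paper proves the $R_{t+1}=\underline{c}_t$ case and declares the other symmetric.
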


%\todo[inline]{NB: this is THE lemma that is specific to UGapE and does NOT apply for LUCB}

\begin{proof}
Assume $\underline{c}_t$ is selected (i.e. $R_{t+1}=\underline{c}_t$) and $\L_{\underline{c}_t}(t) > \L_{\underline{b}_t}(t)$. As the confidence interval on $V(\underline{c}_t)$ is larger than the confidence intervals on $V(\underline{b}_t)$ ($\underline{c}_t$ is selected), this also yields $\U_{\underline{c}_t}(t) > \U_{\underline{b}_t}(t)$. Hence 
\[B_{\underline{b}_t}(t) = \U_{\underline{c}_t}(t) - \L_{\underline{b}_t}(t) > \U_{\underline{b}_t}(t) - \L_{\underline{c}_t}(t).\]
Also, by definition of $\underline{c}_t$, $\U_{\underline{c}_t}(t) \geq \U_{b}(t)$. Hence 
\[B_{\underline{b}_t}(t) > \max_{b \neq \underline{c}_t} \U_b(t) - \L_{\underline{c}_t}(t)= B_{\underline{c}_t}(t),\]
which contradicts the definition of $\underline{b}_t$. Thus, we proved by contradiction that $\L_{\underline{c}_t}(t) \geq \L_{\underline{b}_t}(t)$. 

A similar reasoning can be used to prove that $R_{t+1} = \underline{b}_t \ \Rightarrow \ U_{\underline{c}_t}(t) \leq U_{\underline{b}_t}(t)$.

%\todo[inline]{Do it?}
\end{proof}

A simple consequence of Lemma~\ref{lem:IntermediateLemma} is the fact that, on $\cE_t \cap (\tau >t)$, 
\begin{equation}
(L_{t+1} = \ell) \ \Rightarrow \ \left(\sqrt{\frac{2\beta(N_\ell(t),\delta)}{N_\ell(t)}} > \epsilon\right).
\label{target:EpsilonPart} 
\end{equation}
Indeed, as the algorithm doesn't stop after $t$ rounds, it holds that $U_{\underline{c}_t}(t) - L_{\underline{b}_t}(t) >\epsilon$. If $\ell$ is the arm selected at round $t+1$, $\ell = \ell_{R_{t+1}}(t)$ and one can prove using Lemma~\ref{lem:IntermediateLemma} that $U_{{R}_{t+1}}(t) - L_{R_{t+1}}(t) >\epsilon$ (by distinguishing two cases). Finally, as $\cE_t$ holds, by Lemma~\ref{lem:Rep1}, $\cI_{R_{t+1}}(t) \subseteq \cI_{\ell}(t)$. Hence $U_{\ell}(t) - L_{\ell}(t) >\epsilon$, and \eqref{target:EpsilonPart} follows using the particular form of the confidence intervals.

To complete the proof, we now show that 
\begin{equation}
(L_{t+1} = \ell) \ \Rightarrow \ \left(\sqrt{\frac{8\beta(N_\ell(t),\delta)}{N_\ell(t)}} > \max(\Delta_*,V(s_0)-V(s_1)\right).
\label{target:GapPart} 
\end{equation}
by distinguishing several cases. 

\paragraph{Case 1: $s^* \notin \Anc (\ell)$ and $R_{t+1} = \underline{c}_t$.} Using that the algorithm doesn't stop yields 
\[\L_{\underline{c}_t}(t) - \U_{\underline{b}_t}(t) + 2 (\U_{\underline{c}_t} (t) - L_{\underline{c}_t} (t)) > \epsilon. \]
As $\cE_t$ holds, $L_{\underline{c}_t}(t) \leq V(\underline{c}_t) = V(s_1)$ and $U_{\underline{b}_t}(t) \geq V(\underline{b}_t)$. 

Therefore, if $\underline{b}_t = s^*$ it holds that 
\[2 (\U_{\underline{c}_t} (t) - \L_{\underline{c}_t} (t)) > V(s^*) - V(s_1) + \epsilon.\]
If $\underline{b}_t \neq s^*$, by definition of $\underline{c}_t$  one has
\[\U_{\underline{c}_t}(t) \geq \U_{s^*}(t) \geq V(s^*),\]
hence 
\begin{eqnarray*}
 \L_{\underline{c}_t}(t) + (\U_{\underline{c}_t}(t)-\L_{\underline{c}_t}(t)) & \geq & V(s^*) \\
 V(s_1) + (\U_{\underline{c}_t}(t)-\L_{\underline{c}_t}(t)) & \geq & V(s^*).
\end{eqnarray*}
Thus, recalling that $V(s_0)=V(s^*)$, whatever the value of $\underline{b}_t$, one obtains 
\[2(\U_{\underline{c}_t}(t)-\L_{\underline{c}_t}(t)) \geq V(s_0) - V(s_1).\]
From Lemma~\ref{lem:Rep1} the width of $\cI_{\underline{c}_t}(t)$ is upper bounded by the width of $\cI_{\ell}(t)$, hence 
\begin{equation}2\left(\U_\ell(t) - \L_\ell(t)\right) \geq V(s_0) - V(s_1).\label{Case1}\end{equation}

\paragraph{Case 2: $s^* \notin \Anc (\ell)$ and $R_{t+1} = \underline{b}_t$.} As $s^* \neq \underline{b}_t$, by definition of $\underline{c}_t$  one has
\[\U_{\underline{c}_t}(t) \geq \U_{s^*}(t) \geq V(s^*).\]
Hence, using Lemma~\ref{lem:IntermediateLemma}, 
\[\U_{\underline{b}_t}(t) - \L_{\underline{b}_t}(t) \geq \U_{\underline{c}_t}(t) - \L_{\underline{b}_t}(t) \geq V(s^*) - V(s_1),\]
as $\cE_t$ holds. Finally, by Lemma~\ref{lem:Rep1}, 
\begin{equation}\U_{\ell}(t) - \L_{\ell}(t) \geq V(s_0) - V(s_1).\label{Case2}\end{equation}

\paragraph{Case 3: $s^* \in \Anc (\ell)$ and $R_{t+1} = \underline{b}_t$.} One has $\underline{b}_t = s^*$. Using that the algorithm doesn't stop yields 
\[\L_{\underline{c}_t}(t) - \U_{\underline{b}_t}(t) + 2 (\U_{\underline{b}_t} (t) - \L_{\underline{b}_t} (t)) > \epsilon. \]
As $\cE_t$ holds, $\U_{\underline{b}_t}(t) \leq V(s^*)$ and $\L_{\underline{c}_t}(t) \leq V(\underline{c}_t) \leq V(s_2^*)$. Therefore, if $\underline{b}_t = s^*$ it holds that 
\[2 (U_{\underline{b}_t} (t) - L_{\underline{b}_t} (t)) > V(s^*) - V(s_2^*) + \epsilon.\]
and by Lemma~\ref{lem:Rep1}
\begin{equation}2\left(\U_\ell(t) - \L_\ell(t)\right) \geq V(s^*) - V(s_2^*).\label{Case3}\end{equation}

\paragraph{Case 4: $s^* \in \Anc (\ell)$ and $R_{t+1} = \underline{c}_t$.} One has $\underline{c}_t = s^*$. Using Lemma~\ref{lem:IntermediateLemma} yields 
\[\U_{\underline{c}_t}(t) - \L_{\underline{c}_t}(t) \geq \U_{\underline{c}_t}(t) - \L_{\underline{b}_t}(t) \geq V(s^*) - V(s_2^*),\]
as $\cE_t$ holds and $V(\underline{b}_t) \leq V(s_2^*)$. Finally, by Lemma~\ref{lem:Rep1}, 
\begin{equation}\U_{\ell}(t) - \L_{\ell}(t) \geq V(s^*) - V(s_2^*).\label{Case4}\end{equation}

Combining \eqref{Case1}-\eqref{Case4}, we see that in all four cases \[2(\U_{\ell}(t) - \L_{\ell}(t)) \geq \max(V(s^*) - V(s_2^*),V(s_0)-V(s_1)),\]
as for $s^* \notin \Anc(\ell)$, $V(s_0) - V(s_1) = V(s^*) - V(s_1) \geq V(s^*)-V(s_2^*)$,  and for $s^* \in \Anc(\ell)$, $V(s_0)-V(s_1)=0$. 
Using the expression of the confidence intervals and recalling that $\Delta_* = V(s^*) - V(s^*_2)$, one obtains
\[4\sqrt{\frac{\beta(N_\ell(t),\delta)}{2N_\ell(t)}} \geq \max(\Delta_*,V(s_0)-V(s_1))\]
which proves \eqref{target:GapPart}.

\subsection{Proof of Lemma~\ref{lem:CsqceBAILUCB}.}
Fix $\gamma \in [V(s_2^*),V(s^*)]$ and assume $\cE_t \cap (\tau > t)$ holds. We assume (by contradiction) that $\gamma$ doesn't belong to $\cI_{L_{t+1}}(t)$ nor to $\cI_{L_{t+2}}(t)$. There are four possibilities: 
\begin{itemize}
 \item $\L_{L_{t+1}}(t) > \gamma$ and $\L_{L_{t+2}}(t) > \gamma$. As $\cE_t$ holds and $L_{t+1}$ and $L_{t+2}$ are representative, it yields that there exists two nodes $s\in \cC(s_0)$ such that $V_s > \gamma$, which contradicts the definition of $\gamma$.
  \item $\U_{L_{t+1}}(t) < \gamma$ and $\U_{L_{t+2}}(t) < \gamma$. From the definition of $\underline{c}_{t+1}$, it yields that for all $s \in \cC(s_0)$, $\U_{s}(t) < \gamma$ and as $\cE_t$ holds one obtains $V_s < \gamma$ for all $s\in \cC(s_0)$, which contradicts the definition of $\gamma$. 
  \item $\L_{L_{t+1}}(t) > \gamma$ and $\gamma > \U_{L_{t+2}}(t)$. This implies that $\L_{L_{t+1}}(t) > \U_{L_{t+2}}(t)$ and that $\L_{\underline{b}_{t}}(t) > \U_{\underline{c}_t}(t)$ (by Lemma~\ref{lem:Rep1} and the fact that $L_{t+1}$ and $L_{t+2}$ are representative leaves). This yields $(\tau \leq t)$ and a contradiction.
    \item $\U_{L_{t+1}}(t) < \gamma$ and $\gamma < \L_{L_{t+2}}(t)$. This implies in particular that $\hat{\mu}_{L_{t+1}}(t) < \hat{\mu}_{L_{t+2}}(t)$. Thus $\hat{V}(\underline{b}_{t},t) < \hat{V}(\underline{c}_{t},t)$, which contradicts the definition of $\underline{b}_t$.
\end{itemize}
Hence, we just proved by contradiction that there exists $\ell \in \{L_{t+1},L_{t+2}\}$ such that $\gamma \in \cI_\ell(t)$. To prove Lemma~\ref{lem:CsqceBAILUCB}, it remains to establish the following three statements. 
\begin{enumerate}
 \item $\left(\gamma \in \cI_{L_{t+1}}(t)\right) \cap \left(\gamma \in \cI_{L_{t+2}}(t)\right) \ \Rightarrow \ \ \left(\exists \ell \in \{L_{t+1},L_{t+2}\} : \sqrt{\frac{2\beta(t,\delta)}{N_\ell(t)}} > \epsilon\right)$
  \item $\left(\gamma \in \cI_{L_{t+1}}(t)\right) \cap \left(\gamma \notin \cI_{L_{t+2}}(t)\right) \ \Rightarrow \ \ \left( \sqrt{\frac{2\beta(t,\delta)}{N_{L_{t+1}}(t)}} > \epsilon\right)$
  \item $\left(\gamma \notin \cI_{L_{t+1}}(t)\right) \cap \left(\gamma \in \cI_{L_{t+2}}(t)\right) \ \Rightarrow \ \ \left( \sqrt{\frac{2\beta(t,\delta)}{N_{L_{t+2}}(t)}} > \epsilon\right)$
\end{enumerate}

\textit{Statement 1}. As the algorithm doesn't stop, $\U_{\underline{c}_t}(t) - \L_{\underline{b}_t}(t)>\epsilon$. Hence 
\begin{eqnarray*}
 \U_{L_{t+2}}(t) - \L_{L_{t+1}}(t) &>& \epsilon \\
 \hat{\mu}_{L_{t+2}}(t) + \sqrt{\frac{\beta(N_{L_{t+2}}(t),\delta)}{2N_{L_{t+2}}(t)}} - \hat{\mu}_{L_{t+1}}(t) + \sqrt{\frac{\beta(N_{L_{t+1}}(t),\delta)}{2N_{L_{t+1}}(t)}} &>& \epsilon \\
  \sqrt{\frac{\beta(N_{L_{t+2}}(t),\delta)}{2N_{L_{t+2}}(t)}}  + \sqrt{\frac{\beta(N_{L_{t+1}}(t),\delta)}{2N_{L_{t+1}}(t)}} &>& \epsilon
\end{eqnarray*}
using that by definition of $\underline{b}_t$, $\hat{\mu}_{L_{t+2}}(t)<\hat{\mu}_{L_{t+1}}(t)$. Hence, there exists $\ell \in \{L_{t+1},L_{t+2}\}$ such that 
\begin{eqnarray*}
 \sqrt{\frac{\beta(N_{\ell}(t),\delta)}{2N_{\ell}(t)}}  > \frac{\epsilon}{2} \ \Rightarrow \  \sqrt{\frac{2\beta(t,\delta)}{N_{\ell}(t)}} >{\epsilon}.
\end{eqnarray*}

\textit{Statement 2}. We consider two cases and first assume that $\gamma \in \cI_{L_{t+1}}(t)$ and $\gamma \geq \U_{L_{t+2}}(t)$. Using the fact that the algorithm doesn't stop at round $t$, the following events hold
\begin{eqnarray*}
 && \left(\U_{L_{t+2}}(t) - \L_{L_{t+1}}(t) > \epsilon \right)\cap \left(\U_{L_{t+1}}(t) > \gamma\right) \cap  \left(\U_{L_{t+2}}(t) \leq \gamma\right) \\
 & \Rightarrow &   \left(\U_{L_{t+2}}(t) - \L_{L_{t+1}}(t) > \epsilon \right) \cap \left(\U_{L_{t+1}}(t) > \gamma\right) \cap \left(\U_{L_{t+2}}(t) - \L_{L_{t+1}}(t) + \L_{L_{t+1}}(t) \leq \gamma\right) \\
 & \Rightarrow &  \left(\U_{L_{t+1}}(t) > \gamma\right) \cap \left(\L_{L_{t+1}}(t) \leq \gamma - \epsilon\right) \\
  & \Rightarrow &  \left(\U_{L_{t+1}}(t)  - \L_{L_{t+1}}(t) > \epsilon\right) \\
  & \Rightarrow &  \left( \sqrt{\frac{2\beta(t,\delta)}{N_{L_{t+1}}(t)}} > \epsilon\right).
\end{eqnarray*}
The second case is  $\gamma \in \cI_{L_{t+1}}(t)$ and $\gamma \leq \L_{L_{t+2}}(t)$. Then the following holds
\begin{eqnarray*}
 && \left(\U_{L_{t+2}}(t) - \L_{L_{t+1}}(t) > \epsilon \right)\cap \left(\L_{L_{t+1}}(t) \leq \gamma\right) \cap  \left(\L_{L_{t+2}}(t) \geq \gamma\right) \\
 & \Rightarrow &   \left(\L_{L_{t+1}}(t) \leq \gamma\right) \cap \left(\U_{L_{t+2}}(t) + \L_{L_{t+2}}(t)  - \L_{L_{t+1}}(t) > \gamma + \epsilon \right)\\
  & \Rightarrow &   \left(\L_{L_{t+1}}(t) \leq \gamma\right) \cap \left(2\hat{\mu}_{L_{t+2}}(t) -\hat{\mu}_{L_{t+1}}(t) + \sqrt{\frac{2\beta(N_{L_{t+1}}(t),\delta)}{2N_{L_{t+1}}(t)}}  > \gamma + \epsilon \right)\\
    & \Rightarrow &   \left(\L_{L_{t+1}}(t) \leq \gamma\right) \cap \left(\hat{\mu}_{L_{t+1}}(t) + \sqrt{\frac{2\beta(N_{L_{t+1}}(t),\delta)}{2N_{L_{t+1}}(t)}}  > \gamma + \epsilon \right)\\
   & \Rightarrow &  \left(\U_{L_{t+1}}(t)  - \L_{L_{t+1}}(t) > \epsilon\right),
\end{eqnarray*}
where the third implication uses the fact that $\hat{\mu}_{L_{t+2}}(t) \leq \hat{\mu}_{L_{t+1}}(t)$.

\textit{Statement 3}. We consider two cases and first assume that $\gamma \in \cI_{L_{t+2}}(t)$ and $\gamma \leq \L_{L_{t+1}}(t)$. Using the fact that the algorithm doesn't stop at round $t$, the following events hold
\begin{eqnarray*}
 && \left(\U_{L_{t+2}}(t) - \L_{L_{t+1}}(t) > \epsilon \right)\cap \left(\L_{L_{t+1}}(t) \geq \gamma\right) \cap  \left(\L_{L_{t+2}}(t) \leq \gamma\right) \\
 & \Rightarrow &  \left(\U_{L_{t+2}}(t) > \gamma + \epsilon \right)\cap  \left(\L_{L_{t+2}}(t) \leq \gamma\right) \\
  & \Rightarrow &  \left(\U_{L_{t+2}}(t)  - \L_{L_{t+2}}(t) > \epsilon\right) \\
  & \Rightarrow &  \left( \sqrt{\frac{2\beta(t,\delta)}{N_{L_{t+2}}(t)}} > \epsilon\right).
\end{eqnarray*}
The second case is  $\gamma \in \cI_{L_{t+2}}(t)$ and $\gamma \geq \U_{L_{t+1}}(t)$. Then the following holds
\begin{eqnarray*}
 && \left(\U_{L_{t+2}}(t) - \L_{L_{t+1}}(t) > \epsilon \right)\cap \left(\U_{L_{t+2}}(t) \geq \gamma\right) \cap  \left(\gamma \geq \U_{L_{t+1}}(t)\right) \\
  & \Rightarrow &    \left(\U_{L_{t+2}}(t) - (\L_{L_{t+1}}(t) + \U_{L_{t+1}}(t)) > \epsilon - \gamma \right)\cap \left(\U_{L_{t+2}}(t) \geq \gamma\right) \\
   & \Rightarrow &    \left(\hat{\mu}_{L_{t+2}}(t)  + \sqrt{\frac{2\beta(N_{L_{t+2}}(t),\delta)}{2N_{L_{t+2}}(t)}}- 2 \hat{\mu}_{L_{t+1}}(t)  > \epsilon - \gamma \right)\cap \left(\U_{L_{t+2}}(t) \geq \gamma\right)\\ 
    & \Rightarrow &    \left( - \hat{\mu}_{L_{t+2}}(t)  + \sqrt{\frac{2\beta(N_{L_{t+2}}(t),\delta)}{2N_{L_{t+2}}(t)}}  > \epsilon - \gamma \right)\cap \left(\U_{L_{t+2}}(t) \geq \gamma\right)
   \\ 
       & \Rightarrow &    \left(\U_{L_{t+2}}(t) - \L_{L_{t+2}}(t) > \epsilon\right),
\end{eqnarray*}
where the third implication uses the fact that $\hat{\mu}_{L_{t+2}}(t) \leq \hat{\mu}_{L_{t+1}}(t)$.

\section{Proof of the lower bounds} \label{sec:proofLB}

\subsection{Proof of Theorem~\ref{thm:LBgene}}

Theorem~\ref{thm:LBgene} follows from considering the best possible change of distribution $\bm \lambda \in \mathrm{Alt}(\bm\mu)$. The expected log-likelihood ratio of the observations until $\tau$ under a  model parameterized by $\bm\mu$ and a model parameterized by $\bm\lambda$ is 
\[\bE_{\bm\mu}[L_\tau(\bm\mu,\bm\lambda)] = \sum_{\ell \in \cL} \bE_{\bm\mu}[N_\ell(\tau)] d(\mu_\ell,\lambda_\ell),\]
where $N_\ell(t)$ is the number of draws of the leaf $\ell$ until round $t$. Using Lemma 1 of \cite{JMLR15}, for any event $\cE$ in the filtration generated by $\tau$, 
\[\bE_{\bm\mu}[L_\tau(\bm\mu,\bm\lambda)] \geq d(\bP_{\bm\mu}(\cE), \bP_{\bm\lambda}(\cE)).\]
As the strategy is $\delta$-correct, letting $\cE = (\hat{s}_\tau = s^*(\bm\mu))$ one has $\bP_{\bm\mu}(\cE) \geq 1 - \delta$ and $\bP_{\bm\lambda}(\cE) \leq \delta$ (under this model, $s^*(\bm\mu)$ is not the best action at the root under the model parameterized by $\bm\lambda$). Using monotonicity properties of the Bernoulli KL-divergence, one obtains, for any $\bm\lambda \in \mathrm{Alt}(\bm\mu)$, 
\[\sum_{\ell \in \cL} \bE_{\bm\mu}[N_\ell(\tau)] d(\mu_\ell,\lambda_\ell) \geq d(1-\delta, \delta).\]
Then, one can write
\begin{eqnarray*}
 \inf_{\bm\lambda \in \mathrm{Alt}(\bm\mu)} \sum_{\ell \in \cL} \bE_{\bm\mu}[N_\ell(\tau)] d(\mu_\ell,\lambda_\ell) &\geq & d(1-\delta, \delta) \\
 \bE_{\bm\mu}[\tau] \inf_{\bm\lambda \in \mathrm{Alt}(\bm\mu)} \sum_{\ell \in \cL} \frac{\bE_{\bm\mu}[N_\ell(\tau)]}{\bE_{\bm\mu}[\tau]} d(\mu_\ell,\lambda_\ell) &\geq & d(1-\delta, \delta) \\
 \bE_{\bm\mu}[\tau] \left(\sup_{\bm w \in \Sigma_{|\cL|}} \inf_{\bm\lambda \in \mathrm{Alt}(\bm\mu)} \sum_{\ell \in \cL} \frac{\bE_{\bm\mu}[N_\ell(\tau)]}{\bE_{\bm\mu}[\tau]} d(\mu_\ell,\lambda_\ell)\right) & \geq &d(1-\delta, \delta),
\end{eqnarray*}
using that $\sum_{\ell \in \cL} \frac{\bE_{\bm\mu}[N_\ell(\tau)]}{\bE_{\bm\mu}[\tau]} = 1$. This concludes the proof.

One can also note that for an algorithm to match the lower bound, all the inequalities above should be equalities. In particular one would need $w^*_{\ell}(\bm\mu) \simeq \frac{\bE_{\bm\mu}[N_\ell(\tau)]}{\bE_{\bm\mu}[\tau]}$, where $w^*_{\ell}(\bm\mu)$ is a maximizer in the definition of $T^*(\bm\mu)^{-1}$ in \eqref{def:supremumT}.

\subsection{Proof of Lemma~\ref{thm:LBdepth2}} In the particular case of a depth-two tree with $K$ actions for player A and $M$ actions for player B, 
\[T^*(\bm\mu)^{-1} = \sup_{\bm w \in \Sigma_{K\times M}} \inf_{\bm\lambda \in \Alt(\bm\mu)}\sum_{k=1}^K \sum_{m=1}^K w_{k,m} d(\mu_{k,m},\lambda_{k,m}).\]
From the particular structure of $\bm\mu$, the best action at the root is action $i=1$. Hence
\[\Alt(\bm\mu) = \{\bm \lambda : \exists a \in \{1, \dots, M\}, \exists i \in \{2,\dots,K\} : \forall j \in \{1,\dots,M\}, \lambda_{1,a} < \lambda_{i,j}\}.\]
It follows that 
\begin{eqnarray}T^*(\bm\mu)^{-1} & = & \sup_{\bm w \in \Sigma_{K\times M}} \min_{\substack{a \in \{1,\dots,M\} \\ i \in \{2,\dots,K\}}}\inf_{\bm\lambda : \forall j, \lambda_{1,a} < \lambda_{i,j}}\sum_{k=1}^K \sum_{m=1}^M w_{k,m} d(\mu_{k,m},\lambda_{k,m}) \nonumber\\
 & = & \sup_{\bm w \in \Sigma_{K\times M}} \min_{\substack{a \in \{1,\dots,M\} \\ i \in \{2,\dots,K\}}}\underbrace{\inf_{\bm\lambda : \forall j, \lambda_{1,a} < \lambda_{i,j}}\left[w_{1,a}d(\mu_{1,a},\lambda_{1,a}) + \sum_{j=1}^M w_{i,j} d(\mu_{i,j},\lambda_{i,j})\right]}_{:=F_{a,i}(\bm\mu,\w)}.
\label{LB:ToBound}\end{eqnarray}
Indeed, in the rightmost constrained minimization problem, all the $\lambda_{k,m}$ on which no constraint lie can be set to $\mu_{k,m}$ to minimize the corresponding term in the sum. Using tools from constrained optimization, one can prove that 
\[F_{a,i}(\bm\mu,\bm w) = \inf_{(\lambda_{1,a},(\lambda_{i,j})_j) \in \cC} \left[w_{1,a}d(\mu_{1,a},\lambda_{1,a}) + \sum_{j=1}^M w_{i,j} d(\mu_{i,j},\lambda_{i,j})\right],\]
where $\cC$ is the set 
\[\cC = \{(\mu',\vmu') \in [0,1]^{M+1} :  \exists j_0 \in \{1,\dots,K\}, \exists c \in [\mu_{i,j_0},\mu_{1,a}] : \mu'=\vmu'_{1} = \dots = \vmu'_{j_0}=c \ \ \text{and} \ \ \vmu'_{j} = \mu_{i,j} \ \ \text{for} \ j> j_0\}.\]
Letting $H(\mu',\bm \mu',w,\w) = w_{1,a} d(\mu_{1,a},\mu') + \sum_{j=1}^M w_{i,j} d(\mu_{i,j},\vmu'_j)$ one can easily show that for all $(\mu',\bm \mu') \in \cC$, 
\[H(\mu',\vmu',w,\w) \leq H(\mu',\vmu',w,\tilde{\w}),\]
where $\tilde{\bm w}$ is constructed from $\bm w$ by putting all the weight on the smallest arm: 
\[\tilde{w}_{i,1} = \sum_{j\geq 1}w_{1,j} \ \ \text{and} \ \ \tilde{w}_{i,j}=0 \ \ \text{for} \ \  j\geq 2.\] 
This is because the largest $d(\mu_{i,j},c)$ is $d(\mu_{i,1},c)$ as $\mu_{i,1} \leq \mu_{i,j} \leq c$ for $j \leq j_0$. Hence taking the infimum, one obtains
\[F_{a,i}(\bm\mu,\bm w) \leq F_{a,i}(\bm\mu,\tilde{\w}).\]
Repeating this argument for all $i$, one can construct $\tilde{\bm w}$ such that  
\[\forall i \geq 2, \tilde{w}_{i,1} = \sum_{j \geq 1} w_{i,j} \ \ \text{and} \ \ \tilde{w}_{i,j}=0 \ \ \text{for} \ j\geq 2\]
and $F_{a,i}(\bm \mu,\bm w) \leq F_{a,i}(\bm\mu,\tilde{\bm w})$ for all $a,i$. Thus, the supremum in \ref{LB:ToBound} is necessarily attained for $\bm w$ in the set $\tilde{\Sigma}_{K\times M} =\{\bm w \in \Sigma_{K,M} : w_{i,j} = 0 \ \text{for} \ i\geq 2, j\geq 2\}$. It follows that 
\begin{eqnarray*}T^*(\bm\mu)^{-1} &=& \sup_{\bm w \in \tilde{\Sigma}_{M+K-1}} \min_{\substack{a \in \{1,\dots,M\} \\ i \in \{2,\dots,K\}}}\inf_{\bm\lambda : \forall j, \lambda_{1,a} < \lambda_{i,1}}\left[w_{1,a}d(\mu_{1,a},\lambda_{1,a}) + w_{i,1} d(\mu_{i,1},\lambda_{i,1})\right]\\
 & = & \sup_{\bm w \in \tilde{\Sigma}_{K+M-1}} \min_{\substack{a=1,\dots,M \\ i=2,\dots,K}} \left[w_{1,a} d \left(\mu_{1,a},\frac{w_{1,a}\mu_{1,a} + w_{i,1}\mu_{i,1}}{w_{1,a} + w_{i,1}} \right) + w_{i,1}d\left(\mu_{i,1},\frac{w_{1,a}\mu_{1,a} + w_{i,1}\mu_{i,1}}{w_{1,a} + w_{i,1}}\right)\right],
\end{eqnarray*}
which concludes the proof.

\section{Inverting Bounds} \label{proof:InvertingBounds}

Consider the exploration rate
\[
  \beta(s) ~=~ C + \frac{3}{2} \ln \left(1 + \ln s\right)
  \qquad
  \text{where}
  \qquad
  C ~=~ \ln \frac{|\mathcal L|}{\delta} + 3 \ln \ln \frac{|\mathcal L|}{\delta}
\]
where we assume $C \ge 1$, so that $\beta(1) = C \ge 1$. Now fix some $a \ge 1$, and let us define
\[
  S ~=~ \sup \left\{s \ge 1 \middle\mid a \beta(s) \ge s \right\}
  .
\]
The goal is to get a tight upper bound on $S$. We claim that
\begin{theorem}\label{thm:loglog}
\[
  a C + \frac{3}{2} a \ln (1+\ln (a C))
  ~\le~
  S
  ~\le~
  a C + 
  \frac{3}{2} a  \ln (1+\ln (a C))
  \underbrace{
    \frac{C (1+\ln (a C))}{C \left(1+\ln (a C)\right)-\frac{3}{2}}
    }_{\to 1}
  ,
\]
where the upper bound is only non-trivial if
$
C \left(1+\ln (a C)\right) > \frac{3}{2}
$, which, for example, is implied by $C > 1.23696$.
\end{theorem}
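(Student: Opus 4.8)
The plan is to analyze the equation $a\beta(s) = s$ directly. Since $\beta$ is increasing and concave-ish (it grows like $\ln\ln s$), the set $\{s \ge 1 : a\beta(s) \ge s\}$ is an interval $[1, S]$ and $S$ is the unique solution of $a\beta(S) = S$ (for $a$ large enough that $a\beta(1) = aC \ge 1$, which holds). So I would work with the exact identity
\[
  S ~=~ aC + \frac{3}{2}a\ln(1+\ln S).
\]
The whole game is to control $\ln(1+\ln S)$ from above and below in terms of $\ln(1+\ln(aC))$, since $aC$ is the dominant term in $S$.

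\textbf{Lower bound.} For the lower bound I would simply observe $S \ge a\beta(aC)$ is false in general (one cannot just plug in a guess), so instead note that $S \ge aC$ trivially since $a\beta(S) \ge aC$ and $\beta \ge C$; actually more carefully, $S = aC + \frac32 a\ln(1+\ln S) \ge aC$ because the log term is nonnegative (using $S \ge 1$). Then feed this back in: $S = aC + \frac32 a \ln(1+\ln S) \ge aC + \frac32 a\ln(1+\ln(aC))$, which is exactly the claimed lower bound. This is the easy direction.

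\textbf{Upper bound.} For the upper bound I would write $S = aC + \frac32 a\ln(1+\ln S)$ and bound $\ln(1+\ln S)$ in terms of $\ln(1+\ln(aC))$. Let $L = \ln(1+\ln(aC))$. From the lower bound we know $S \ge aC(1 + \tfrac{3}{2C}L)$, and generally $S$ is not much larger than $aC$, so $\ln S = \ln(aC) + \ln(S/(aC))$ with $S/(aC)$ close to $1$. The cleanest route: since $S \le aC + \frac32 a \ln(1 + \ln S)$ and we expect $S = \Theta(aC)$, use the elementary inequality that for the solution one has $\ln(1+\ln S) \le \ln(1+\ln(aC)) \cdot \frac{\text{something}}{\text{something}}$. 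Concretely, I would try to show $1 + \ln S \le (1+\ln(aC))^{\rho}$ for $\rho = \frac{C(1+\ln(aC))}{C(1+\ln(aC)) - 3/2}$, i.e. $\ln(1+\ln S) \le \rho\, L$; plugging this into $S = aC + \frac32 a \ln(1+\ln S) \le aC + \frac32 a \rho L$ gives exactly the stated bound. To verify the claimed $\rho$: substitute $S \le aC + \frac32 a\rho L$ into $\ln(1+\ln S)$, bound $\ln S \le \ln(aC) + \ln(1 + \frac{3\rho L}{2C})$, use $\ln(1+x) \le x$, so $1 + \ln S \le 1 + \ln(aC) + \frac{3\rho L}{2C} = L' + \frac{3\rho L}{2C}$ writing $L' = 1+\ln(aC)$ (note $L = \ln L'$); then one needs $\ln(L' + \frac{3\rho L}{2C}) \le \rho \ln L'$, and choosing $\rho$ so that $L' + \frac{3\rho L}{2C} \le (L')^\rho$ closes the loop — solving this (using $(L')^\rho \ge L' \cdot (L')^{\rho-1} \ge L'(1 + (\rho-1)\ln L') = L' + (\rho-1)L' L$) reduces to $\frac{3\rho}{2C} \le (\rho - 1)L'$, i.e. $\rho(L' - \frac{3}{2C}) \ge L'$... wait, that gives $\rho \ge \frac{L'}{L' - 3/(2C)} = \frac{C L'}{CL' - 3/2}$, which is precisely the quantity in the theorem with $L' = 1 + \ln(aC)$. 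The non-triviality condition $CL' > 3/2$ is exactly what makes $\rho$ finite and positive.

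\textbf{Main obstacle.} The delicate part is making the bootstrap rigorous: one needs an a priori bound $S = O(aC)$ before the refined estimate kicks in, and one must check the monotone-iteration argument actually converges to the claimed fixed point rather than overshooting — i.e., verifying that the map $s \mapsto aC + \frac32 a\ln(1+\ln s)$ is a contraction on the relevant interval, or equivalently that the candidate upper bound $\bar S = aC + \frac32 a\rho L$ satisfies $a\beta(\bar S) \le \bar S$, which is exactly the inequality chain above and must be checked with the inequalities $\ln(1+x)\le x$ and $(L')^{\rho-1} \ge 1 + (\rho-1)\ln L'$ tracked carefully. Everything else is routine algebra; the auxiliary numerical claim ($\frac32 \cdot \frac{C(1+\ln aC)}{C(1+\ln aC)-3/2} \le 1.7995\ldots$ for $C \ge -\ln 0.1$, $a \ge 8$) is then immediate by plugging the worst case $C = \ln 10$, $a = 8$ into the monotone-in-$(a,C)$ expression.
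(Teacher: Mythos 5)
Your proposal is correct and follows essentially the same route as the paper: both bounds are established by plugging the two candidate values into the defining inequality $a\beta(s)\ge s$ and checking it holds (resp.\ fails), with the lower bound reducing to $aC\ge 1$ and the upper bound verified for exactly the $\rho=\frac{C(1+\ln(aC))}{C(1+\ln(aC))-3/2}$ of the statement. Your derivation of the upper bound via $(L')^{\rho}\ge L'\bigl(1+(\rho-1)\ln L'\bigr)$ is just the paper's two applications of $\ln(1+x)\le x$ in the equivalent form $e^{x}\ge 1+x$, and your closing computation confirming $\frac{3\rho}{2C}=(\rho-1)L'$ holds with equality matches the paper's choice of constant.
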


\begin{proof}
  The requirement $a \beta(s) \ge s$ is monotone in $s$, in that it holds for small $s$ (including  $s=1$) and fails for large $s$. So to show the theorem it suffices to show that $a \beta(s) \ge s$ holds at $s$ given by the left-hand-side, while it fails for $s$ equal to the right-hand side. 

  First, we need to establish that $a \beta(s) \ge s$ for $s$ equal to the left-hand side expression of the theorem. That is, we need to show
\[
  a \left(C + \frac{3}{2} \ln \left(1 + \ln \left(a \left( C+\frac{3}{2} \ln (1+\ln (a C))\right)\right)\right)\right)
  ~\ge~
  a \left( C+\frac{3}{2} \ln (1+\ln (a C))\right)
\]
which is equivalent (the simplification is entirely mechanical) to
\[
  a C
  ~\ge~
  1
\]
which holds by assumption. Then we need to plug in the right hand side. Here we need to show that
\begin{multline*}
  a \left(C + \frac{3}{2} \ln \left(1 + \ln \left(a C + 
        \frac{3}{2} a  \ln (1+\ln (a C)) \frac{C (1+\ln (a C))}{C \left(1+\ln (a C)\right)-\frac{3}{2}}\right)\right)
  \right)
  \\
  ~\le~
  a C + 
  \frac{3}{2} a  \ln (1+\ln (a C)) \frac{C (1+\ln (a C))}{C \left(1+\ln (a C)\right)-\frac{3}{2}},
\end{multline*}
that is
\[
  \ln \left(1 + \ln \left(a C + 
  \frac{3}{2} a  \ln (1+\ln (a C)) \frac{C (1+\ln (a C))}{C \left(1+\ln (a C)\right)-\frac{3}{2}}\right)\right)
  ~\le~
  \ln (1+\ln (a C)) \frac{C (1+\ln (a C))}{C \left(1+\ln (a C)\right)-\frac{3}{2}}
  .
\]
To show this, we use $\ln(1+x) \le x$ twice to show
\begin{align*}
  &     \ln \left(1 + \ln \left(a C + 
  \frac{3}{2} a  \ln (1+\ln (a C)) \frac{C (1+\ln (a C))}{C \left(1+\ln (a C)\right)-\frac{3}{2}}\right)\right)
  \\
  &~=~
  \ln \left(1 + \ln (a C) + \ln \left(1 + 
  \frac{3}{2}  \ln (1+\ln (a C)) \frac{(1+\ln (a C))}{C \left(1+\ln (a C)\right)-\frac{3}{2}}\right)\right)    
  \\
  &~\le~
  \ln \left(1 + \ln (a C) + \frac{3}{2}  \ln (1+\ln (a C)) \frac{(1+\ln (a C))}{C \left(1+\ln (a C)\right)-\frac{3}{2}}\right)        
  \\
  &~=~
    \ln \left(1 + \ln (a C)\right)
    +
  \ln \left(1 + \frac{\frac{3}{2}  \ln (1+\ln (a C))}{C \left(1+\ln (a C)\right)-\frac{3}{2}}\right)            
  \\
  &~\le~
    \ln \left(1 + \ln (a C)\right)
    +
    \frac{\frac{3}{2}  \ln (1+\ln (a C))}{C \left(1+\ln (a C)\right)-\frac{3}{2}}
  \\
  &~=~
    \frac{C \left(1+\ln (a C)\right)}{C \left(1+\ln (a C)\right)-\frac{3}{2}} \ln \left(1 + \ln (a C)\right)
\end{align*}
as desired.
\end{proof}

\end{document}